\theoremstyle{plain}
\newtheorem{theorem}{Theorem}[section]
\newtheorem{proposition}[theorem]{Proposition}
\newtheorem{corollary}[theorem]{Corollary}
\theoremstyle{definition}
\newtheorem{definition}[theorem]{Definition}
\newtheorem{assumption}[theorem]{Assumption}
\theoremstyle{remark}
\newcommand{\norm}[1]{\left\lVert#1\right\rVert}
\DeclareMathOperator*{\argmin}{arg\,min}
\DeclarePairedDelimiterX{\inp}[2]{\langle}{\rangle}{#1, #2}
\newcommand\restr[2]{{
  \left.\kern-\nulldelimiterspace 
  #1 
  \vphantom{\big|} 
  \right|_{#2} 
  }}
\newcommand{\dist}[0]{\mathrm{dist}} 
\newcommand{\dif}[0]{\mathrm{d}} 
\begin{document}

\icmltitlerunning{A Likely Geometry of Generative Models}

\twocolumn[
  \icmltitle{A Likely Geometry of Generative Models}



  \icmlsetsymbol{equal}{*}

  \begin{icmlauthorlist}
    \icmlauthor{Frederik Möbius Rygaard}{dtu}
    \icmlauthor{Shen Zhu}{uva1}
    \icmlauthor{Yinzhu Jin}{uva1}
    \icmlauthor{Søren Hauberg}{dtu}
    \icmlauthor{Tom Fletcher}{uva2}
  \end{icmlauthorlist}

  \icmlaffiliation{dtu}{DTU Compute, Technical University of Denmark, Kongens Lyngby, Denmark}
  \icmlaffiliation{uva1}{Department of Computer Science, University of Virginia, Charlottesville (VA), USA}
  \icmlaffiliation{uva2}{Department of Electrical and Computer Engineering, University of Virginia, Charlottesville (VA), USA}

  \icmlcorrespondingauthor{Frederik Möbius Rygaard}{fmry@dtu.dk}

  \icmlkeywords{Generative models, Riemannian manifolds, Fréchet mean, Geodesics, Optimization}

  \vskip 0.3in
]



\printAffiliationsAndNotice{}  

%
%
%
\begin{abstract}
  The geometry of generative models serves as the basis for interpolation, model inspection, and more. Unfortunately, most generative models lack a principal notion of geometry without restrictive assumptions on either the model or the data dimension. In this paper, we construct a general geometry compatible with different metrics and probability distributions to analyze generative models that do not require additional training. We consider curves analogous to geodesics constrained to a suitable data distribution aimed at targeting high-density regions learned by generative models. We formulate this as a (pseudo)-metric and prove that this corresponds to a Newtonian system on a Riemannian manifold. We show that shortest paths in our framework can be characterized by a system of ordinary differential equations, which locally corresponds to geodesics under a suitable Riemannian metric. Numerically, we derive a novel algorithm to efficiently compute shortest paths and generalized Fr\'echet means. Quantitatively, we show that curves using our metric traverse regions of higher density than baselines across a range of models and datasets.
\end{abstract}

\section{Introduction}
Generative models approximate data distributions, but they rarely specify how to interpolate between generated samples. Specifying such interpolation determines the \emph{model's geometry}. In practice, this geometry is specified on a \emph{per-model} basis \citep{tosi2014metricsprobabilisticgeometries, arvanitidis2018latentspaceodditycurvature, shao2017riemanniangeometrydeepgenerative, karczewski2025spacetimediffusionmodelsinformation} on in an \emph{ad hoc} manner \citep{song2020improvedtechniquestrainingscorebased, song2022denoisingdiffusionimplicitmodels, zheng2024noisediffusioncorrectingnoiseimage}.
Neither option is ideal.

%
\begin{figure}[t]
    \centering
    \includegraphics[
      width=1.0\columnwidth,
      height=0.2\textheight, 
      keepaspectratio
    ]{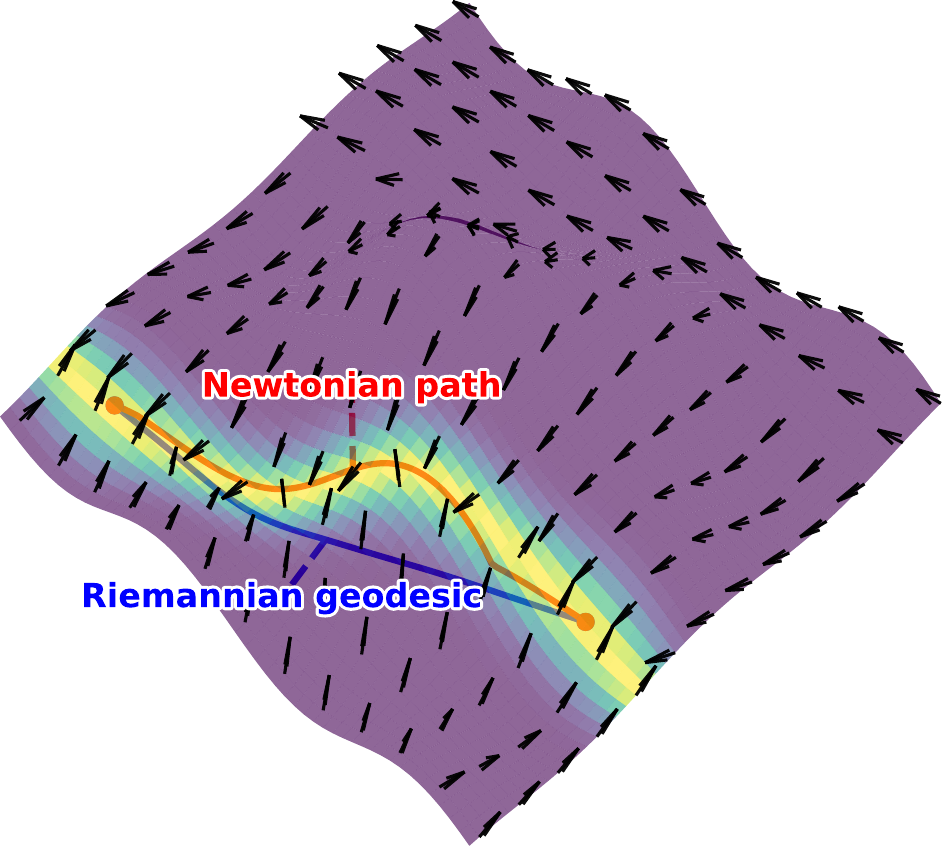}
    \caption{A conceptual illustration of our method, where we consider the geometry corresponding to a Newtonian system on a Riemannian manifold. The gradient vector field of the density ``pushes'' the Riemannian geodesic to areas of high likelihood.}
    \label{fig:path_illustration}
\end{figure}

\textbf{This paper} targets a geometry that applies across models with interpolations that favor regions of high likelihood.
We consider interpolating curves analogous to computing geodesics restricted to the main support of any given (learned) data distribution, providing a trade-off between the smoothness of geodesics and high-density regions. We formulate this as a (pseudo)-metric and define first the notion of shortest curves and later statistics such as means and principal components. We show that the (pseudo)-metric is equivalent to a Newtonian system on a Riemannian manifold as illustrated in Fig.~\ref{fig:path_illustration} and characterize the shortest curves by a system of ordinary differential equations (\textsc{ode}s). We show that along the shortest curve, our approach can be seen as a geodesic under a regularized Riemannian metric. We introduce a novel algorithm for computing these curves and generalized Fr\'echet means. We prove that our algorithm converges to a local minimum and has local quadratic convergence, enabling fast computation of statistics for generative models. Empirically, we demonstrate that our approach obtains curves with a higher likelihood compared to baselines on various datasets and generative models.

\section{Background and related work}
\textbf{Riemannian manifolds} provide an operational framework suitable for analyzing data and generative models \citep{hauberg2019bayeslearnmanifoldon} and appear naturally in latent space models like the variational-autoencoder (\textsc{vae}) \citep{shao2017riemanniangeometrydeepgenerative, arvanitidis2018latentspaceodditycurvature}. Formally, a Riemannian manifold is a differentiable manifold $\mathcal{M}$, equipped with a Riemannian metric, in the sense that it defines a smoothly varying inner product $g: T_{x}\mathcal{M} \times T_{x}\mathcal{M} \rightarrow \mathbb{R}$. The inner product is a quadratic form $ \langle v,w\rangle_{g} = v^{\top}G(x)w$, where $v,w \in T_{x}\mathcal{M}$ denote elements in the tangent space of $\mathcal{M}$ in $x \in \mathcal{M}$, and $G$ is the local representation of the metric \citep[Page 38]{do1992riemannian}. The tangent space at $x \in \mathcal{M}$ is a vector space that consists of the tangents to all curves at $x$. The Riemannian metric gives rise to the notion of curves that locally minimize the Riemannian distance:
\begin{equation*}
    \dist(a,b) := \min_{\gamma} \int_{0}^{1} \sqrt{\dot{\gamma}(t)^{\top}G(\gamma(t))\dot{\gamma}(t)} \, \dif t,
\end{equation*}
with $\gamma(0)=a \in \mathcal{M}$ and $\gamma(1)=b \in \mathcal{M}$. Curves that locally minimize the Riemannian distance are called \emph{geodesics} and can be computed by minimizing the \emph{energy functional} over a suitable set of candidate curves $\gamma: [0,1] \rightarrow \mathcal{M}$ \citep[Page 194]{do1992riemannian}
\begin{equation} \label{eq:energy}
    \mathcal{E}(\gamma) = \frac{1}{2}\int_{0}^{1}\dot{\gamma}(t)^{\top}G\left(\gamma(t)\right)\dot{\gamma}(t)\,\dif t,
\end{equation}
Equivalently, geodesics can be found by solving the Euler-Lagrange equations \citep[Lemma 2.3]{do1992riemannian}
\begin{equation} \label{eq:ode_riemann}
    \ddot{\gamma}^{k}(t) + \Gamma_{ij}^{k}\left(\gamma(t)\right)\dot{\gamma}^{i}(t)\dot{\gamma}^{j}(t) = 0,
\end{equation}
where $\Gamma_{ij}^{k}$ denotes the Christoffel symbols. In general, we will assume that the manifolds studied in this paper are \emph{geodesically complete} in the sense that between any two points $a,b \in \mathcal{M}$, there exists at least one length-minimizing geodesic connecting the boundary points. 
%
%

\textbf{Geometry in generative models} is typically based on strong assumptions about the data dimension and the generative model. \citet{shao2017riemanniangeometrydeepgenerative, arvanitidis2018latentspaceodditycurvature, wang2021geometrydeepgenerativeimage, bjerregaard2025riemannian} propose using the learned metric in the latent space for the \textsc{vae} \citep{kingma2022autoencodingvariationalbayes} and the generative-adversarial network (\textsc{gan}) \citep{goodfellow2014generativeadversarialnetworks} to compute geodesics for interpolation and extraction of model information. 
%
%
Although these approaches provide insight into latent space models, they assume a latent space equipped with a Riemannian metric, and it has been observed that for the geodesic to have a high likelihood with respect to the data distribution, it requires incorporating data uncertainty directly into the generative model \citep{hauberg2019bayeslearnmanifoldon, tosi2014metricsprobabilisticgeometries, rvae, pmlr-v151-arvanitidis22b}. 

Diffusion models do not immediately learn a latent representation of the data, and the above constructions do not apply. Rather than applying the ambient Euclidean geometry, diffusion models usually compute interpolation using spherical and linear interpolation in the noise distribution that arises as the limit of the forward dynamics \citep{song2021scorebasedgenerativemodelingstochastic, song2020improvedtechniquestrainingscorebased, du2020implicitgenerationgeneralizationenergybased}, where 
\citet{zheng2024noisediffusioncorrectingnoiseimage, aid, yang2024impus, guo2024smooth} exploits the specific structure of diffusion models to generate highly realistic transitions between samples. These interpolants are, unfortunately, inapplicable to Riemannian diffusion models due to their ad hoc nature \citep{debortoli2022riemannianscorebasedgenerativemodelling, huang2022riemanniandiffusionmodels, jo2024generativemodelingmanifoldsmixture}. Alternatively, \citet{wang2023interpolatingimagesdiffusionmodels, yang2025versatile} train a diffusion model to generate realistic transitions between data, but the approach requires additional training and does not directly relate to any underlying generative model.

Other works equip diffusion models with a Riemannian structure \citep{Yu2025ProbabilityDG, karczewski2025spacetimediffusionmodelsinformation, saito2025tangentialmanifolddiscoveringriemannian}, where \citet{karczewski2025spacetimediffusionmodelsinformation} considers an information geometric approach inspired by the Fisher-Rao metric. \citet{lebanon_learning_metric, Hartmann2022LagrangianMM, Yu2025ProbabilityDG} construct a generic Riemannian metric for generative models based on a probability density function. Rather than incorporating the probabilistic structure into the Riemannian metric, we opt to consider the metric structure of the data used in training and penalize with a scalar field. Not only do we find that this gives better results compared to baselines, but it also reduces computational costs.

In general, the non-Riemannian methods do not extend beyond interpolation to support statistical measures, such as the means or principal components. This limits their use in post hoc data analysis, which is often the long-term goal of geometric constructions.

\section{Our likely geometry}
\begin{figure*}[t]
    \centering
    \includegraphics[
      width=1.0\textwidth,
      height=0.4\textheight, 
      keepaspectratio
    ]{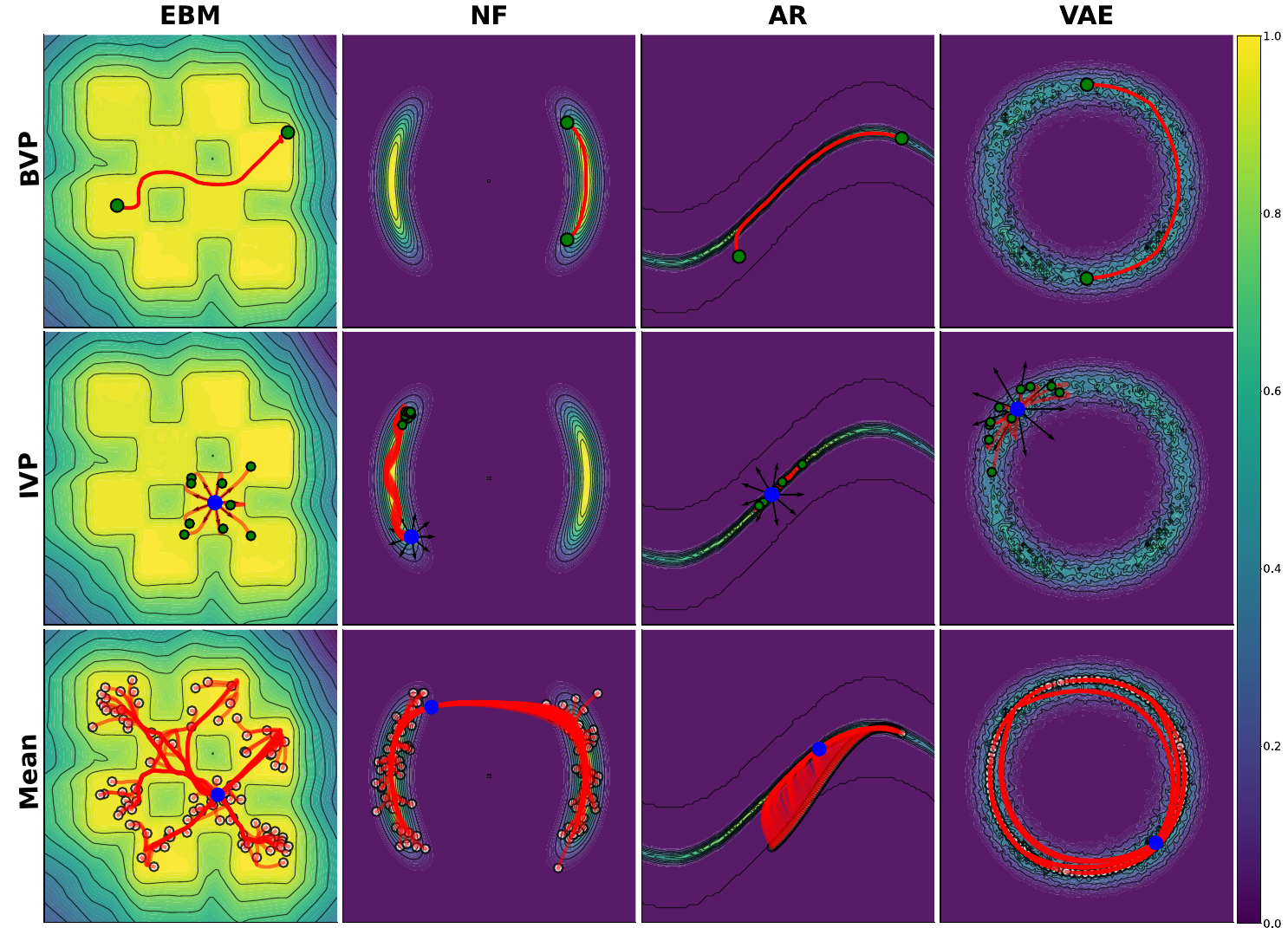}
    \caption{Examples of the three fundamental concepts derived from the (pseudo)-metric in Eq.~\ref{eq:metric_definition} with $\lambda=20.0$: boundary-value curves (row 1), initial value curves (row 2), and a generalized version of the Fr\'echet mean (row 3). The concepts are applied to an energy-based model (\textsc{ebm}) \citep{lecun2006tutorial} for a checkerboard, normalizing flow (\textsc{nf}) \citep{normalizing_flows} for the two-moon distributions, autoregressive model (\textsc{ar}) \citep{autoregressive_model} for a sinus curve, and a \textsc{vae} \citep{kingma2022autoencodingvariationalbayes} for circle data. Red denotes the connecting curves between the green boundary points. For the generalized Fr\'echet mean, blue denotes the mean, while the white points are $100$ sampled data points. The black arrows denote the initial directions of the curves. The exact choice of regularization function for each model is described in Section~\ref{sec:experiments}.}
    \label{fig:multirow_plot_2d_gen}
\end{figure*}

Generative models often reside in a Riemannian manifold (see e.g., \citet{debortoli2022riemannianscorebasedgenerativemodelling, s-vae18, bjerregaard2025riemannian, rozo2025riemann}). 
We aim to construct a generic geometric framework that favors high-density regions learned by any generative model, given the metric structure of the data. More formally, we will consider a Riemannian manifold $\mathcal{M}$, in which the generative process resides, and consider any function $S: \mathcal{M} \rightarrow \mathbb{R}$ that is bounded from below. For example, $S$ can be the negative log-likelihood learned by the generative model. For a Riemannian manifold, geodesics can be found by minimizing the energy functional \eqref{eq:energy}, but this disregards the learned data distribution.

To ensure that the interpolating curves are attracted to the data distribution, we propose the following constrained minimization problem
\begin{equation} \label{eq:constrained_geodesic}
    \begin{split}
        \min_{\gamma} \quad &\frac{1}{2}\int_{0}^{1}\dot{\gamma}(t)^{\top}G\left(\gamma(t)\right)\dot{\gamma}(t)\,\dif t, \\
        \text{s.t.} \quad &\int_{0}^{1}S\left(\gamma(t)\right)\,\dif t \leq \bar{S},
    \end{split}
\end{equation}
where $\bar{S} \in \mathbb{R}$ is a suitable bound and $\gamma(0)=a \in \mathcal{M}$ and $\gamma(1)=b \in \mathcal{M}$. Minimizing this constrained optimization problem can be computationally expensive depending on $S$. We therefore consider the following minimization problem of the regularized energy functional, which corresponds to a Lagrange relaxation of the constraint in Eq.~\ref{eq:constrained_geodesic}.
\begin{align}
\begin{split}\label{eq:geodesic_lagrange_simplify}
    \gamma^* &= \min_{\gamma} \mathcal{E}_{\lambda}(a,b) \quad \text{   where} \\
    \mathcal{E}_{\lambda} &= \frac{1}{2}\int_{0}^{1}\!\!\dot{\gamma}(t)^{\top}G\left(\gamma(t)\right)\dot{\gamma}(t)\,\dif t + \lambda \int_{0}^{1}\!\! S\left(\gamma(t)\right) \dif t, 
\end{split}
\end{align}
where $\lambda > 0$ is a dual variable. If $u^{\top}G(x)u$ and $S(x)$ are convex in $(x,u)$, then there exists a $\lambda$ such that the solution to this minimization problem \eqref{eq:geodesic_lagrange_simplify} is a solution to the minimization problem in Eq.~\ref{eq:constrained_geodesic}.
If this is not the case, there could be duality gaps between Eqs.~\ref{eq:geodesic_lagrange_simplify} and \ref{eq:constrained_geodesic}, i.e., the solution to Eq.~\ref{eq:geodesic_lagrange_simplify} is an optimal solution, but there is not necessarily a $\lambda$ for all values of $\bar{S}$ in Eq.~\ref{eq:constrained_geodesic}. For the rest of the paper, we will consider the minimization problem in Eq.~\ref{eq:geodesic_lagrange_simplify} to reduce the computational complexity and implicitly assume that a critical point is a minimum point.

\paragraph{Metric.}
The energy minimization \eqref{eq:geodesic_lagrange_simplify} provides a direct trade-off between smooth curves with respect to the geometry and targeting high-likelihood areas defined by $S$, which is directly controlled by $\lambda$. Inspired by Eq.~\ref{eq:geodesic_lagrange_simplify}, we define the following (pseudo)-metric structure of the Riemannian manifold $\mathcal{M}$ and the tangent space $T_{z}\mathcal{M}$ for $z \in \mathcal{M}$.
\begin{definition}[Metric structure] \label{def:metric}
    Let $\left(\mathcal{M},g\right)$ be a Riemannian manifold and let $S: \mathcal{M} \rightarrow \mathbb{R}$ be bounded from below. We define the following (pseudo)-inner product in the tangent space $T_{z}\mathcal{M}$ for $z \in \mathcal{M}$
    \begin{equation} \label{eq:ip_definition}
        F_{z,\lambda}(v,w) = v^{\top}G(z)w+\lambda S(z),
    \end{equation}
    Using Eq.~\ref{eq:ip_definition}, we define the following (pseudo)-metric on $\mathcal{M}$
    \begin{equation} \label{eq:metric_definition}
        \mathrm{dist}^{2}_{\lambda}(a,b) = \min_{\gamma} \, \mathcal{E}_{\lambda}(a,b)
    \end{equation}
    where $a,b \in \mathcal{M}$ with $\gamma(0)=a$ and $\gamma(1)=b$ and $\mathcal{E}_{\lambda}(a,b)$ is the regularized energy in Eq.~\ref{eq:geodesic_lagrange_simplify}.
\end{definition}
Since we have assumed that $\mathcal{M}$ is geodesically complete for the Riemannian background metric in which the data reside and that $S$ is bounded from below, there exists a solution to Eq.~\ref{eq:metric_definition}. Note that the scale between $S$ and $G$ might be very different, and therefore we will in practice normalize $\lambda$ for all computations (see Appendix~\ref{ap:exp_details}). 
Note also that the phrasing, \emph{inner-product}, in Definition~\ref{def:metric} is not necessarily correct. It is easily seen that $F$ does not generally satisfy positive-definiteness, i.e., $F_{z}^{\lambda}(v,v) > 0$ for $v \neq 0$. In this way, we can consider $S$ as a shift of the Riemannian inner product. Similarly, the (pseudo)-metric in Eq.~\ref{eq:metric_definition} is not a metric, as it can be negative depending on $S$. To simplify the wording, we will denote Eq.~\ref{eq:ip_definition} and Eq.~\ref{eq:metric_definition} as an inner product and metric, respectively. However, we show that we have a local representation in terms of a Riemannian metric along the optimal curve. 
\begin{proposition}[Local Metric] \label{prop:local_metric}
    Let $\tilde{S}$ denote the lower bound of $S$. Assume that $\lambda$ is sufficiently large so that $S(\cdot)$ is close to $\tilde{S}$ in Eq.~\ref{eq:constrained_geodesic}. Let $\gamma^{*}$ denote the optimal solution to Eq.~\ref{eq:metric_definition}. Then the regularized energy can locally along the optimal curve $\gamma^{*}$ be estimated as
    \begin{equation*}
        \begin{split}
            \mathcal{E}(\gamma) &\approx \tilde{S} + \int_{0}^{1}\left(\dot{\gamma}^{*}\right)^{\top}(t)U\left(\gamma^{*}(t)\right)\dot{\gamma}^{*}(t)\,\dif t,
        \end{split}
    \end{equation*}
    where $U\left(\gamma^{*}(t)\right) = G\left(\gamma^{*}(t)\right)+\frac{\lambda}{2}\partial^{2}_{zz}S(\gamma^{*}(t))$ and $\partial^{2}_{zz}S$ denotes the Hessian of $S$. Thus, if $U\left(\gamma^{*}(t)\right)$ is positive definite, we can interpret it as a local representation of a Riemannian metric along the optimal curve.
\end{proposition}
\begin{proof}
    See Appendix~\ref{ap:local_metric}.
\end{proof}
In case $S = -\log p$, we see that in Proposition~\ref{prop:local_metric} we get the Hessian of the log-likelihood similar to the Fisher-Rao metric \citep{Nielsen_2020}. Thus, we can interpret our method as a trade-off between the naturally length-minimizing curves for the given metric structure of the space for $\lambda = 0$ and the curves strongly restricted to the data distribution as $\lambda$ increases. The exact choice of $\lambda$ is therefore a preference for the user that balances the smoothness and likelihood of the curves depending on the use-case, as illustrated in Fig.~\ref{fig:gmm_kde_examples} for simple synthetic data with a Gaussian mixture model and kernel density estimator. Throughout the paper, we will consider $\lambda=20.0$, which we find to nicely balance high-likelihood and smoothness.

In the following, we will derive and define properties of the metric in Eq.~\ref{eq:metric_definition}, as well as computational methods for fast and accurate computations for statistical analysis of the generative model illustrated in Fig.~\ref{fig:multirow_plot_2d_gen} applied to different generative models.
\begin{figure}[h!]
    \vspace{-0.5em}
    \centering
    \includegraphics[width=\columnwidth]{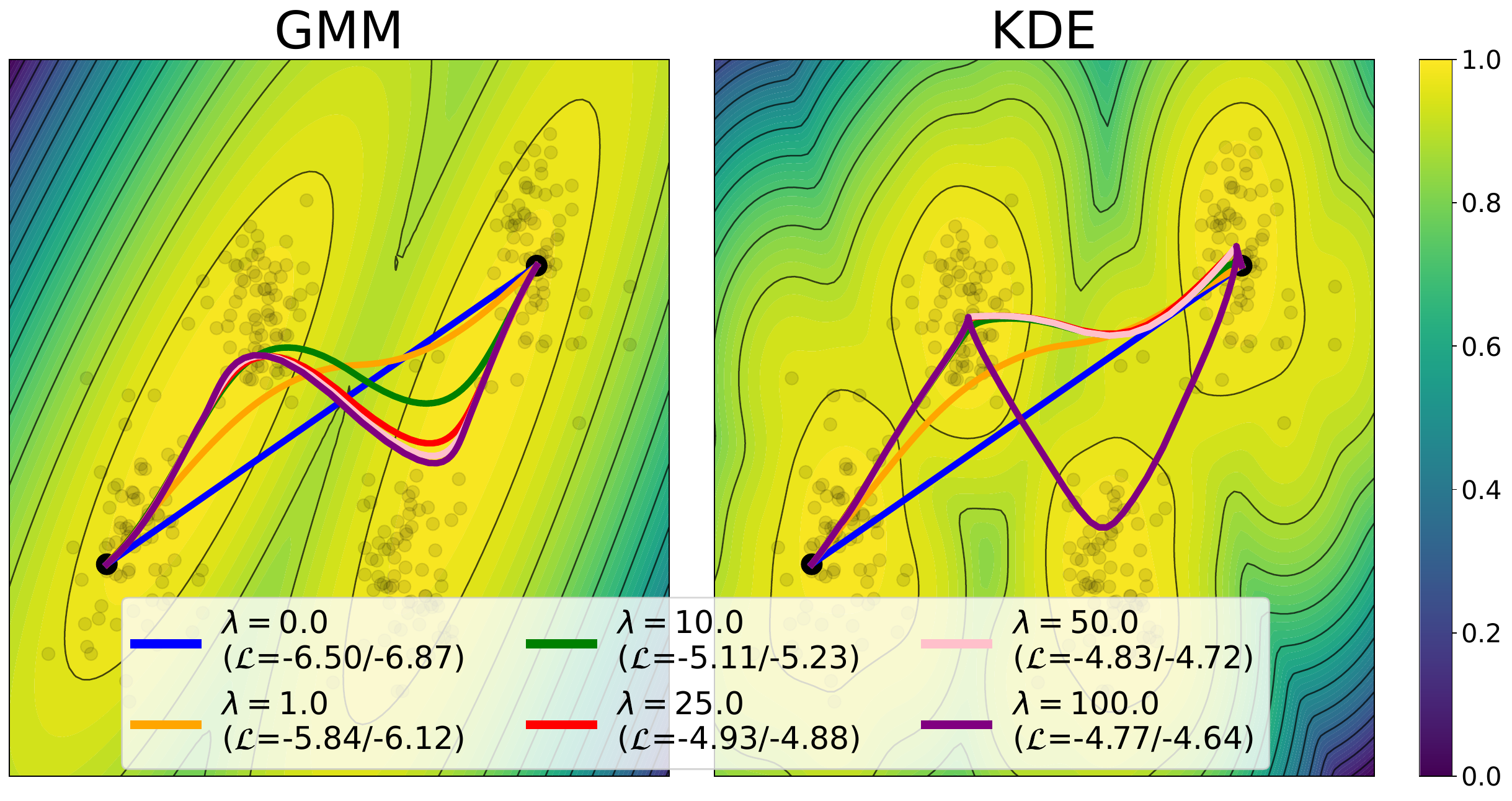}
    \caption{Interpolation curves computed for different values of $\lambda$ in Eq.~\ref{eq:geodesic_lagrange_simplify} with a Euclidean background metric and $S(z)=-\log p(z)$. The data density, $p$, is approximated by a Gaussian mixture model (\textsc{gmm}) and kernel density estimator (\textsc{kde}) shown in the background color, respectively, for synthetic data (black). The mean log-likelihoods of the estimated curves are denoted $\mathcal{L}$, where the first number is for the \textsc{gmm}, while the latter number is for the \textsc{kde}.}
    \label{fig:gmm_kde_examples}
\end{figure}
\paragraph{Minimizing curves.}
In the Riemannian case, geodesics serve as the foundation for elaborate statistical models and are deeply rooted in the exponential and logarithmic map that generalizes the notion of vector addition and subtraction to Riemannian manifolds \citep[Chapter 9]{dggm}. 
The exponential and logarithmic map, when it exists, can be found by solving the \textsc{ode} in Eq.~\ref{eq:ode_riemann} as an initial-value problem (\textsc{ivp}) or boundary-value problem (\textsc{bvp}), respectively. Given the metric in Eq.~\ref{eq:metric_definition}, we derive an \textsc{ode} that defines a generalized geodesic.
\begin{proposition}[First variation] \label{prop:ode_metric}
    Consider the regularized energy \eqref{eq:metric_definition} for a Riemannian manifold $\left(\mathcal{M}, g\right)$. Let $g_{ij}$ and $g^{ij}$ denote the local coordinates of the metric matrix function and its inverse, respectively. The first variation gives the following \textsc{ode}
    \begin{equation} \label{eq:ode_solution}
        \ddot{\gamma}^{k}(t) + \Gamma_{ij}^{k}\dot{\gamma}^{s}(t)\dot{\gamma}^{j}(t) = \frac{\lambda}{2} g^{kp}\partial_{p} S\left(\gamma(t)\right),
    \end{equation}
    where $\Gamma_{ij}^{k} = \frac{1}{2}g^{kj}\left(\partial_{l} g_{pj} + \partial_{j} g_{pl} - \partial_{p}g_{ij}\right)$ denotes the Christoffel symbols from the Levi-Civita connection.
\end{proposition}
\begin{proof}
    See Appendix~\ref{ap:ode_metric}.
\end{proof}
%
Note that Eq.~\ref{eq:ode_solution} is similar to the \textsc{ode} governing motion on a Riemannian manifold under external forces, where $\frac{\lambda}{2} g^{kp}\partial_{p} S\left(\gamma(t)\right)$ can be interpreted as the external force \citep{newton_riemann_mechanics}. In this way, we can interpret $S$ as a force that pushes the curve into the data distribution, where the ``push'' is controlled by $\lambda$ corresponding to a Newtonian system on a Riemannian manifold.

Using Eq.~\ref{eq:ode_solution}, we can easily define the Exponential map corresponding to the metric in Eq.~\ref{eq:metric_definition} as $\mathrm{Exp}^{\lambda}_{x}(v) = \gamma(1)$,
where $\gamma(1)$ is the solution to Eq.~\ref{eq:ode_solution} with $\gamma(0)=x$ and $\dot{\gamma}(0)=v$. Similarly, we can define, when it exists, the logarithmic map as $\mathrm{Log}^{\lambda}_{x}(y) = \dot{\gamma}(0)$ as the solution to Eq.~\ref{eq:ode_solution} with $\gamma(0)=x$ and $\gamma(1)=y$. The exponential map corresponds to an \textsc{ivp} that can be estimated numerically using standard \textsc{ode} solvers \citep{rk45, rk32, dop853, radau, bdf, odepack}. The logarithmic map corresponds to a \textsc{bvp} problem which, in principle, can be solved similarly using \textsc{bvp} solvers \citep{leapfrog_noakes, leapfrog_optimal_control, hennig2014probabilisticsolutionsdifferentialequations, arvanitidis2019fastrobustshortestpaths}. If Eq.~\ref{eq:metric_definition} is locally convex for the solution to Eq.~\ref{eq:ode_solution}, the solution to Eq.~\ref{eq:ode_solution} is a local minimum to Eq.~\ref{eq:metric_definition}.

\paragraph{Geometric statistics.} Using the modified version of the exponential and logarithmic map, we can compute geodesics targeting high-density regions. The metric structure of the Riemannian manifold \eqref{eq:metric_definition} lets us go further and compute geometric statistics, such as means and principal components, on the manifold \citep{pennec2006statriemann}. This allows generative models to be incorporated into data analysis pipelines.

Using Eq.~\ref{eq:metric_definition} we define the discrete mean-value as
\begin{equation} \label{eq:frechet_energy}
    \mu, \{\gamma_{s}\}_{i=1}^{N_{\mathrm{data}}} = \argmin_{\substack{y \in \mathcal{M} \\ \{\gamma_{s}\}_{i=1}^{N_{\mathrm{data}}}}}\sum_{i=1}^{N_{\mathrm{data}}}w_{i}\mathcal{E}_{\lambda}\left(a_{i}, y\right),
\end{equation}
where $\left\{a_{i}\right\}_{i=1}^{N_{\mathrm{data}}} \subset \mathcal{M}$ are data-points with weights $\left\{w_{i}\right\}_{i=1}^{N_{\mathrm{data}}} \subset \mathbb{R}_{+}$ and shortest curves $\{\gamma_{i}\}_{i=1}^{N_{\mathrm{data}}}$ connecting the data point and mean $\mu$. If $\lambda=0$, then this is equivalent to the Fr\'echet mean \citep{frechet1948}. Generally, uniqueness and existence of the Fr\'echet mean is not guaranteed even on Riemannian manifolds. In Appendix~\ref{ap:frechet_mean_prop}, we summarize the conditions for existence and uniqueness on Riemannian manifolds and state results for when this holds for Eq.~\ref{eq:frechet_energy}.

From this definition of the mean \eqref{eq:frechet_energy}, we can easily generalize well-known geometrical statistics such as \emph{geodesic regression} \citep{fletcher_geo_reg} and \emph{principal geodesic analysis} \citep{fletcher_pga}. These effectively construct Euclidean models in the tangent space of the mean.

\paragraph{Algorithm.}
Although computing the exponential map using the \textsc{ode} \eqref{eq:ode_solution} is computationally cheap, solving the corresponding \textsc{bvp} and computing the corresponding mean \eqref{eq:frechet_energy} can be computationally expensive and have poor convergence \citep{georce}. Rather than solving the \textsc{bvp}, geodesics connecting two points can also be found by minimizing Eq.~\ref{eq:metric_definition} directly using off-the-shelf optimizers \citep{kingma2017adammethodstochasticoptimization}. This can, however, be computationally difficult and often scales poorly in high dimensions or exhibits slow convergence. \citet{georce} circumvent this through an iterative scheme to estimate geodesics with fast convergence using optimal control known as the \textit{GEORCE}-algorithm. While efficient, \textit{GEORCE} relies on a Riemannian structure that our geometry does not satisfy. In particular, our Newtonian regularization renders \textit{GEORCE} inapplicable.

We therefore generalize \textit{GEORCE} to compute boundary-value geodesics under regularization and prove that our extended method converges to a local minimum (global convergence) with local quadratic convergence similar to \citet{georce} under sufficient regularity.
We formulate the regularized geodesic problem in Eq.~\ref{eq:geodesic_lagrange_simplify} as a discrete control problem.
\begin{equation} \label{eq:control_problem}
    \min_{(z_{s},u_{s})} \sum_{s=0}^{N_{\mathrm{grid}}-1}\left(u_{s}^{\top}G(z_{s})u_{s}+\lambda S(z_{s})\right) \\
\end{equation}
where $z_{s+1}=z_{s}+u_{s}$ for $s=0,\dots,N_{\mathrm{grid}}-1$ and $z_{0}=a$ and $z_{N_{\mathrm{grid}}}=b$. With this formulation, $z_{0:N_{\mathrm{grid}}}$ denotes the state variables, while $u_{0:(N_{\mathrm{grid}}-1)}$ denotes the control variable such that the control variables correspond to a discretization of the tangent vectors at the grid points of the curve. We assume that the discretization in Eq.~\ref{eq:control_problem} is sufficiently fine to approximate the integral in Eq.~\ref{eq:metric_definition}. Since the regularizing function depends only on the state $z_{0:N_{\mathrm{grid}}}$, we can derive the following update scheme.
\begin{proposition}[Update scheme] \label{prop:update_scheme}
    The update scheme for $u_{s},\mu_{s}$ and $z_{s}$ for $s=0,\dots,N_{\mathrm{grid}}-1$ is
    \begin{equation} \label{eq:energy_update_scheme}
        \begin{split}
            &\mu_{N_{\mathrm{grid}}-1} = \left(\sum_{s=0}^{N_{\mathrm{grid}}-1} G_s^{-1} \right)^{-1} 
            \Bigl( 2(a-b) \\
            &- \sum_{s=0}^{N_{\mathrm{grid}}-1} G_s^{-1} \sum_{j>s} \nu_j \Bigr), \\
            &u_{s} = -\frac{1}{2}G_{s}^{-1}\left(\mu_{N_{\mathrm{grid}}-1}+\sum_{j>s}\nu_{j}\right), \\
            &z_{s+1} = z_{s}+u_{s}, \\
        \end{split}
    \end{equation}
    where $z_{0}=a$ and $\nu_{s} := \lambda \restr{\nabla_{y}S(y)}{y=z_{s}}$.
\end{proposition}
\begin{proof}
    See Appendix~\ref{ap:pgeorce_cond}.
\end{proof}
Some generative models, such as diffusion models, apply a Euclidean background metric of high dimension. In this case, the update formulas simplify to
\begin{corollary}[Euclidean update scheme] \label{cor:update_scheme_euclidean}
    Assume $G=I$. The update scheme for $u_{s}$ and $z_{s}$ for $s=0,\dots,N_{\mathrm{grid}}-1$ is then
    \begin{equation} \label{eq:energy_update_scheme_euclidean}
        \begin{split}
            &u_{s} = \frac{b-a}{N_{\mathrm{grid}}} \\
            &+\frac{1}{2}\left(\frac{1}{N_{\mathrm{grid}}}\sum_{k=0}^{N_{\mathrm{grid}}-1}\sum_{j>k}\nu_{j}-\sum_{j>s}\nu_{j}\right), \\
            &z_{s+1} = z_{s}+u_{s}, \\
        \end{split}
    \end{equation}
    where $z_{0}=a$ and $\nu_{s} := \lambda \restr{\nabla_{y}S(y)}{y=z_{s}}$.
\end{corollary}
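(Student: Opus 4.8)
The plan is to specialize the general update scheme of Proposition~\ref{prop:update_scheme} to the case $G(\cdot) \equiv I$ and simplify. First I would substitute $G_i = I$ into the expression for $\mu_{N-1}$: the matrix $\sum_{i=0}^{N-1} G_i^{-1}$ collapses to $N I$, so $\left(\sum_{i=0}^{N-1} G_i^{-1}\right)^{-1} = \tfrac{1}{N} I$, and the term $\sum_{i=0}^{N-1} G_i^{-1} \sum_{j>i} \nu_j$ becomes $\sum_{i=0}^{N-1} \sum_{j>i} \nu_j$. Hence $\mu_{N-1} = \tfrac{1}{N}\bigl(2(a-b) - \sum_{i=0}^{N-1}\sum_{j>i}\nu_j\bigr)$. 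Next I would plug this into the control update $u_i = -\tfrac{1}{2} G_i^{-1}\bigl(\mu_{N-1} + \sum_{j>i}\nu_j\bigr) = -\tfrac{1}{2}\bigl(\mu_{N-1} + \sum_{j>i}\nu_j\bigr)$, distribute the $-\tfrac{1}{2}$, and collect terms: the $-\tfrac{1}{2}\mu_{N-1}$ piece contributes $\tfrac{1}{N}(b-a) + \tfrac{1}{2N}\sum_{k=0}^{N-1}\sum_{j>k}\nu_j$, and the remaining piece contributes $-\tfrac{1}{2}\sum_{j>i}\nu_j$. Combining gives exactly
\[
u_i = \frac{b-a}{N} + \frac{1}{2}\left(\frac{1}{N}\sum_{k=0}^{N-1}\sum_{j>k}\nu_j - \sum_{j>i}\nu_j\right),
\]
as claimed. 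The state recursion $z_{i+1} = z_i + u_i$ and $z_0 = a$ carry over verbatim from the control-problem constraints in eq.~\ref{eq:control_problem}, and since $G(y) \equiv I$ does not depend on $y$, the gradient term defining $\nu_i$ in eq.~\ref{eq:fixed_variables} loses its $\nabla_y[u_i^\top G(y) u_i]$ contribution and reduces to $\nu_i = \lambda\, \nabla_y S(y)\big|_{y=z_i}$.

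There is essentially no obstacle here — the statement is a direct algebraic corollary, so the ``proof'' is just bookkeeping: tracking the index shift between $i$ and the dummy $k$ in the averaged double sum, and confirming the sign conventions match those in Proposition~\ref{prop:update_scheme}. The one point that warrants a sentence of care is verifying that the boundary condition $z_N = b$ is automatically satisfied by this $u_i$, i.e. that $\sum_{i=0}^{N-1} u_i = b - a$; this follows because $\sum_{i=0}^{N-1}\frac{b-a}{N} = b-a$ and the bracketed correction term sums to zero, since $\sum_{i=0}^{N-1}\bigl(\frac{1}{N}\sum_{k}\sum_{j>k}\nu_j\bigr) = \sum_{k=0}^{N-1}\sum_{j>k}\nu_j = \sum_{i=0}^{N-1}\sum_{j>i}\nu_j$. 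I would present the computation in two short displayed equations — one for $\mu_{N-1}$, one for $u_i$ — followed by the one-line remark on feasibility, and defer any details to Appendix~\ref{ap:cond} if space is tight.
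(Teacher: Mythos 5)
Your proposal is correct and is exactly the derivation the paper intends: the corollary is a direct specialization of Proposition~\ref{prop:update_scheme} to $G_i = I$, and your algebra for $\mu_{N-1}$, $u_i$, and the vanishing of the $\nabla_y[u_i^\top G(y)u_i]$ term in $\nu_i$ all check out. The added verification that $\sum_{i=0}^{N-1} u_i = b-a$ (so $z_N=b$ is automatically satisfied) is a worthwhile sanity check that the paper leaves implicit.
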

Thus, for a Euclidean metric, the algorithm has the same complexity as a gradient descent method.
Given the update scheme in Proposition~\ref{prop:update_scheme}, the new iteration can be found similarly to \citet{georce}
\begin{equation*}
    \begin{split}
        z_{s} = \alpha \hat{z}_{s} + (1-\alpha)z_{s},
    \end{split}
\end{equation*}
where $\hat{z}$ is the solution using Proposition~\ref{prop:update_scheme}, and $z$ is the current iteration. Similarly to \citet{georce}, we use line-search with backtracking to estimate $\alpha$ by iteratively multiplying $\alpha$ by a $\rho$, until the updated solution decreases the energy in Eq.~\ref{eq:control_problem} with $\rho=0.5$. We denote the algorithm \textit{ProbGEORCE} (Probabilistic GEORCE) and display it in pseudo-code in Appendix~\ref{ap:prob_georce_al}. If $S$ or $G$ are expensive to evaluate, then repeatedly evaluating the energy functional for backtracking can be time-consuming and possibly intractable. In Appendix~\ref{ap:adaptive_update}, we present how to adaptively estimate $\alpha$ using an adaptive update scheme similar to \textit{ADAM} to avoid repeated evaluation of $G$ and $S$. We note that compared to the original \textit{GEORCE}-algorithm for unreguralized geodesic construction, the only difference between the algorithms is that $\nu_{s}$ depends on $\lambda S (z_{s})$. Note that $u_{0}$ can be directly interpreted as the logarithmic map modulo scaling. 
%
%

In Appendix~\ref{ap:pgeorce_convergence} we show that \textit{ProbGEORCE} exhibits global convergence and locally has quadratic convergence similar to the original \textit{GEORCE}-algorithm \citep{georce}. 
For the generalized Fr\'echet mean in Eq.~\ref{eq:frechet_energy}, we can similarly consider the discretized version
\begin{equation}
    \begin{split}
        &\min_{z_{s,i}} \,\sum_{i=1}^{N_{\mathrm{data}}}\sum_{s=0}^{N_{\mathrm{grid}}}w_{i}\left(\langle u_{s,i}, u_{s,i}\rangle_{g} + \lambda S(z_{s,i})\right),  \\
    \end{split}
\end{equation}
where $z_{s+1,i} = u_{s,i} + z_{s,i}$ and $z_{0,i}=a_{i} \in \mathcal{M}$ denote the data points for $i=1,\dots,N_{\mathrm{data}}$ and $z_{N_{\mathrm{grid}},i}=y \in \mathcal{M}$ denote the candidate mean.
Following a similar approach, we show in Appendix~\ref{ap:mean_value} how to estimate the mean in Eq.~\ref{eq:frechet_energy} by generalizing the approach from \citet{georce_frechet}.

\paragraph{Application to diffusion models.}
In the described framework, we consider a regularization function $S$. However, when computing shortest curves using either the \textsc{ode} \eqref{eq:ode_solution} or \textit{ProbGEORCE} with an adaptive update scheme, we only need the gradient $\nabla_{z}S(z)$. In the case that $S(z)=-\log p(z)$, then $\nabla_{z}(S) = -\nabla_{z} \log p(z)$ is exactly the negative \emph{score}. In diffusion models, the score $\nabla_{z} \log p_{t}(z)$ is also time-dependent and, therefore, we can apply our method by considering either interpolation in the data space for $t$ close to zero, or set $S(z) = -\log p_{\mathrm{prior}}(z)$, where $p_{\mathrm{prior}}(z)$ denotes the density of the prior distribution in noise space. In Appendix~\ref{ap:app_diff_model}, we illustrate the difference between these two approaches on a simple \textsc{ddpm} in $\mathbb{R}^{2}$.

\section{Experiments} \label{sec:experiments}
\begin{table*}[t]
\caption{Negative log-likelihood (NLL) and runtime results for the energy-based model, normalizing flow, autoregressive model and variational autoencoder used in Fig.~\ref{fig:multirow_plot_2d_gen}. For the \textsc{ebm}, \textsc{nf}, and \textsc{ar}, we set the NLL estimated by the model. For the \textsc{vae}, we state the ELBO loss as an estimator of the NLL. We write $-$ if the Riemannian metric is not invertible or if the generative model returns nans for the likelihood. In Appendix~\ref{ap:exp_details}, we provide details on the benchmarks and any hyper-parameters.}
\centering
\resizebox{\textwidth}{!}{%
\begin{tabular}{lcccccccc}
\toprule
  & \multicolumn{2}{c}{\bfseries\large EBM} & \multicolumn{2}{c}{\bfseries\large NF} & \multicolumn{2}{c}{\bfseries\large AR} & \multicolumn{2}{c}{\bfseries\large VAE} \\
\cmidrule(lr){2-3} \cmidrule(lr){4-5} \cmidrule(lr){6-7} \cmidrule(lr){8-9} 
Method & NLL $\downarrow$ & Runtime $\downarrow$ & NLL $\downarrow$ & Runtime $\downarrow$ & NLL $\downarrow$ & Runtime $\downarrow$ & NLL $\downarrow$ & Runtime $\downarrow$ \\
\midrule
\multicolumn{9}{c}{\bfseries\large IVP} \\
\bottomrule
\addlinespace[0.3em]
Ours ($\lambda=20.0$) & \textbf{8.483} & 4.90 $\pm$ 0.09 & \pmb{$1.20 \times 10^{3}$} & 101.08 $\pm$ 2.17 & \pmb{$1.37 \times 10^{4}$} & 7.66 $\pm$ 0.08 & \textbf{-144.673} & 64.01 $\pm$ 0.49 \\
Euclidean & 30.010 & \textbf{0.00 $\pm$ 0.00} & $4.01 \times 10^{3}$ & \textbf{0.00 $\pm$ 0.00} & $1.06 \times 10^{5}$ & \textbf{0.00 $\pm$ 0.00} & -105.053 & \textbf{0.00 $\pm$ 0.00} \\
Spherical & 16.536 & 0.00 $\pm$ 0.00 & $4.10 \times 10^{3}$ & 0.00 $\pm$ 0.00 & $8.25 \times 10^{4}$ & 0.00 $\pm$ 0.00 & -85.552 & 0.00 $\pm$ 0.00 \\
Fisher-Rao & 29.682 & 12.61 $\pm$ 0.09 & - & - & - & - & - & - \\
Jacobian \citep{saito2025tangentialmanifolddiscoveringriemannian} & 29.682 & 18.50 $\pm$ 0.03 & - & - & $6.03 \times 10^{32}$ & 54.93 $\pm$ 0.58 & - & - \\
Jacobian (Reg) & 29.682 & 25.43 $\pm$ 0.07 & - & - & $5.30 \times 10^{29}$ & 61.82 $\pm$ 2.22 & - & - \\
Inverse Density \citep{Yu2025ProbabilityDG} & 28.673 & 9.14 $\pm$ 0.06 & $2.78 \times 10^{3}$ & 255.21 $\pm$ 1.63 & - & - & - & - \\
Generative \citep{Kim2024DeepGG} & 29.184 & 9.70 $\pm$ 0.08 & $3.87 \times 10^{3}$ & 225.10 $\pm$ 2.41 & $1.08 \times 10^{5}$ & 17.06 $\pm$ 0.12 & -120.631 & 212.38 $\pm$ 2.05 \\
Monge \citep{Hartmann2022LagrangianMM} & 29.682 & 12.64 $\pm$ 0.05 & $2.81 \times 10^{3}$ & 452.26 $\pm$ 2.78 & $9.51 \times 10^{4}$ & 31.83 $\pm$ 0.23 & -121.875 & 399.59 $\pm$ 3.87 \\
\addlinespace[0.5em]
\bottomrule
\addlinespace[0.3em]
\multicolumn{9}{c}{\bfseries\large BVP} \\
\bottomrule
\addlinespace[0.3em]
Ours ($\lambda=20.0$) & \textbf{-1.892} & 7.89 $\pm$ 0.02 & \textbf{94.492} & 217.49 $\pm$ 0.37 & \textbf{28.611} & 11.53 $\pm$ 0.20 & \textbf{-24.382} & 167.97 $\pm$ 3.81 \\
Euclidean & 2.435 & \textbf{0.00 $\pm$ 0.00} & 273.797 & \textbf{0.00 $\pm$ 0.00} & $6.05 \times 10^{3}$ & \textbf{0.00 $\pm$ 0.00} & 18.142 & \textbf{0.00 $\pm$ 0.00} \\
Spherical & 141.120 & 0.00 $\pm$ 0.00 & 103.706 & 0.00 $\pm$ 0.00 & $1.43 \times 10^{4}$ & 0.00 $\pm$ 0.00 & 948.467 & 0.00 $\pm$ 0.00 \\
Fisher-Rao & 0.863 & 21.99 $\pm$ 0.04 & - & 1.49 $\pm$ 0.00 & $1.73 \times 10^{14}$ & 37.51 $\pm$ 0.36 & - & - \\
Jacobian \citep{saito2025tangentialmanifolddiscoveringriemannian} & 2.435 & 2.52 $\pm$ 0.02 & - & - & $7.84 \times 10^{5}$ & $6.90 \times 10^{3}$ $\pm$ 23.44 & - & - \\
Jacobian (Reg) & 2.435 & 2.41 $\pm$ 0.01 & - & - & $7.15 \times 10^{5}$ & $6.98 \times 10^{3}$ $\pm$ 53.38 & - & - \\
Inverse Density \citep{Yu2025ProbabilityDG} & -1.451 & 16.41 $\pm$ 0.01 & 273.797 & 0.39 $\pm$ 0.00 & $6.05 \times 10^{3}$ & 0.04 $\pm$ 0.00 & - & - \\
Generative \citep{Kim2024DeepGG} & -1.256 & 16.63 $\pm$ 0.35 & 274.235 & 158.04 $\pm$ 0.70 & $6.05 \times 10^{3}$ & 0.04 $\pm$ 0.00 & 21.332 & 116.09 $\pm$ 0.41 \\
Monge \citep{Hartmann2022LagrangianMM} & 2.490 & 24.69 $\pm$ 0.23 & 174.743 & 553.14 $\pm$ 1.95 & $3.50 \times 10^{6}$ & 38.31 $\pm$ 0.15 & -6.788 & 314.86 $\pm$ 5.48 \\
\addlinespace[0.5em]
\bottomrule
\addlinespace[0.3em]
\multicolumn{9}{c}{\bfseries\large Mean} \\
\bottomrule
\addlinespace[0.3em]
Ours ($\lambda=20.0$) & \textbf{-79.383} & 9.78 $\pm$ 0.04 & \pmb{$2.14 \times 10^{4}$} & 188.16 $\pm$ 4.97 & \pmb{$2.54 \times 10^{4}$} & 8.80 $\pm$ 0.38 & \pmb{$-2.28 \times 10^{3}$} & 166.96 $\pm$ 0.10 \\
Euclidean & 153.896 & \textbf{0.00 $\pm$ 0.00} & $1.18 \times 10^{5}$ & \textbf{0.00 $\pm$ 0.00} & $5.62 \times 10^{5}$ & \textbf{0.00 $\pm$ 0.00} & $2.07 \times 10^{3}$ & \textbf{0.00 $\pm$ 0.00} \\
Spherical & $7.52 \times 10^{3}$ & 0.11 $\pm$ 0.00 & - & 0.11 $\pm$ 0.00 & $1.26 \times 10^{7}$ & 0.11 $\pm$ 0.00 & -558.952 & 0.14 $\pm$ 0.00 \\
Fisher-Rao & 28.670 & 30.32 $\pm$ 0.12 & - & 1.68 $\pm$ 0.02 & $7.20 \times 10^{18}$ & 0.12 $\pm$ 0.00 & - & - \\
Jacobian \citep{saito2025tangentialmanifolddiscoveringriemannian} & 153.896 & 247.24 $\pm$ 0.76 & - & - & - & - & - & - \\
Jacobian (Reg) & 153.896 & 283.84 $\pm$ 9.80 & - & - & - & - & - & - \\
Inverse Density \citep{Yu2025ProbabilityDG} & 153.896 & 0.03 $\pm$ 0.00 & $1.18 \times 10^{5}$ & 0.35 $\pm$ 0.00 & $5.62 \times 10^{5}$ & 0.04 $\pm$ 0.00 & $2.10 \times 10^{3}$ & 0.32 $\pm$ 0.00 \\
Generative \citep{Kim2024DeepGG} & 153.896 & 0.03 $\pm$ 0.00 & $1.18 \times 10^{5}$ & 0.45 $\pm$ 0.00 & $5.62 \times 10^{5}$ & 0.04 $\pm$ 0.00 & $2.10 \times 10^{3}$ & 0.29 $\pm$ 0.01 \\
Monge \citep{Hartmann2022LagrangianMM} & 152.098 & 9.90 $\pm$ 0.05 & $4.93 \times 10^{4}$ & 572.29 $\pm$ 18.44 & $2.21 \times 10^{6}$ & 47.44 $\pm$ 0.09 & $-1.19 \times 10^{3}$ & 356.90 $\pm$ 6.55 \\
\addlinespace[0.5em]
\bottomrule
\end{tabular}}
\label{tab:gen2d}
\end{table*}
\paragraph{Synthetic data and runtime.}
We consider the four generative models and the corresponding data distributions in Fig.~\ref{fig:multirow_plot_2d_gen}. Energy-based models estimate the data distribution as $\exp\left(-E_{\theta}(x)\right)/Z_{\theta}$, where $E_{\theta}$ denotes a neural network and $Z_{\theta}$ is a normalization constant \citep{lecun2006tutorial}, while normalizing flow models estimate the log-likelihood of the data distribution applying the transformation of random variables \citep{normalizing_flows}. Neural autoregressive models learn the conditional distribution $\log p_{\theta}(x) = \sum_{i}\log p_{\theta}(x_{i} | x_{<i})$ \citep{autoregressive_model}. For all these models, we set $S=-\log p_{\theta}$, except for \textsc{vae}'s where we use the negative negative evidence lower bound (\textsc{elbo}). We show the result of applying our framework compared to other baselines in Table~\ref{tab:gen2d}. We see that our method, in general, obtains a higher likelihood compared to different baseline methods used for interpolation in generative models.
\begin{figure}[t]
    \centering
    \includegraphics[width=\columnwidth]{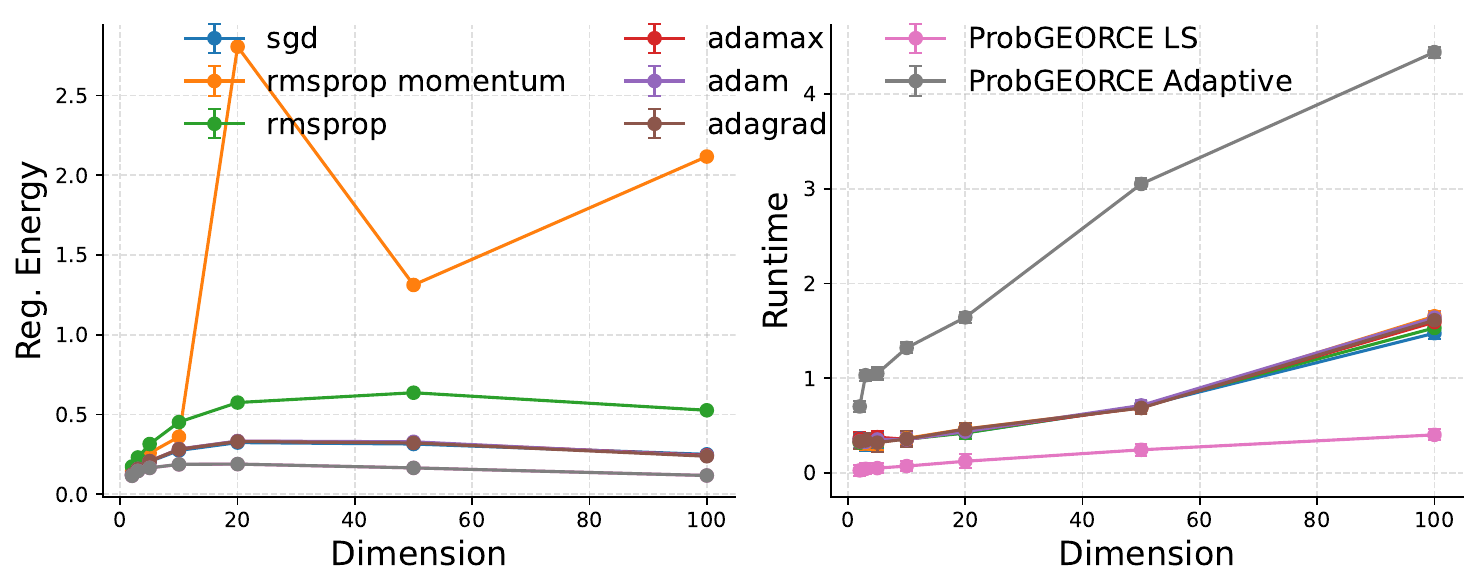}
    \caption{The regularized energy and runtime for different methods minimizing Eq.~\ref{eq:metric_definition} for $\lambda=1.0$ for the $n$-sphere and $S=-\log p$, where $p$ is the density of three Gaussian randomly weighted, with random mean values and identity as the covariance matrix. Note that the regularized energy for \textit{ProbGEORCE} with line-search (LS) and adaptive update scheme (Adaptive) are completly similar.}
    \label{fig:runtime_dim}
\end{figure}
In general, the proposed geometric framework in Eq.~\ref{eq:metric_definition} and Eq.~\ref{eq:frechet_energy} can be optimized using any off-the-shelf solvers like \textit{Adam} \citep{kingma2017adammethodstochasticoptimization}. To illustrate the efficiency of our \textit{ProbGEORCE}, Fig.~\ref{fig:runtime_dim} considers the boundary value problem between two points in a local chart of the $n$-sphere. We set $S=-\log p$, where $p$ denotes the data density of three randomly weighted Gaussian distributions with random means and identity covariance matrix, and $\lambda=1.0$. We terminate the algorithms if the norm of the gradient of Eq.~\ref{eq:control_problem} is less than $10^{-4}$. We see that our algorithm obtains the lowest regularized energy across the different dimensions. We also see that \textit{ProbGEORCE} with line-search (LS) is significantly faster compared to using the adaptive scheme. This is expected as the evaluation of $S$ is computationally cheap. In Appendix~\ref{ap:add_runtime}, we show the runtime for other manifolds, and also for the models in Fig.~\ref{fig:multirow_plot_2d_gen} for different optimizers and values of $\lambda$, where $S$ is expensive to evaluate. In this case, the adaptive update scheme performs significantly better than line-search as expected. 
%
%

\begin{figure}[h!]
    \centering
    \includegraphics[
      width=0.8\columnwidth
    ]{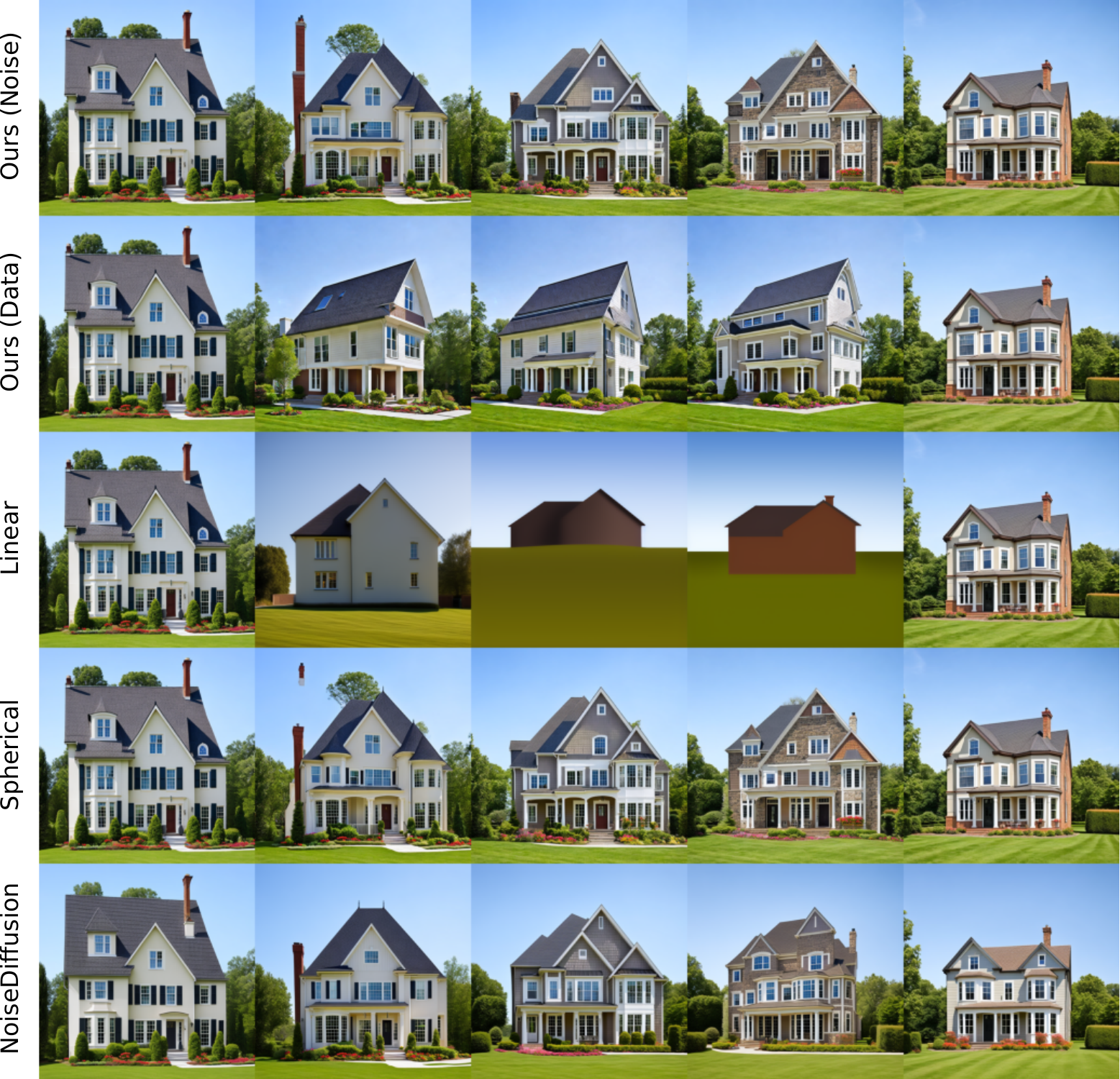}
    \caption{Interpolated images for different methods.}
    \label{fig:image_grid}
\end{figure}

\begin{figure*} 
    \centering
    \includegraphics[
      width=0.9\linewidth
    ]{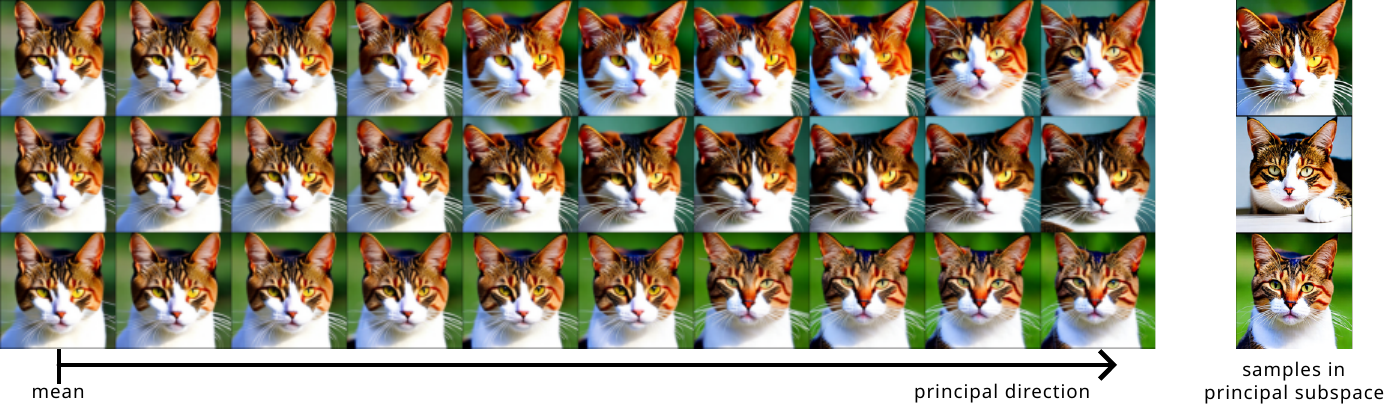}
    \caption{Three principal geodesics shown on the left. On the right, samples drawn within the principal subspace are shown.}
    \label{fig:pga}
\end{figure*}

\begin{figure} 
    \centering
    \includegraphics[
      width=1.0\linewidth
    ]{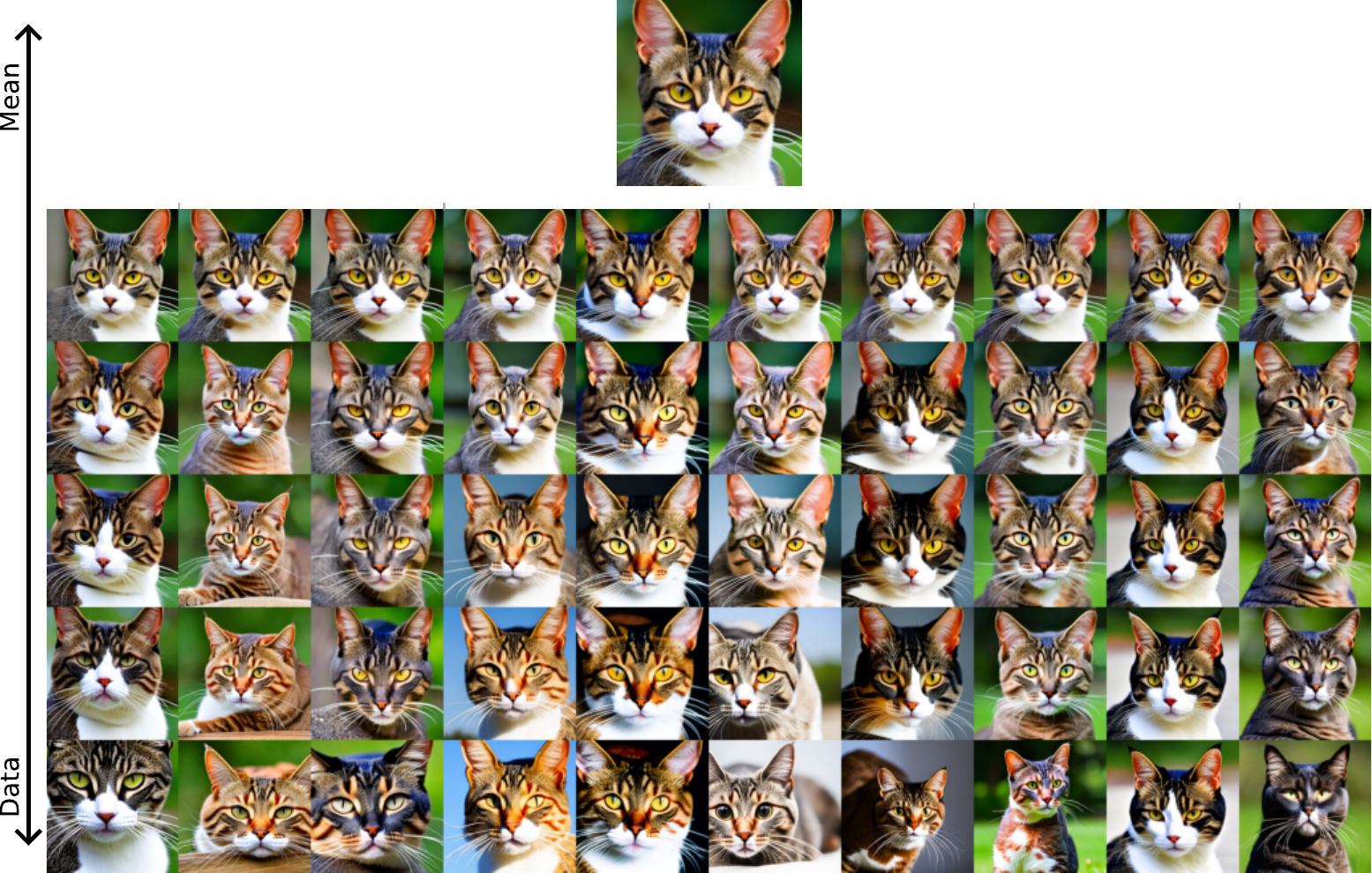}
    \caption{The computed mean value using our method in the data space, where the image shows the transition from the mean to the data points in the last row.}
    \label{fig:mean_grid_left}
\end{figure}
\begin{figure} 
    \centering
    \includegraphics[
      width=1.0\columnwidth,
      height=1.0\textheight, 
      keepaspectratio
    ]{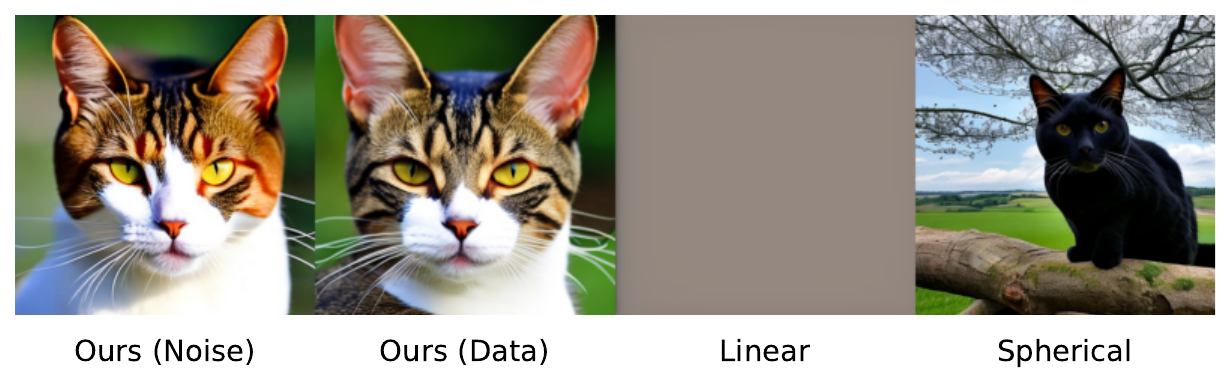}
    \caption{The estimated mean using our method in noise and data space, respectively, compared to the Fr\'echet mean using a linear and spherical geometry in noise space for the data shown in Fig.~\ref{fig:mean_grid_left}.}
    \label{fig:mean_row}
\end{figure}

\paragraph{Diffusion models.}
To apply our framework to high-resolution generative models, we consider ControlNet \citep{zhang2023addingconditionalcontroltexttoimage}, which adds spatial conditioning controls to pre-trained text-to-image diffusion models such as Stable Diffusion \citep{podell2023sdxlimprovinglatentdiffusion}. We apply our method both directly in the data space using the score proposed by \citet{katzir2024noisefree} for time 0, and in the noise space using the density of a $\chi^{2}$-distribution on the squared norm. The latter is due to the fact that the size of the latent space is $4 \times 96 \times 96$, where the norm of an isotropic Gaussian prior is $\chi^{2}$-distributed and converges to a uniform distribution on a sphere. When computing curves in data space, we find that the images can get blurry. Therefore, when computing curves directly in the data space, we encode the computed curves in noise space and decode them back to data space, as we find that this gives smooth and realistic images. 

In Table~\ref{tab:fid_kid}, we compute the Fr\'echet inception distance (\textsc{fid}) \citep{fid} and kernel inception distance (\textsc{kid}) \citep{bińkowski2018demystifying} on the AFHQ dataset \citep{choi2020starganv2} for our method and compare it with alternative methods. In general, for interpolation, we do not find a major difference between the different methods in terms of realism, as also seen in Fig.~\ref{fig:image_grid} for interpolation between houses. However, the difference can be seen more clearly when considering higher-order statistics like the mean. In Fig.~\ref{fig:mean_grid_left}, we compute the mean value using our method in data space, where the last row is the images over which the mean is calculated. Fig.~\ref{fig:mean_row} shows the mean over these images compared to alternative methods by applying the Euclidean and spherical geometry in noise space. We see that spherical interpolation deviates more from the data images in Fig.~\ref{fig:image_grid} compared to our method. In Appendix~\ref{ap:app_diff_model}, we show the transition from the mean to the data points for all methods, where we also see that the spherical geometry has very rapid transitions to the data from the mean.

\begin{table} 
\centering
\caption{Comparison of methods using FID and KID metrics (lower is better) for the AFHQ dataset \citep{choi2020starganv2} for the boundary value problem connecting two images.}
\resizebox{\columnwidth}{!}{%
\begin{tabular}{lcc}
\toprule
Method & FID $\downarrow$ & KID $\downarrow$ \\
\midrule
Ours (Noise) & 182.73 & 0.12 ± 0.04 \\
Ours (Data) & \textbf{172.70} & 0.11 ± 0.04 \\
Linear & 200.6215 & \textbf{0.08 ± 0.04} \\
Spherical & 184.21  & 0.12 ± 0.04 \\
NoiseDiffusion \citep{zheng2024noisediffusioncorrectingnoiseimage} & 174.46  & 0.11 ± 0.04 \\
\bottomrule
\end{tabular}
}
\label{tab:fid_kid}
\end{table}

To further illustrate the potential of our method, we consider computing principal geodesics (\textsc{pga}) \citep{fletcher_pga} using our method, which generalizes principal component analysis to Riemannian manifolds. We consider the 10 images in Fig.~\ref{fig:mean_grid_left} as data and display the three principal geodesics from the mean using our method in noise space and three samples using the 3 principal directions. 

In Appendix~\ref{ap:exp_additional}, we provide additional qualitative results and also results for other diffusion models and score-based generative models \citep{song2021scorebasedgenerativemodelingstochastic}

\section{Conclusion}
We have proposed a general geometric framework to compute geometric statistics of generative models compatible with different metrics and probability distributions. We have shown that shortest curves are characterized by a system of \textsc{ode} corresponding to a Newtonian system on a Riemannian manifold. We have derived an algorithm with fast convergence to estimate interpolation and the mean value of generative models. Empirically, we have shown our method's applicability to different generative models and demonstrated that our interpolation curves are closer to regions with high likelihood. A key benefit of our approach is that it allows for higher-order statistical calculations, where we have demonstrated means and principal components. This shows that through a rigorous geometric construction, we can incorporate contemporary generative models into more traditional data analysis pipelines.

\paragraph{Limitations.}
Our method shows promising results for computing interpolation for different generative models, but requires solving an optimization problem, which is more cumbersome than alternative methods such as linear and spherical interpolation. Our method assumes a function that targets the likely areas of the generative model and that data resides on a Riemannian manifold. Furthermore, the optimal weight of the regularization, $\lambda$ and the choice of the regularization function depend on the preferences of the user for smoothness versus high likelihood.

\clearpage
\section*{Impact Statement}
This work provides a geometrical framework for further statistical analysis of generative models enabling, e.g., interpolation. We show examples on images, synthetic data and data residing on Riemannian manifolds. Although this framework itself does not directly have any negative societal impact, it is possible to apply this framework to misuse a generative model. Since the framework is strongly dependent on the underlying generative model, any misuse can be avoided by imposing suitable restrictions on the generative model.

\nocite{langley00}

\bibliography{references}
\bibliographystyle{icml2026}

\newpage
\appendix
\onecolumn
\section{Proofs and Derivations} \label{ap:proofs}
\subsection{Local representation of the metric} \label{ap:local_metric}
\begin{proposition}[Local Metric]
    Let $\tilde{S}$ denote the lower bound of $S$. Assume that $\lambda$ is sufficiently large such that $S(\cdot)$ is close to $\tilde{S}$ in Eq.~\ref{eq:constrained_geodesic}. Let $\gamma^{*}$ denote the optimal solution to Eq.~\ref{eq:metric_definition}. Then the regularized energy can locally along the optimal curve $\gamma^{*}$ be estimated as
    \begin{equation*}
        \mathcal{E}(\gamma) \approx \tilde{S} + \int_{0}^{1}\left(\dot{\gamma}^{*}\right)^{\top}(t)\left(G\left(\gamma^{*}(t)\right)+\frac{\lambda}{2}\partial^{2}_{zz}S(\gamma^{*}(t))\right)\dot{\gamma}^{*}(t) \, \dif t,
    \end{equation*}
    where $\partial^{2}_{zz}S$ denotes the Hessian of $S$. Thus, if
    \begin{equation*}
        G\left(\gamma^{*}(t)\right)+\frac{\lambda}{2}\partial^{2}_{zz}S(\gamma^{*}(t))
    \end{equation*}
    is positive definite, we can interpret it as a local representation of a Riemannian metric along the optimal curve.
\end{proposition}
\begin{proof}
    Let $\gamma^{*}$ denote the optimal solution to Eq.~\ref{eq:metric_definition}.  A Taylor expansion of $S$ around a point $\gamma^{*}(t)$ gives the approximation
    \begin{equation*}
        S\left(\gamma^{*}(t+\Delta t)\right) = S\left(\gamma^{*}(t)\right) + \left\langle \partial_{z}S\left(\gamma^{*}(t)\right), \Delta z \right \rangle + \frac{1}{2}\Delta z_{i}^{\top} \partial^{2}_{zz} S\left(\gamma^{*}(t)\right) \Delta z + \mathcal{O}\left(\Delta z\right)\norm{\Delta z}_{2}^{2},
    \end{equation*}
    where $\Delta z_{i} = \gamma^{*}(t+\Delta t)-\gamma^{*}(t)$ and $\partial_{z}S\left(\gamma^{*}(t)\right)$ and $\partial^{2}_{zz} S\left(\gamma^{*}(t)\right)$ denote the gradient and Hessian of $S(\cdot)$, respectively. By assumption, $S\left(\gamma^{*}(t)\right)$ is close to the lower bound $\tilde{S}$, which implies that $\partial_{z}S\left(\gamma^{*}(t)\right) \approx \mathbf{0}$. Similarly, we set $S\left(\gamma^{*}(t)\right) \approx \tilde{S}$. The regularized energy in Eq.~\ref{eq:metric_definition} can therefore locally be written as
    \begin{equation*}
        \begin{split}
            \mathcal{E}(\gamma) &\approx \tilde{S} + \int_{0}^{1}\left(\left(\dot{\gamma}^{*}\right)^{\top}(t)G\left(\gamma^{*}(t)\right)\dot{\gamma}^{*}(t) + \left(\dot{\gamma}^{*}\right)^{\top}(t)\frac{\lambda}{2}\partial^{2}_{zz}S(\gamma(t))\dot{\gamma}^{*}(t)\right) \, \dif t \\
            &= \tilde{S} + \int_{0}^{1}\left(\dot{\gamma}^{*}\right)^{\top}(t)\left(G\left(\gamma^{*}(t)\right)+\frac{\lambda}{2}\partial^{2}_{zz}S(\gamma^{*}(t))\right)\dot{\gamma}^{*}(t) \, \dif t,
        \end{split}
    \end{equation*}
    which completes the proof.
\end{proof}
\subsection{ODE for minimizing curves} \label{ap:ode_metric}
\begin{proposition}[First variation]
    Consider the regularized energy in Eq.~\ref{eq:metric_definition} for a Riemannian manifold $\left(\mathcal{M}, g\right)$. Let $g_{ij}$ and $g^{ij}$ denote the local coordinates of metric matrix function and its inverse, respectively. The first variation gives the following \textsc{ode}
    \begin{equation} 
        \ddot{\gamma}^{k}(t) + \Gamma_{ij}^{k}\dot{\gamma}^{s}(t)\dot{\gamma}^{j}(t) = \frac{\lambda}{2} g^{kp}\partial_{p} S\left(\gamma(t)\right),
    \end{equation}
    where $\Gamma_{ij}^{k} = \frac{1}{2}g^{kj}\left(\partial_{l} g_{pj} + \partial_{j} g_{pl} - \partial_{p}g_{ij}\right)$ denotes the Christoffel symbols derived from the Levi-Civita connection.
\end{proposition}
\begin{proof}
    Consider the energy written in Einstein notation as
    \begin{equation*}
        \mathcal{E}\left(\gamma\right) = \frac{1}{2}\int_{a}^{b} g_{ij}\left(\gamma(t)\right)\dot{\gamma}^{i}(t)\dot{\gamma}^{j}(t) \, \dif t + \lambda \int_{a}^{b}S\left(\gamma(t)\right) \, \dif t,
    \end{equation*}
    where $g_{ij}$ denotes the $(i,j)$th element of $G$. We implicitly assume that $g_{ij}$ depends on $\gamma(t)$. We aim to derive the corresponding system of ordinary differential equations (ODE) governing the shortest curves using calculus of variation. The corresponding Lagrangian is given by
    \begin{equation*}
        L\left(x,\dot{x}\right) = g_{ij}\dot{x}^{i}\dot{x}^{j} + \lambda S,
    \end{equation*}
    where we to shorten notation set $x:=\gamma(t)$ and $S:= S\left(\gamma(t)\right)$ and implicitly assume the dependence on $t$. From calculus of variation we get the Euler-Lagrange equation
    \begin{equation*}
        \frac{\dif}{\dif t}\left(\frac{\partial L}{\partial \dot{x}^{p}}\right) - \frac{\partial L}{\partial x^{p}} = 0.
    \end{equation*}
    Since $G$ is symmetric, we see that
    \begin{equation*}
        \begin{split}
            \frac{\dif}{\dif t}\left(\frac{\partial L}{\partial \dot{x}^{p}}\right) &= \frac{\dif}{\dif t}\left(\frac{\partial}{\partial \dot{x}^{p}}\left(\frac{1}{2}g_{ij}\dot{x}^{i}\dot{x}^{j} + \lambda S\right)\right) \\
            &= \frac{\dif}{\dif t}\left(2g_{pj}\left(x\right)\dot{x}^{j}\right) \\
            &= 2\partial_{l} g_{pj}\left(x\right)\dot{x}^{l}\dot{x}^{j} + 2g_{pj}\ddot{x}^{j}.
        \end{split}
    \end{equation*}
    \begin{equation*}
        \begin{split}
            \frac{\partial L}{\partial x^{p}} &= \frac{\partial}{\partial x^{p}} \left(g_{ij}\dot{x}^{i}\dot{x}^{j} + \lambda S\right) \\
            &= \partial_{p}g_{ij}\dot{x}^{i}\dot{x}^{j} + \lambda \partial_{p} S.
        \end{split}
    \end{equation*}
    Combining this, we get that
    \begin{equation*}
        2\partial_{l} g_{pj}\dot{x}^{l}\dot{x}^{j} + 2g_{pj}\ddot{x}^{j} - \partial_{p}g_{ij}\dot{x}^{i}\dot{x}^{j} - \lambda \partial_{p} S = 0
    \end{equation*}
    Let $g^{pk}$ denote the elements of the inverse to $G$. Multiplying $g^{pk}$ on both sides, we get
    \begin{equation*}
        2g^{pk}\partial_{l} g_{pj}\dot{x}^{l}\dot{x}^{j} + 2\ddot{x}^{k} - g^{pk}\partial_{p}g_{ij}\left(x\right)\dot{x}^{i}\dot{x}^{j} = \lambda g^{pk}\partial_{p} S,
    \end{equation*}
    where identity the Christoffel symbols as $\Gamma_{ij}^{k} = \frac{1}{2}g^{kj}\left(\partial_{l} g_{pj} + \partial_{j} g_{pl} - \partial_{p}g_{ij}\right)$ such that
    \begin{equation*}
        2\ddot{x}^{k} + 2\Gamma_{ij}^{k}\dot{x}^{i}\dot{x}^{j} = \lambda g^{kp}\partial_{p} S,
    \end{equation*}
\end{proof}
\subsection{Necessary conditions for ProbGEORCE} \label{ap:pgeorce_cond}
The optimal control problem in Eq.~\ref{eq:control_problem} gives rise to the following necessary conditions for a minimum point.
\begin{proposition}
    The necessary conditions for a minimum in Eq.~\ref{eq:control_problem} are
    \begin{equation} \label{eq:energy_opt_condtions}
        \begin{split}
            &2G(z_{s})u_{s}+\mu_{s}=0, \quad s=0,\dots, N_{\mathrm{grid}}-1, \\
            &z_{s+1}=z_{s}+u_{s}, \quad s=0,\dots,N_{\mathrm{grid}}-1, \\
            &\restr{\nabla_{y}\left[u_{s}^{\top}G(y)u_{s} + \lambda S(y)\right]}{y=z_{s}}+\mu_{s}=\mu_{s-1}, \quad s=1,\dots,N_{\mathrm{grid}}-1. \\
            &z_{0}=a, z_{N_{\mathrm{grid}}}=b,
        \end{split}
    \end{equation}
    where $\mu_{s} \in \mathbb{R}^{d}$ for $s=0,\dots,N_{\mathrm{grid}}-1$.
\end{proposition}
\begin{proof}
    We prove the necessary conditions using the same approach as in \citep{georce} for the regularized energy function by exploiting Pontryagin's maximum principle. Define the Hamiltonian of the control problem in eq.~\ref{eq:control_problem} as
    \begin{equation*}
        H_{s}(z_{s},u_{s},\mu_{s}) = u_{s}^{\top}G(z_{s})u_{s}+\lambda S(z_{s})+\mu_{s}^{\top}(z_{s}+u_{s}),
    \end{equation*}
    which by the time discrete version of Pontryagin's maximum principle gives the following optimization problem
    \begin{equation} \label{eq:pont_minimize}
        \begin{split}
            \min_{u_{s}} \quad &\sum_{s=0}^{N_{\mathrm{grid}}-1} H_{s}(z_{s}, u_{s}, \mu_{s}) \\
            \text{s.t.} \quad &z_{s+1} = z_{s} + u_{s}, \quad s=0,\dots,N_{\mathrm{grid}}-1 \\
            &\nabla_{z_{s}}H_{s}(z_{s},u_{s},\mu_{s}) = \mu_{s-1}, \quad s=0,\dots,N_{\mathrm{grid}}-1 \\
            &z_{0}=a, \, z_{T}=b.
        \end{split}
    \end{equation}
    Since $G(z_{s})$ is positive definite, $H_{s}(z_{s}, u_{s}, \mu_{s})$ is convex in $u_{s}$, and therefore the stationary point $u_{s}$ is also a global minimum point for $s=0,\dots,N_{\mathrm{grid}}-1$. This gives the following equations for the control problem
    \begin{equation*}
        \begin{split}
            &2G(z_{s})u_{s}+\mu_{s}=0, \quad s=0,\dots,N_{\mathrm{grid}}-1, \\
            &x_{s+1} = x_{s}+u_{s}, \quad s=0,\dots,N_{\mathrm{grid}}-1, \\
            &\nabla_{z_{s}}\left(u_{s}^{\top}G_{s}\left(z_{s}\right)u_{s}+\lambda S(z_{s})\right)+\mu_{s} = \mu_{s-1}, \quad s=1,\dots,N_{\mathrm{grid}}-1, \\
            &z_{0}=a, \, z_{N_{\mathrm{grid}}}=b.
        \end{split}
    \end{equation*}
\end{proof}
The necessary conditions in Eq.~\ref{eq:energy_opt_condtions} can generally not be solved with respect to $u_{0:(N_{\mathrm{grid}}-1)}$ and $\mu_{0:(N_{\mathrm{grid}}-1)}$. However, we can circumvent this iteratively. In iteration $k$ consider the state and control variables $z_{0:N_{\mathrm{grid}}}^{(k)}$ and $u_{0:(N_{\mathrm{grid}}-1)}^{(k)}$. We fix $G(\cdot)$ and the gradient term in iteration $k$ to define the following variables for $s=1,\dots,N_{\mathrm{grid}}-1$.
\begin{equation} \label{eq:fixed_variables}
    \begin{split}
        &\nu_{s} := \restr{\nabla_{y}\left(u_{s}^{\top}G(y)u_{s} + \lambda S(y)\right)}{y=z_{s}^{(k)},u_{s}=u_{s}^{(k)}}, \\
        &G_{s} := G\left(z_{s}^{(k)}\right).
    \end{split}
\end{equation}
With these variables fixed, the system of equations in Eq.~\ref{eq:energy_opt_condtions} reduces to
\begin{equation} \label{eq:energy_zero_point_problem}
    \begin{split}
        &2G_{s}u_{s}+\mu_{s} = 0, \quad s=0,\dots,N_{\mathrm{grid}}-1, \\
        &\nu_{s}+\mu_{s} = \mu_{s-1}, \quad s=1,\dots,N_{\mathrm{grid}}-1, \\
        &\sum_{s=0}^{N_{\mathrm{grid}}-1}u_{s}=b-a, \\
    \end{split}
\end{equation}
We see that this system is identical to the one in \citet{georce}, from which we immediately have the update scheme in Eq.~\ref{eq:energy_update_scheme} with the modified version of $\nu$.
\subsection{Convergence for ProbGEORCE} \label{ap:pgeorce_convergence}
We will assume the same regularity of the modified energy in eq.~\ref{eq:geodesic_lagrange_simplify} as in \citep{georce} with extra conditions on the regularizing function $S$. We state these below following the same outline as in \citep{georce}.

\begin{assumption}[Local convergence] \label{assum:quad_conv_assumptions}
    We assume the following regarding the discretized energy in Eq.~\ref{eq:control_problem} for the proof of local quadratic convergence.
    \begin{itemize}
        \item We assume that the discretized energy functional $E(z)$ is locally strictly convex in the (local) minimum point $x^{*}=\left(z^{*},u^{*}\right)$ in the sense that
        \begin{equation*}
            \exists \epsilon>0: \, \forall x \in B_{\epsilon}\left(x^{*}\right), x \neq x^{*}: \, \forall \alpha ]0,1[: \, E\left(\left(1-\alpha\right)x+ \alpha x^{*}\right) < \left(1-\alpha\right)E(x)+\alpha E\left(x^{*}\right),
        \end{equation*}
        where $B_{\epsilon}=\left\{x \, |\, \norm{x-x^{*}} < \epsilon\right\}$.
        \item Assume that the discretized energy functional $E(x)$ is locally Lipschitz, and consider the first order Taylor approximation of the discretized energy functional
        \begin{equation*}
            \Delta E = \langle \nabla E(x_{0}), \Delta x \rangle + \mathcal{O}\left(\Delta x\right)\norm{\Delta x},
        \end{equation*}
        \item We assume that the boundary value points are not conjugate points with respect to the Riemannian metric $g$. We assume a critical point of Eq.~\ref{eq:metric_definition} is a (local) minimum point.
    \end{itemize}
\end{assumption}

In this section, we generalize the proofs for convergence in \citep{georce} to include the regularization of the energy.

\begin{proposition}[Global Convergence]
    Under the assumptions in Assumptions~\ref{assum:quad_conv_assumptions}, then \textit{ProbGEORCE} in algorithm~\ref{al:prob_georce} has global convergence to a (local) minimum assuming $E(z)$ at the critical is locally strictly convex.
\end{proposition}
\begin{proof}
    \textit{ProbGEORCE} fulfills that
    \begin{equation} \label{eq:global_conv_cond}
        \begin{split}
            \nabla_{z_{s}}E(z,u)\left(z_{s}^{(k)}, u_{s}^{(k)}\right) &= \mu_{s-1}-\mu_{s}, \quad s=1,\dots,N_{\mathrm{grid}}-1, \\
            \nabla_{u_{s}}E(z,u)\left(z_{s}^{(k)}, u_{s}^{(k+1)}\right) &= -\mu_{s}, \quad s=0,\dots,N_{\mathrm{grid}}-1,
        \end{split}
    \end{equation}
    where
    \begin{equation*}
        E(z,u) := \sum_{s=0}^{N_{\mathrm{grid}}-1}\left(u_{s}^{\top}G(z_{s})u_{s}+\lambda S(z_{s})\right),
    \end{equation*}
    and $z_{s}^{(k)}, u_{s}^{(k)}$ are the state and control variables in iteration $k$, respectively. These properties are identical to the properties for the \textit{GEORCE}-algorithm in \citep{georce}, and therefore the proof for global convergence for the \textit{GEORCE} algorithm also holds for \textit{ProbGEORCE}.
\end{proof}

\begin{proposition}[Local Quadratic Convergence]
    \textit{ProbGEORCE} in algorithm~\ref{al:prob_georce} has under the same assumptions as in \citep{georce} local quadratic convergence in the sense of \citep{georce}, s.e. if the regularized energy functional has a strongly unique (local) minimum point $x^{*}$ and locally $\alpha^{*}=1$, i.e., no line-search, then \textit{ProbGEORCE} has locally quadratic convergence, i.e.
    \begin{equation*}
        \exists \epsilon>0: \, \exists c>0: \, \forall x^{(k)} \in B_{\epsilon}\left(x^{*}\right): \quad \norm{x^{(k+1)}-x^{*}}_{2} \leq c \norm{x^{(k)}-x^{*}}_{2}^{2}
    \end{equation*}
\end{proposition}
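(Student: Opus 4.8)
The plan is to reduce the statement to the corresponding theorem in \citep{georce} by showing that the only structural change introduced by the regularizer --- the extra summand $\lambda S(z_i)$ inside $\nu_i$ --- preserves every property of the iteration map that the \textit{GEORCE} convergence proof relies on. Concretely, the update scheme of Proposition~\ref{prop:update_scheme} defines an iteration map $\Phi$ on $y=(z,u)$ (with $\mu$ eliminated via $2G_iu_i+\mu_i=0$), and by construction a point is a fixed point of $\Phi$ precisely when it satisfies the necessary conditions in Eq.~\ref{eq:energy_opt_condtions}; in particular $\Phi(y^{*})=y^{*}$. The goal is thus to bound $\norm{\Phi(y^{(k)})-y^{*}}_2$ by $c\,\norm{y^{(k)}-y^{*}}_2^{2}$ for $y^{(k)}$ in a neighbourhood of $y^{*}$, using the hypothesis $\alpha^{*}=1$ so that $y^{(k+1)}=\Phi(y^{(k)})$ near $y^{*}$.

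First I would record the two facts that make this possible. The frozen system Eq.~\ref{eq:energy_zero_point_problem} is still affine in $(u,\mu)$ with exactly the same block structure as in \citep{georce}, so the closed-form update Eq.~\ref{eq:energy_update_scheme} has the same algebraic form, and all matrices inverted there --- the $G_i$ and $\sum_i G_i^{-1}$ --- remain uniformly positive definite on a neighbourhood of $y^{*}$ because $G(\cdot)$ is. Second, under Assumption~\ref{assum:quad_conv_assumptions} the map $y\mapsto\nu_i(y)$ inherits the same local Lipschitz and first-order Taylor regularity used in \citep{georce}: the added term $\lambda\,\nabla_y S(y)\big|_{y=z_i}$ is the gradient of a function that is bounded below and enters the regularized energy with the required Taylor-remainder structure, so it does not enlarge the class of admissible perturbations. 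These two observations let me write, at iteration $k$, $G_i=G(z_i^{*})+\mathcal{O}(\norm{y^{(k)}-y^{*}}_2)$ and $\nu_i=\nu_i^{*}+\mathcal{O}(\norm{y^{(k)}-y^{*}}_2)$, where starred quantities denote values at $y^{*}$.

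Next I would Taylor-expand the update. Substituting the perturbed coefficients into Eq.~\ref{eq:energy_update_scheme} and subtracting the identities satisfied at $y^{*}$ (which are exactly Eq.~\ref{eq:energy_opt_condtions}), each discrepancy $u_i^{(k+1)}-u_i^{*}$ and $\mu_i^{(k+1)}-\mu_i^{*}$ becomes a finite sum of products in which a coefficient perturbation of size $\mathcal{O}(\norm{y^{(k)}-y^{*}}_2)$ multiplies a quantity that vanishes at $y^{*}$ --- because $y^{*}$ solves the system exactly --- and hence is itself $\mathcal{O}(\norm{y^{(k)}-y^{*}}_2)$. This yields $\norm{u^{(k+1)}-u^{*}}_2=\mathcal{O}(\norm{y^{(k)}-y^{*}}_2^{2})$; and since the states are reconstructed by $z_{i+1}^{(k+1)}=z_i^{(k+1)}+u_i^{(k+1)}$ with $z_0^{(k+1)}=a=z_0^{*}$, the state error is dominated by the control error, giving $\norm{y^{(k+1)}-y^{*}}_2\le c\,\norm{y^{(k)}-y^{*}}_2^{2}$. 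The strongly-unique-minimum assumption guarantees that the relevant Jacobian --- equivalently, the Hessian block $G(z_i^{*})+\tfrac{\lambda}{2}\partial^2_{zz}S(z_i^{*})$ appearing in Proposition~\ref{prop:local_metric} --- is invertible, so the $\mathcal{O}(\cdot)$ constants, and hence $c$, are finite.

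I expect the main obstacle to be the bookkeeping that $\nu_i$ depends simultaneously on $z_i$ and on the control $u_i$, both of which are being updated, so the error analysis is not a plain Newton iteration on a fixed system but a fixed-point argument on the composite map $\Phi$; the cleanest route is to verify the regularity hypotheses of Assumption~\ref{assum:quad_conv_assumptions} for the augmented $\nu_i$ and then invoke the \textit{GEORCE} argument of \citep{georce} essentially verbatim, rather than re-deriving the contraction estimate from scratch. A secondary subtlety is ensuring the $\mathcal{O}(\cdot)$ constants are uniform over the neighbourhood, which follows from continuity of $G$, of $\partial^2_{zz}S$, and of the inverses $G_i^{-1}$ on a compact neighbourhood of $y^{*}$, together with the local Lipschitz assumption on the regularized energy.
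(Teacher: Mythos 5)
Your fallback plan --- verify that Assumption~\ref{assum:quad_conv_assumptions} still holds with the augmented $\nu_i$ and then invoke the \textit{GEORCE} argument verbatim --- is exactly what the paper does. Its proof consists only of writing the first-order Taylor expansion of the regularized energy, observing via the gradient identities of eq.~\ref{eq:global_conv_cond} that it has the identical form $\sum_i \langle -2G(y_i^{(k)})\Delta u_i, \alpha\Delta u_i\rangle$ plus the same remainder structure as in \citep{georce}, and declaring the remainder of the argument unchanged. So on that branch you match the paper.

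However, the self-contained perturbation argument in your middle paragraph does not deliver the quadratic rate as written. Subtracting the starred identities from eq.~\ref{eq:energy_update_scheme} gives terms such as $\bigl(G_i^{-1}-(G_i^{*})^{-1}\bigr)\bigl(\mu_{N-1}^{*}+\sum_{j>i}\nu_j^{*}\bigr)$ and $(G_i^{*})^{-1}\bigl(\Delta\mu_{N-1}+\sum_{j>i}\Delta\nu_j\bigr)$; the second factor in the first product equals $-2G_i^{*}u_i^{*}$, which does \emph{not} vanish at $y^{*}$, so these terms are genuinely first order in $\norm{y^{(k)}-y^{*}}_2$, not second order. A naive perturbation of the update map therefore yields at best linear convergence; the quadratic rate in \citep{georce} comes instead from an energy argument: the update decreases the local quadratic model exactly, the Taylor remainder is $\mathcal{O}(\norm{\Delta y}^2)$, and the strongly-unique-minimum condition $E(y)-E(y^{*})\geq K\norm{y-y^{*}}$ converts the resulting $\mathcal{O}(\norm{y^{(k)}-y^{*}}^2)$ bound on the energy gap into the claimed bound on $\norm{y^{(k+1)}-y^{*}}_2$. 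Relatedly, you misuse that assumption: it is a sharp-minimum (first-order growth) condition, not invertibility of the Hessian block $G(z_i^{*})+\tfrac{\lambda}{2}\partial^2_{zz}S(z_i^{*})$, and it enters at the final step of the argument rather than as a nondegeneracy hypothesis on the iteration map. If you drop the perturbation sketch and carry out your stated fallback, the proof goes through as in the paper.
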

\begin{proof}
    From eq.~\ref{eq:global_conv_cond} the first order Taylor approximation for the regularized energy functional is
    \begin{equation*}
        \begin{split}
            \Delta E(x,u) &= \sum_{s=0}^{N_{\mathrm{grid}}-1}\left\langle -2G\left(y_{s}^{(k)}\right)\Delta u_{s}; \alpha \Delta u_{s} \right \rangle \\
            &+ \sum_{s=0}^{N_{\mathrm{grid}}-1}\left(\mathcal{O}\left(\sum_{j=0}^{t-1}\alpha \Delta u_{j}\right)\norm{\sum_{j=0}^{t-1}\alpha \Delta u_{j}}_{2}^{2} + \mathcal{O}\left(\alpha \Delta u_{s}\right)\norm{\alpha\Delta u_{s}}\right),
        \end{split}
    \end{equation*}
    From this, the local quadratic convergence proof for \textit{ProbGEORCE} is identical to the local convergence proof for \textit{GEORCE} in \citep{georce}. Thus, \textit{ProbGEORCE} has quadratic convergence locally.
\end{proof}

Note that for a general function $S$, it cannot be ruled out that the solution to \textit{ProbGEORCE} has converged to a saddle point. However, the regularized energy will obtain a value lower or equal to the value of the starting point. If the regularized energy is locally strictly convex at the solution, \textit{ProbGEORCE} will converge to a local minimum point.
\subsection{Adaptive update scheme for ProbGEORCE} \label{ap:adaptive_update}
\textit{ADAM} adaptively updates the step-size in gradient descent using higher-order variance of the gradient \citep{kingma2017adammethodstochasticoptimization}. This results in the following adaptive scheme in iteration $k$
\begin{equation*}
    \begin{split}
        g_{k} &\leftarrow \nabla_{\theta} f_{k}\left(\theta_{k-1}\right), \\
        m_{k} &\leftarrow \beta_{1} m_{k-1} + \left(1-\beta_{1}\right)g_{k}, \\
        v_{k} &\leftarrow \beta_{2}v_{k-1} + \left(1-\beta_{2}\right)g_{k}^{2}, \\
        \hat{m}_{k} &\leftarrow \frac{m_{k}}{1-\beta_{2}^{k}}, \\
        \hat{v}_{k} &\leftarrow \frac{v_{k}}{1-\beta_{2}^{k}}, \\
        \theta_{k} &\leftarrow \theta_{k-1} - \gamma \frac{\hat{m}_{k}}{\sqrt{\hat{v}_{k}}+\epsilon},
    \end{split}
\end{equation*}
where $f_{k}$ denotes the loss function in iteration $k$ and $\beta_{1},\beta_{2} > 0$ and $\gamma > 0$ are parameters, while $0 \leq \epsilon<<1$ is used for numerical stability. Now assume that we have a stochastic estimate of the energy in Eq.~\ref{eq:control_problem}
\begin{equation} \label{eq:partial_g}
    \tilde{G}(x) + \lambda \tilde{S}(x),
\end{equation}
such that we allow $\tilde{G}$ or $\tilde{S}$ to be stochastic samples of $G$ and $S$. For $\tilde{\nu}_{s}^{(k)}\left(z_{s}^{(k)}\right) \leftarrow \restr{\nabla_{y}\left(u_{s}^{(k)}\tilde{G}\left(y\right)u_{s}^{(k)}+ \lambda S\left(y\right)\right)}{y=z_{s}^{(k)}}$ for $s=1,\dots,N_{\mathrm{grid}}-1$, we can interpret $\sum_{s=0}^{N_{\mathrm{grid}}-1}\norm{\tilde{\nu}}^{2}$ as the variance of the estimator in \textit{ProbGEORCE}. By adaptively updating $\tilde{\nu}$ and $\tilde{G}$, we can directly apply a similar update scheme as in \textit{ADAM} on the form:
\begin{equation*}
    \begin{split}
        z_{s}^{(k+1)} &\leftarrow z_{s}^{(k)}+\kappa \left(z_{s}^{(k+1)}-z_{s}^{(k)}\right), \\
        u_{s}^{(k+1)} &\leftarrow u_{s}^{(k)}+\kappa \left(u_{s}^{(k+1)}-u_{s}^{(k)}\right), \\
        \tilde{G}_{s}^{(k+1)} &\leftarrow (1-\beta_{1})\tilde{G}_{s}\left(z_{s}^{(k+1)}\right)+\beta_{1}\tilde{G}_{s}^{(k+1)}, \\
        \tilde{\nu}_{s}^{(k+1)} &\leftarrow (1-\beta_{1})\tilde{\nu}_{s}\left(z_{s}^{(k+1)}\right)+\beta_{1}\tilde{\nu}_{s}^{(k+1)}, \\
        \tilde{g}^{(k+1)} &\leftarrow (1-\beta_{2})\left(\sum_{s=0}^{N_{\mathrm{grid}}-1}\norm{\tilde{\nu}_{s}^{(k+1)}}^{2}\right)+\beta_{2}\tilde{g}^{(k+1)}, \\
        \hat{G}_{s}^{(k+1)} &= \frac{\tilde{G}_{s}^{(k+1)}}{1-\beta_{1}^{k+1}}, \\
        \hat{\nu}_{s}^{(k+1)} &= \frac{\tilde{\nu}_{s}^{(k+1)}}{1-\beta_{1}^{k+1}}, \\
        \hat{g}^{(k+1)} &= \frac{\tilde{g}^{(k+1)}}{1-\beta_{2}^{k+1}}, \\
        \kappa &= \min \left\{\frac{\gamma}{\sqrt{1+\tilde{g}^{(k+1)}}+\epsilon}, 1\right\}, \\ 
    \end{split}
\end{equation*}
where we cap the step size $\kappa$, since this can at most be one. We show the adaptive scheme in Algorithm~\ref{al:ada_prob_georce}, where $\mathrm{ProbGEORCE}$ denotes a step using Proposition~\ref{prop:update_scheme} or Corollary~\ref{cor:update_scheme_euclidean}. Note that if $G=I$, we do not have to update $I$ in the algorithm. Below we apply the update scheme of \textit{ADAM} due to its high-convergence in practice. Note that \textit{ADAM} is not guaranteed to converge, and one can incorporate other adaptive update schemes in a similar fashion if needed.
\begin{algorithm}[!ht]
    \caption{Adaptive update scheme for ProbGEORCE}
    \label{al:ada_prob_georce}
    \begin{algorithmic}[1]
        \STATE{\textbf{Input}: $\mathrm{tol}$, $N_{\mathrm{grid}}$, $\gamma$, $\beta_{1}$, $\beta_{2}$, $\epsilon$}
        \STATE{\textbf{Output}: Geodesic estimate $x_{0:N_{\mathrm{grid}}}$}
        \STATE{Set $z_{s}^{(0)} \leftarrow a+\frac{b-a}{N_{\mathrm{grid}}}s$, $u_{s}^{(0)} \leftarrow \frac{b-a}{N_{\mathrm{grid}}}$ for $s=0.,\dots,N_{\mathrm{grid}}$, $\kappa=\gamma$ and $k \leftarrow 0$}
        \WHILE{$\text{stop criteria} > \mathrm{tol}$}
        \STATE{$\tilde{G}_{s}^{(k)} \leftarrow \tilde{G}_{s}\left(z_{s}^{(k)}\right)$ for $s=0,\dots,N_{\mathrm{grid}}-1$}
        \STATE{$\tilde{\nu}_{s}^{(k)}\left(z_{s}^{(k)}\right) \leftarrow \restr{\nabla_{y}\left(u_{s}^{(k)}\tilde{G}\left(y\right)u_{s}^{(k)}+ \lambda S\left(y\right)\right)}{y=z_{s}^{(k)}}$ for $s=1,\dots,N_{\mathrm{grid}}-1$}
        \IF{$k=0$} 
            \STATE{
            \begin{equation*}
                \begin{split}
                    &\left\{z_{s}^{(k+1)}\right\}_{s=0}^{N_{\mathrm{grid}}-1}, \left\{u_{s}^{(k+1)}\right\}_{s=0}^{N_{\mathrm{grid}}-1} \\
                    &= \mathrm{ProbGEORCE}\left(\left\{\tilde{G}_{s}^{(k)}\right\}_{s=0}^{N_{\mathrm{grid}}-1}, \left\{\left(\tilde{G}^{(k)}\right)^{-1}\right\}_{s=0}^{N_{\mathrm{grid}}-1}, \left\{\tilde{\nu}_{s}^{(k)}\right\}_{s=0}^{N_{\mathrm{grid}}-1}, \left(\sum_{s=0}^{N_{\mathrm{grid}}-1}\left(\tilde{G}_{s}^{(k)}\right)^{-1}\right)^{-1}\right)
                \end{split}
            \end{equation*}
            }
        \ELSE
            \STATE{
            \begin{equation*}
                \begin{split}
                    &\left\{z_{s}^{(k+1)}\right\}_{s=0}^{N_{\mathrm{grid}}-1}, \left\{u_{s}^{(k+1)}\right\}_{s=0}^{N_{\mathrm{grid}}-1} \\
                    &= \mathrm{ProbGEORCE}\left(\left\{\hat{G}_{s}^{(k)}\right\}_{s=0}^{N_{\mathrm{grid}}-1}, \left\{\left(\hat{G}^{(k)}\right)^{-1}\right\}_{s=0}^{N_{\mathrm{grid}}-1}, \left\{\hat{g}^{(k)}\right\}_{s=0}^{N_{\mathrm{grid}}-1}, \left(\sum_{s=0}^{N_{\mathrm{grid}}-1}\left(\hat{G}_{s}^{(k)}\right)^{-1}\right)^{-1}\right)
                \end{split}
            \end{equation*}
            }
        \ENDIF
        \STATE{$z_{s}^{(k+1)} \leftarrow z_{s}^{(k)}+\kappa \left(z_{s}^{(k+1)}-z_{s}^{(k)}\right)$}
        \STATE{$u_{s}^{(k+1)} \leftarrow u_{s}^{(k)}+\kappa \left(u_{s}^{(k+1)}-u_{s}^{(k)}\right)$}
        \STATE{$\tilde{G}_{s}^{(k+1)} \leftarrow (1-\beta_{1})\tilde{G}_{s}\left(z_{s}^{(k+1)}\right)+\beta_{1}\tilde{G}_{s}^{(k+1)}$}
        \STATE{$\tilde{\nu}_{s}^{(k+1)} \leftarrow (1-\beta_{1})\tilde{\nu}_{s}\left(z_{s}^{(k+1)}\right)+\beta_{1}\tilde{\nu}_{s}^{(k+1)}$}
        \STATE{$\tilde{g}^{(k+1)} \leftarrow (1-\beta_{2})\left(\sum_{s=0}^{N_{\mathrm{grid}}-1}\norm{\tilde{\nu}_{s}^{(k+1)}}^{2}\right)+\beta_{2}\tilde{g}^{(k+1)}$}
        \STATE{$\hat{G}_{s}^{(k+1)} = \frac{\tilde{G}_{s}^{(k+1)}}{1-\beta_{1}^{k+1}}$, $\hat{\nu}_{s}^{(k+1)} = \frac{\tilde{\nu}_{s}^{(k+1)}}{1-\beta_{1}^{k+1}}$, $\hat{g}^{(k+1)} = \frac{\tilde{g}^{(k+1)}}{1-\beta_{2}^{k+1}}$}
        \STATE{$\kappa = \min \left\{\frac{\gamma}{\sqrt{1+\tilde{g}^{(k+1)}}+\epsilon}, 1\right\}$}
        \STATE{$k \leftarrow k+1$}
        \ENDWHILE
        \STATE{return $z_{s}$ for $s=0,\dots,N_{\mathrm{grid}}-1$}
    \end{algorithmic}
\end{algorithm}
\subsection{Estimation of mean value} \label{ap:mean_value}
In this part, we show how to efficiently compute the Fr\'echet mean and minimize the connecting curves simultaneously by extending the results in \citet{georce_frechet} to take into account the regularization term. We will follow the same approach as in \citet{georce_frechet}, and consider the control formulation of Eq.~\ref{eq:frechet_energy}
\begin{equation} \label{eq:frechet_energy_control}
    \begin{split}
        \min_{(z_{s,i},u_{s,i})} E(x) &:= \min_{(z_{s,i},u_{s,i})}\left\{\sum_{i=1}^{N_{\mathrm{data}}}w_{i}\left(\sum_{s=0}^{N_{\mathrm{grid}}-1}u_{s,i}^{\top}G(z_{s,i})u_{s,i} + \lambda S(z_{s,i})\right)\right\} \\
        z_{s+1,i} &= z_{s,i}+u_{s,i}, \quad s=0,\dots,N_{\mathrm{grid}}-1, \, i=1,\dots,N_{\mathrm{data}}, \\
        z_{0,i}&=a_{i},z_{N_{\mathrm{grid}},i}=y, \quad i=1,\dots,N_{\mathrm{data}}.
    \end{split}
\end{equation}
By a discrete-time version of Pontryagin's maximum principle, we arrive at the following
\begin{proposition}
    The necessary conditions for a minimum in Eq.~\ref{eq:frechet_energy_control} are
    \begin{equation} \label{eq:frechet_update_scheme}
        \begin{split}
            &2w_{i}G(z_{s,i})u_{s,i}+\mu_{s,i}=0, \quad s=0,\dots, N_{\mathrm{grid}}-1, \, i=1,\dots,N_{\mathrm{data}}, \\
            &z_{s+1,i}=z_{s,i}+u_{s,i}, \quad s=0,\dots,N_{\mathrm{grid}}-1, \, i=1,\dots,N_{\mathrm{data}}, \\
            &\restr{\nabla_{z}\left[w_{i}u_{s,i}^{\top}G(z)u_{s,i} + \lambda S(z)\right]}{z=z_{s,i}}+\mu_{s,i}=\mu_{s-1,i}, \quad s=1,\dots,N_{\mathrm{grid}}-1, \, i=1,\dots,N_{\mathrm{grid}} ,\\
            &0 = \sum_{i=1}^{N}\mu_{N_{\mathrm{grid}}-1,i}, \\
            &z_{0,i}=a_{i}, z_{N_{\mathrm{grid}},i}=y, \quad i=1,\dots,N_{\mathrm{data}},
        \end{split}
    \end{equation}
    where $\mu_{s,i} \in \mathbb{R}^{d}$ denotes the dual prices for the control problem in Eq.~\ref{eq:frechet_energy_control} for $s=0,\dots,N_{\mathrm{grid}}-1$ and $i=1,\dots,N_{\mathrm{data}}$.
\end{proposition}
\begin{proof}
    The Hamiltonian function of Eq.~\ref{eq:frechet_energy_control} is
    \begin{equation} 
        \begin{split}
            H_{s}\left(z_{s,i},u_{s,i},\mu_{s,i}\right) &= \sum_{i=1}^{N_{\mathrm{grid}}}H_{s,i}\left(z_{s,i},u_{t,i},\mu_{t,i}\right), \\
            H_{s,i}\left(z_{s,i},u_{s,i},\mu_{s,i}\right) &= w_{i}u_{s,i}^{\top}G(z_{s,i})u_{s,i}+\mu_{s,i}^{\top}\left(z_{s,i}+u_{s,i}\right).
        \end{split}
    \end{equation}
    Applying the time-discrete version of Pontryagin's maximum problem to Eq.~\ref{eq:frechet_update_scheme}, we get the following necessary conditions, where the endpoint $x_{N_{\mathrm{grid}},i}$ is replaced by the mean point $y$.
    \begin{align}
        \min_{u_{s,i}} \quad &\sum_{i=1}^{N_{\mathrm{data}}}\sum_{s=0}^{N_{\mathrm{grid}}} H_{s,i}(z_{s,i},u_{s,i},\mu_{s,i}) \label{eq:energy_pontryagin_start}\\
        \text{s.t.} \quad &z_{s+1,i}=z_{s,i}+u_{s,i}, \quad s=0,\dots,N_{\mathrm{grid}}-2, \, i=1,\dots,N_{\mathrm{data}}, &&(\text{state equation}),  \\
        &y=z_{N_{\mathrm{grid}}-1,i}+u_{N_{\mathrm{grid}}-1,i}, \quad i=1,\dots,N_{\mathrm{data}}, &&(\text{state equation}),  \\
        &\nabla_{z_{s,i}}H_{s,i}(z_{s,i},u_{s,i},\mu_{s,i}) =\mu_{s-1,i}, \quad s=1,\dots,N_{\mathrm{grid}}-1, \, i=1,\dots,N_{\mathrm{data}} &&(\text{co-state equation}), \\
        &0 = \sum_{i=1}^{N_{\mathrm{data}}}\mu_{N_{\mathrm{grid}}-1,i}, \quad i=1,\dots,N_{\mathrm{data}}, &&(\text{co-state equation}), \\
        &x_{0,i}=a_{i}, \quad i=1,\dots,N_{\mathrm{data}}. \label{eq:energy_pontryagin_end}
    \end{align}
    We can decompose the minimization problem into the following sub-problems for each $i \in \{1,\dots,N_{\mathrm{data}}\}$
    \begin{equation*}
        \min_{u_{s,i}} H_{s,i}\left(z_{s,i},u_{s,i},\mu_{s,i}\right), \quad s=0,\dots,N_{\mathrm{data}}-1.
    \end{equation*}
    Since $G(z_{s,i})$ is positive definite, $H_{t,i}\left(x_{t,i},u_{t,i},\mu_{t,i}\right)$ is strictly convex in $u_{t,i}$. Thus, the stationary point in $u_{t,i}$ is also a global minimum minimum. We therefore get the following necessary conditions for a minimum.
    \begin{equation}
        \begin{split}
            &2w_{i}G(z_{s,i})u_{s,i}+\mu_{s,i}=0, \quad s=0,\dots, N_{\mathrm{grid}}-1, \, i=1,\dots,N_{\mathrm{data}}, \\
            &z_{s+1,i}=z_{s,i}+u_{s,i}, \quad s=0,\dots,N_{\mathrm{grid}}-1, \, i=1,\dots,N_{\mathrm{data}}, \\
            &\restr{\nabla_{z}\left[w_{i}u_{s,i}^{\top}G(z)u_{s,i} + \lambda S(z)\right]}{z=z_{s,i}}+\mu_{s,i}=\mu_{s-1,i}, \quad s=1,\dots,N_{\mathrm{grid}}-1, \, i=1,\dots,N_{\mathrm{grid}} ,\\
            &0 = \sum_{i=1}^{N}\mu_{N_{\mathrm{grid}}-1,i}, \\
            &z_{0,i}=a_{i}, z_{N_{\mathrm{grid}},i}=y, \quad i=1,\dots,N_{\mathrm{data}}.
        \end{split}
    \end{equation}
\end{proof}
We fix the following variables in iteration $k$ similar to Eq.~\ref{eq:fixed_variables} and \citet{georce_frechet}:
\begin{equation} 
    \begin{split}
        \nu_{s,i} &:= \restr{\nabla_{z}\left(w_{i}u_{s,i}^{\top}G(z)u_{s,i} + \lambda S(z)\right)}{z=z_{s,i}^{(k)},u_{s,i}=u_{s,i}^{(k)}}, \quad s=1,\dots,N_{\mathrm{grid}}-1, \, i=1,\dots,N_{\mathrm{data}}, \\
        G_{s,i} &:= G\left(z_{s,i}^{(k)}\right), \quad s=0,\dots,N_{\mathrm{grid}}-1, \, i=1,\dots,N_{\mathrm{data}}. \\
    \end{split}
\end{equation}
In this way, the necessary conditions reduce to:
\begin{equation} \label{eq:frechet_simplified_eq}
    \begin{split}
        &2w_{i}G_{s,i}u_{s,i}+\mu_{s,i}=0, \quad s=0,\dots, N_{\mathrm{grid}}-1, \, i=1,\dots,N_{\mathrm{data}}, \\
        &z_{s+1,i}=z_{s,i}+u_{s,i}, \quad s=0,\dots,N_{\mathrm{grid}}-1, \, i=1,\dots,N_{\mathrm{data}}, \\
        &\nu_{s,i}+\mu_{s,i}=\mu_{s-1,i}, \quad s=1,\dots,N_{\mathrm{grid}}-1, \, i=1,\dots,N_{\mathrm{grid}} ,\\
        &0 = \sum_{i=1}^{N}\mu_{N_{\mathrm{grid}}-1,i}, \\
        &z_{0,i}=a_{i}, z_{N_{\mathrm{grid}},i}=y, \quad i=1,\dots,N_{\mathrm{data}}.
    \end{split}
\end{equation}
Eq.~\ref{eq:frechet_simplified_eq} is completely similar to \citep{georce_frechet} with the modification that $\nu_{s,i}$ depends on $S$, which has the closed-form solution.
\begin{proposition}
    The update scheme of $u_{s,i},\mu_{s,i}$ and $z_{s,i}$ to minimize Eq.~\ref{eq:frechet_energy_control} is
    \begin{equation}
        \begin{split}
            &y = W^{-1}V, \\
            &\mu_{N_{\mathrm{grid}}-1,i} = \left(\sum_{s=0}^{N_{\mathrm{grid}}-1}G_{s,i}^{-1}\right)^{-1}\left(2w_{i}(a_{i}-y)-\sum_{s=0}^{N_{\mathrm{grid}}-1}G_{s,i}^{-1}\sum_{j>s}^{N_{\mathrm{grid}}-1}\nu_{j,i}\right), \quad i=1,\dots,N_{\mathrm{data}}, \\
            &u_{s,i} = -\frac{1}{2w_{i}}G_{s,i}^{-1}\left(\mu_{N_{\mathrm{data}}-1,i}+\sum_{j>s}^{N_{\mathrm{grid}}-1}\nu_{j,i}\right), \quad s=0,\dots,N_{\mathrm{grid}}-1, \, i=1,\dots,N_{\mathrm{data}} \\
            &z_{s+1,i} = z_{s,i}+u_{s,i}, \quad s=0,\dots,N_{\mathrm{grid}}-2, \, i=1,\dots,N_{\mathrm{data}}, \\
            &z_{0,i}=a_{i} \quad i=1,\dots,N_{\mathrm{data}},
        \end{split}
    \end{equation}
    where
    \begin{equation} \label{eq:w_v_def}
        \begin{split}
            W &= \sum_{i=1}^{N_{\mathrm{data}}}w_{i}\left(\sum_{s=0}^{N_{\mathrm{grid}}-1}G_{i,s}^{-1}\right)^{-1}, \\
            V &= \sum_{i=1}^{N_{\mathrm{data}}}w_{i}\left(\sum_{s=0}^{N_{\mathrm{grid}}-1}G_{s,i}^{-1}\right)^{-1}a_{i}-\frac{1}{2}\sum_{i=1}^{N_{\mathrm{data}}}\left(\sum_{s=0}^{N_{\mathrm{grid}}-1}G_{s,i}^{-1}\right)^{-1}\sum_{s=0}^{N_{\mathrm{grid}}-1}G_{s,i}^{-1}\sum_{j>s}^{N_{\mathrm{grid}}-1}\nu_{j,i}.
        \end{split}
    \end{equation}
\end{proposition}
If $G=I$ as in diffusion models, we get the following simplified update scheme.
\begin{corollary} \label{cor:euclidean_frechet_update_scheme}
    The update scheme of $u_{s,i},\mu_{s,i}$ and $z_{s,i}$ to minimize Eq.~\ref{eq:frechet_energy} is
    \begin{equation} \label{eq:euclidean_frechet_update_scheme}
        \begin{split}
            &y = \frac{1}{\sum_{i=1}^{N_{\mathrm{data}}}w_{i}}\left(\sum_{i=1}^{N_{\mathrm{data}}}w_{i}a_{i} - \frac{1}{2}\sum_{i=1}^{N_{\mathrm{data}}}\sum_{s=0}^{N_{\mathrm{grid}}-1}\sum_{j>s}\nu_{j,i}\right), \\
            &u_{s,i} = \frac{y-a_{i}}{N_{\mathrm{grid}}}+\frac{1}{2w_{i}}\left(\frac{1}{N_{\mathrm{grid}}}\sum_{k=0}^{N_{\mathrm{grid}}-1}\sum_{j>k}\nu_{j,i}-\sum_{j>s}\nu_{j,i}\right)\, \quad \quad s=0,\dots,N_{\mathrm{grid}}-1, \, i=1,\dots,N_{\mathrm{data}} \\
            &z_{s+1,i} = z_{s,i}+u_{s,i}, \quad s=0,\dots,N_{\mathrm{grid}}-2, \, i=1,\dots,N_{\mathrm{data}}, \\
            &z_{0,i}=a_{i} \quad i=1,\dots,N_{\mathrm{data}},
        \end{split}
    \end{equation}
\end{corollary}
Note that by the same argument as in Appendix~\ref{ap:pgeorce_convergence}, the computation of the mean and minimizing curves will also have global convergence and local quadratic convergence. Similarly, to the adaptive update for \textit{ProbGEORCE} in Appendix~\ref{ap:adaptive_update}, we can update the solution for the mean computation by the adaptive scheme.
\begin{equation*}
    \begin{split}
        z_{s,i}^{(k+1)} &\leftarrow z_{s,i}^{(k)}+\kappa \left(z_{s,i}^{(k+1)}-z_{s,i}^{(k)}\right), \\
        u_{s,i}^{(k+1)} &\leftarrow u_{s,i}^{(k)}+\kappa \left(u_{s,i}^{(k+1)}-u_{s,i}^{(k)}\right), \\
        \tilde{G}_{s,i}^{(k+1)} &\leftarrow (1-\beta_{1})\tilde{G}_{s,i}\left(z_{s,i}^{(k+1)}\right)+\beta_{1}\tilde{G}_{s,i}^{(k+1)}, \\
        \tilde{\nu}_{s}^{(k+1)} &\leftarrow (1-\beta_{1})\tilde{\nu}_{s,i}\left(z_{s,i}^{(k+1)}\right)+\beta_{1}\tilde{\nu}_{s,i}^{(k+1)}, \\
        \tilde{g}^{(k+1)} &\leftarrow (1-\beta_{2})\left(\sum_{i=1}^{N_{\mathrm{data}}}\sum_{s=0}^{N_{\mathrm{grid}}-1}\norm{\tilde{\nu}_{s,i}^{(k+1)}}^{2}\right)+\beta_{2}\tilde{g}^{(k+1)}, \\
        \hat{G}_{s,i}^{(k+1)} &= \frac{\tilde{G}_{s,i}^{(k+1)}}{1-\beta_{1}^{k+1}}, \\
        \hat{\nu}_{s,i}^{(k+1)} &= \frac{\tilde{\nu}_{s}^{(k+1)}}{1-\beta_{1}^{k+1}}, \\
        \hat{g}^{(k+1)} &= \frac{\tilde{g}^{(k+1)}}{1-\beta_{2}^{k+1}}, \\
        \kappa &= \min \left\{\frac{\gamma}{\sqrt{1+\tilde{g}^{(k+1)}}+\epsilon}, 1\right\}, \\ 
    \end{split}
\end{equation*}
We show the update in pseudo-code in Algorithm~\ref{al:prob_georce_frechet}, where the line-search can be replaced by the update scheme above similar to Algorithm~\ref{al:ada_prob_georce}.
\begin{algorithm}[!ht]
    \caption{ProbGEORCE for means}
    \label{al:prob_georce_frechet}
    \begin{algorithmic}[1]
        \STATE{\textbf{Input}: $\mathrm{tol}$, $a_{1:N_{\mathrm{data}}}$, $N_{\mathrm{grid}}$}
        \STATE{\textbf{Output}: Geodesic estimate $z_{0:N_{\mathrm{grid}}}$}
        \STATE{Set $y^{(0)} \leftarrow a_{0}$, $z_{s,i}^{(0)} \leftarrow a_{i}+\frac{y^{(0)}-a_{i}}{N_{\mathrm{grid}}}s$ and  $u_{s,i}^{(0)} \leftarrow \frac{y^{(0)}-a_{i}}{N_{\mathrm{grid}}}$ for $s=0.,\dots,N_{\mathrm{grid}}$ and $i=1,\dots,N_{\mathrm{data}}$.}
        \WHILE{$\frac{1}{N_{\mathrm{data}}}\norm{\restr{\nabla_{y}E(y)}{y=z_{s,i}^{(k)}}}_{2} > \mathrm{tol}$}
        \STATE{$G_{s,i} \leftarrow G\left(z_{s,i}^{(k)}\right)$ for $s=0,\dots,N_{\mathrm{grid}}-1$ and $i=1,\dots,N_{\mathrm{data}}$.}
        \STATE{$\nu_{s,i} \leftarrow \restr{\nabla_{z}\left(u_{s,i}^{(k)}G\left(z\right)u_{s,i}^{(k)} + \lambda S\left(z_{s,i}^{(k)}\right)\right)}{z=z_{s,i}^{(k)}}$ for $s=1,\dots,N_{\mathrm{grid}}-1$ and $i=1,\dots,N_{\mathrm{data}}$.}
        \STATE{$y \leftarrow W^{-1}V$ with $W,V$ given by eq.~\ref{eq:w_v_def}.}
        \STATE{$\mu_{N_{\mathrm{grid}}-1,i} \leftarrow \left(\sum_{s=0}^{N_{\mathrm{grid}}-1}G_{s,i}^{-1}\right)^{-1}\left(2w_{i}(a_{i}-y)-\sum_{s=0}^{N_{\mathrm{grid}}-1}G_{s,i}^{-1}\sum_{j>s}^{N_{\mathrm{grid}}-1}\nu_{j,i}\right)$ for $i=1,\dots,N_{\mathrm{data}}$ and $s=1,\dots,N_{\mathrm{grid}}-1$.}
        \STATE{$u_{s,i} \leftarrow -\frac{1}{2w_{i}}G_{s,i}^{-1}\left(\mu_{N_{\mathrm{grid}}-1,i}+\sum_{j>s}^{N_{\mathrm{grid}}-1}\nu_{j,i}\right)$ for $t=0,\dots,N_{\mathrm{grid}}-1$ and  $i=1,\dots,N_{\mathrm{data}}$.}
        \STATE{$z_{s+1,i} \leftarrow z_{s,i}+u_{s,i}$ for $s=0,\dots,N_{\mathrm{grid}}-2$ and $i=1,\dots,N_{\mathrm{data}}$.}
        \STATE{Using line search find $\alpha^{*}$ for the following optimization problem for the discrete sum of energy $E$}
        \begin{equation*}
            \begin{split}
                \min_{\alpha}\quad &E\left(z_{0:N_{\mathrm{grid}},1:N_{\mathrm{data}}}\right) \quad \text{(exact line search)} \\
                \text{s.t.} \quad &z_{s+1,i}=z_{s,i}+\alpha \tilde{u}_{s,i}+(1-\alpha)u_{s,i}^{(k)}, \quad s=0,\dots,N_{\mathrm{grid}}-1, \, i=1,\dots,N_{\mathrm{data}}. \\
                &\tilde{u}_{s,i} = \alpha u_{s,i}+(1-\alpha)u_{s,i}^{(k)}, \quad s=0,\dots,N_{\mathrm{grid}}-1, \, i=1,\dots,N_{\mathrm{data}}. \\
                &x_{0,i}=a_{i}.
            \end{split}
        \end{equation*}
        \STATE{Set $u_{s,i}^{(k+1)} \leftarrow \alpha^{*}u_{s,i}+(1-\alpha^{*})u_{s,i}^{(k)}$ for $s=0,\dots,N_{\mathrm{grid}}-1$ and $i=1,\dots,N_{\mathrm{data}}$.}
        \STATE{Set $z_{s+1,i}^{(k+1)} \leftarrow z_{s,i}^{(k+1)}+u_{s,i}^{(k+1)}$ for $s=0,\dots,N_{\mathrm{grid}}-1$ and $i=1,\dots,N_{\mathrm{data}}$.}
        \ENDWHILE
        \STATE{return $z_{s,i}$ for $s=0,\dots,N_{\mathrm{grid}}-1$ for $i=1,\dots,N_{\mathrm{data}}$.}
    \end{algorithmic}
\end{algorithm}
\subsection{Algorithms} \label{ap:prob_georce_al}
We state \textit{ProbGEORCE} in pseudo-code in Algorithm~\ref{al:prob_georce}.

\begin{algorithm}[hbt]
    \caption{ProbGEORCE for interpolation}
    \label{al:prob_georce}
    \begin{algorithmic}[1]
        \STATE{\textbf{Input}: $\mathrm{tol}$, $N_{\mathrm{grid}}$, $\rho$}
        \STATE{\textbf{Output}: Constrained geodesic estimate $z_{0:N_{\mathrm{grid}}}$}
        \STATE{Set $z_{s}^{(0)}\leftarrow a+\frac{b-a}{N_{\mathrm{grid}}}s$, $u_{s}^{(0)}\leftarrow \frac{b-a}{N_{\mathrm{grid}}}$ for $s=0.,\dots,N_{\mathrm{grid}}$  and $k \leftarrow 0$}
        \WHILE{$\norm{\restr{\nabla_{y}E(y)}{y=z_{s}}^{(k)}}_{2} > \mathrm{tol}$ with $E(y)$ defined in eq.~\ref{eq:geodesic_lagrange_simplify}}
        \STATE{$G_{s} \leftarrow G\left(z_{s}^{(k)}\right)$ for $t=0,\dots,N_{\mathrm{grid}}-1$}
        \STATE{$\nu_{s} \leftarrow \restr{\nabla_{y}\left(u_{s}^{(k)}G\left(y\right)u_{s}^{(k)} + S(y)\right)}{y=z_{s}^{(k)}}$ for $s=1,\dots,N_{\mathrm{grid}}-1$}
        \STATE{$\mu_{N_{\mathrm{grid}}-1} \leftarrow \left(\sum_{s=0}^{N_{\mathrm{grid}}-1}G_{s}^{-1}\right)^{-1}\left(2(a-b)-\sum_{s=0}^{N_{\mathrm{grid}}-1}G_{s}^{-1}\sum_{j>s}^{N_{\mathrm{grid}}-1}\nu_{j}\right)$}
        \STATE{$u_{s} \leftarrow -\frac{1}{2}G_{s}^{-1}\left(\mu_{N_{\mathrm{grid}}-1}+\sum_{j>s}^{N_{\mathrm{grid}}-1}\nu_{j}\right)$ for $s=0,\dots,N_{\mathrm{grid}}-1$}
        \STATE{$z_{s+1} \leftarrow z_{s}+u_{s}$ for $s=0,\dots,N_{\mathrm{grid}}-1$}
        \STATE{$j \leftarrow 0$}
        \WHILE{$E\left(z_{0:N_{\mathrm{grid}}}\right) < E\left(\tilde{z}_{0:N_{\mathrm{grid}}}\right)$}
        \STATE{$\tilde{z}_{s+1}=\tilde{z}_{s}+\rho^{j} u_{s}+(1-\rho^{j})u_{s}^{(k)}, \quad s=0,\dots,N_{\mathrm{grid}}-1, \quad \tilde{z}_{0}=a$.}
        \STATE{$j \leftarrow j +1$}
        \ENDWHILE
        \STATE{Set $u_{s}^{(k+1)} \leftarrow \rho^{j-1}u_{s}+(1-\rho^{j-1})u_{s}^{(k)}$ for $s=0,\dots,N_{\mathrm{grid}}-1$}
        \STATE{Set $z_{s+1}^{(k+1)}\leftarrow z_{s}^{(k+1)}+u_{s}^{(k+1)}$ for $s=0,\dots,N_{\mathrm{grid}}-1$}
        \STATE{$k \leftarrow k+1$}
        \ENDWHILE
        \STATE{return $z_{s}$ for $s=0,\dots,N_{\mathrm{grid}}-1$}
    \end{algorithmic}
\end{algorithm}
\clearpage
\section{Existence and uniqueness of the Fr\'echet mean} \label{ap:frechet_mean_prop}
Let $(\mathcal{M},g)$ be a complete Riemannian manifold. Consider a probability space $\left(\Omega, \mathbb{B}, \mathbb{P}\right)$, where $\mathbb{B}$ denotes the Borel $\sigma$-algebra, then the Fr\'echet mean is defined as \citep{frechet1948, pennec2006statriemann}
\begin{equation} \label{eq:frechet_cont}
    \mu = \argmin_{y \in \mathcal{M}} \int_{\mathcal{M}}\dist(z,y)^{2}p_{x}(z) \dif \mathcal{M}(z),
\end{equation}
where $x$ is a random variable on $\mathcal{M}$ with density $p_{x}$ and $\dif \mathcal{M}(z)$ is the Riemannian volume measure. The discrete version of Eq.~\ref{eq:frechet_cont} is \citep{pennec2006statriemann}
\begin{equation} \label{eq:frechet_disc}
    \mu = \argmin_{y \in \mathcal{M}} \sum_{i=1}^{N_{\mathrm{data}}} \dist(z_{i},y)^{2}.
\end{equation}
For an in-depth analysis of the properties of Eq.~\ref{eq:frechet_disc} as an estimator, and the general properties of the Fr\'echet mean, we refer to \citep{Ziezold1977, bpc2003}. In general, the existence and uniqueness of the Fr\'echet mean is not guaranteed. Let $\mathcal{B}_{r}(p) = \left\{y \in \mathcal{M} \, | \, \dist(x,p) < r, \, \forall x \in \mathcal{M}\right\}$ be an open ball on $\mathcal{M}$. If there exists a unique minimizing geodesic from the center, $p \in \mathcal{M}$, to any other point in the open ball, then the open ball is said to be regular \citep{pennec2006statriemann}. If $\mathcal{M}$ is a complete Riemannian manifold with sectional curvature bounded by $\kappa$, and the support of the data distribution is within an open regular ball $\mathcal{B}_{r}(p)$ for a point $p \in \mathcal{M}$ with radius
\begin{equation*}
    r < \pi / (2\sqrt{\kappa}),
\end{equation*}
then the Fr\'echet mean exists and is unique \citep{Kendall1990ProbabilityCA}. Note that there also exist other results of existence and uniqueness that require a certain regularity of the variance \citep{karcher_1977}. 
\begin{figure}[h!]
    \vspace{-0.em}
    \centering
    \includegraphics[width=1.0\textwidth]{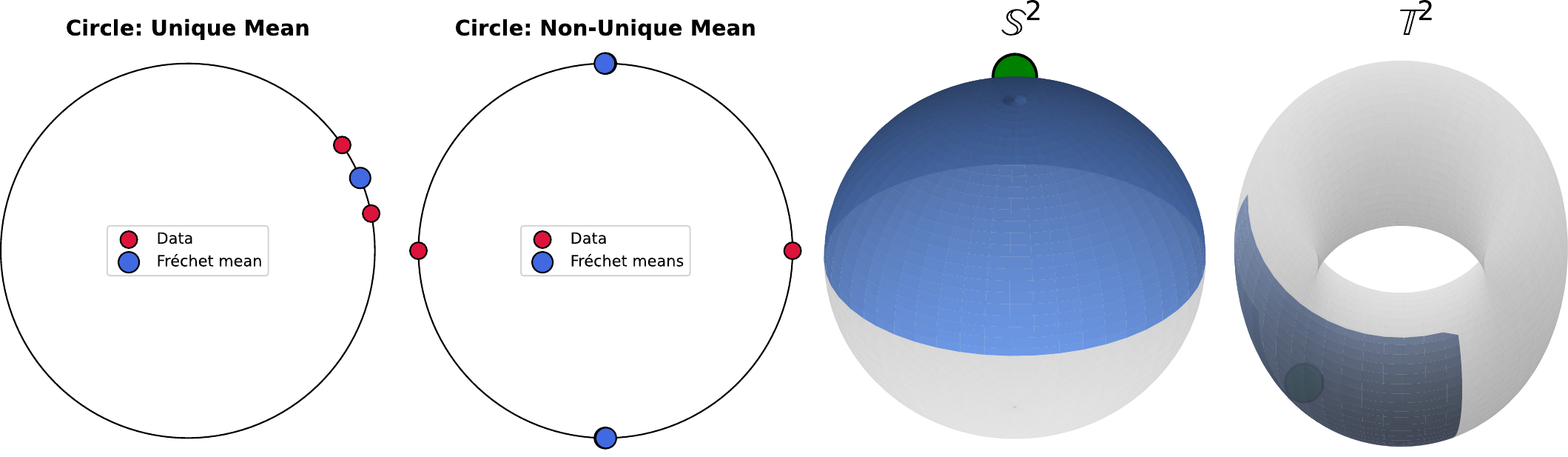}
    \caption{From left to right: The Fr\'echet mean (blue) for two data points (red) on the circle with uniqueness of the Fr\'echet mean, two points on the circle, where there are multiple Fr\'echet mean, the area (blue) for the data support around the north pole (green) of $\mathbb{S}^{2}$ for which the existence and uniqueness is guaranteed and similarly for the torus embedded in $\mathbb{R}^{3}$.}
    \label{fig:frechet_exist_and_uniq}
    \vspace{-0.em}
\end{figure}
To illustrate the existence and uniqueness, we consider the circle, unit-sphere and torus embedded in $\mathbb{R}^{3}$ in Fig.~\ref{fig:frechet_exist_and_uniq}. The two left-most figures illustrate that if the data points are sufficiently 'close', then the Fr\'echet mean is unique, while if, for example, there are antipodal data points on the circle, then there will be multiple Fr\'echet means. The two figures on the right illustrate an area in the unit sphere and torus embedded in $\mathbb{R}^{3}$, where the Fr\'echet mean is unique if the support of the data distribution is within these areas. For the unit sphere, the sectional curvature is constant and hence $\kappa=1$, and therefore there exists a unique Fr\'echet mean if the data support is within an open ball with $r < \pi / 2$. Thus, if the data distribution is within a hemisphere, then the Fr\'echet mean is unique, as illustrated in Fig.~\ref{fig:frechet_exist_and_uniq}. For the torus embedded in $\mathbb{R}^{3}$ parametrized by
\begin{equation*}
    \left(\left(R+r\cos \theta\right)\cos \phi, \left(R+r\cos\theta\right)\sin\phi, r\sin\theta\right), \quad \theta,\phi \in [0,2\pi),
\end{equation*}
with $R>r>0$, the sectional curvature is given by
\begin{equation*}
    \frac{\cos\theta}{r\left(R+r\cos\theta\right)}.
\end{equation*}
Thus, the curvature and the corresponding area with the uniqueness of the Fr\'echet mean guaranteed \citep{Kendall1990ProbabilityCA} depends on $\theta$.

In general, the conditions for uniqueness and existence of the Fr\'echet mean may not hold for the generalized version of the Fr\'echet mean in Eq.~\ref{eq:frechet_energy}. For the discretized version in Eq.~\ref{eq:frechet_energy_control}, define
\begin{equation*}
    \begin{split}
        f_{\mathrm{Frechet}} &= \sum_{i=1}^{N_{\mathrm{data}}}w_{i}\sum_{s=0}^{N_{\mathrm{grid}}-1}\left(x_{t+1,i}-x_{t,i}\right)^{\top}G(z_{s,i})\left(x_{t+1,i}-x_{t,i}\right), \\
        \lambda f_{S} &= \lambda \sum_{i=1}^{N_{\mathrm{data}}}w_{i}\sum_{s=0}^{N_{\mathrm{grid}}-1}\lambda S(z_{s,i}), \\
    \end{split}
\end{equation*}
Assume that the number of grid points $N_{\mathrm{grid}}$ is sufficiently high to approximate the continuous integrals. If all data points are within a certain set $A \subseteq \mathcal{M}$, such that the Fr\'echet mean in Eq.~\ref{eq:frechet_disc} is unique and exists, then it follows that if
\begin{equation*}
    \lambda\rho_{\mathrm{min}}\left(\mathrm{Hess}_{x}\left(f_{S}\right)\right) > - \rho_{\mathrm{min}}\left(\mathrm{Hess}_{x}f_{\mathrm{Frechet}}\right), \quad \forall x \in A
\end{equation*}
where $\rho_{\mathrm{min}}(\cdot)$ denotes the smallest eigenvalues of the Riemannian Hessian \citep{afsari}. We illustrate the negative loss landscape of the Fr\'echet mean in Fig.~\ref{fig:grid_ebm} for the energy-based model for checkerboard data used in Fig.~\ref{fig:multirow_plot_2d_gen}. In this case $G=I$, and therefore for $\lambda=0$ the Fr\'echet mean is unique and exists for any discrete data distribution. It can be seen that as $\lambda$ increases, two Fr\'echet means appear, and when $\lambda=100$ it seems that there are multiple Fr\'echet means.
\begin{figure}[h!]
    \vspace{-0.em}
    \centering
    \includegraphics[width=1.0\textwidth]{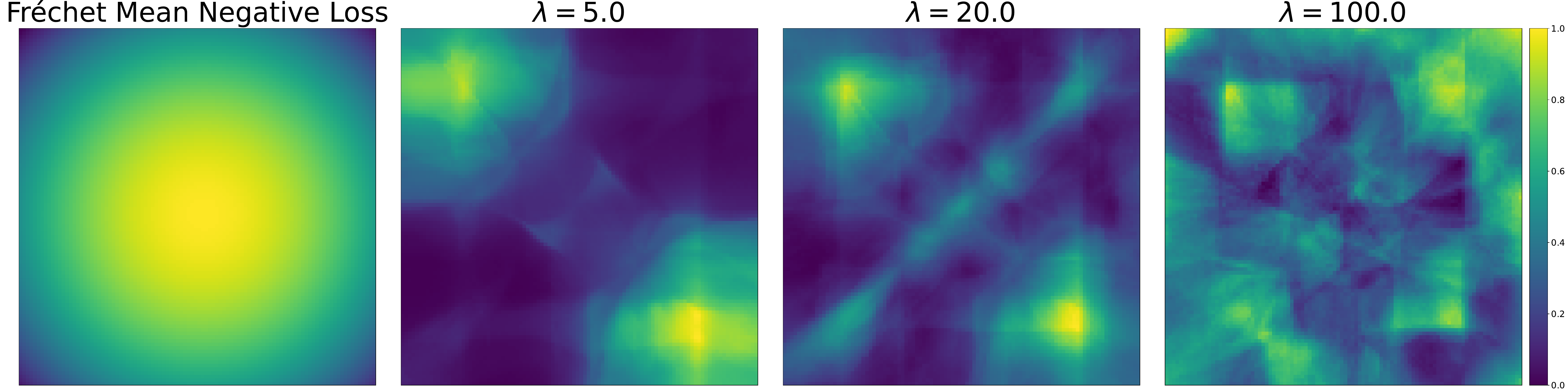}
    \caption{The negative loss of mean, where we for each grid point computes the energy from all other data points using Eq.~\ref{eq:metric_definition} for different values of $\lambda$. The right-most figure correspond to $\lambda=0$. Note that each loss is normalized for each plot to be between 0 and 1, and therefore the absolute scale between the plots can not be compared.}
    \label{fig:grid_ebm}
    \vspace{-0.em}
\end{figure}
\clearpage
\section{Application to diffusion models and image editing} \label{ap:app_diff_model}
Score-based diffusion models \citep{song2021scorebasedgenerativemodelingstochastic} approximate the data distribution by transforming the data using forward dynamics:
\begin{equation} \label{eq:forward_noise}
    \dif x_{t} = \mu(x_{t},t)\dif t + \sigma(t) \dif W_{t}, \quad x_{0} \sim p_{\mathrm{data}},
\end{equation}
such that $\mu: \mathbb{R}^{D} \times \mathbb{R}_{+} \rightarrow \mathbb{R}^{D}$ and $\sigma: \mathbb{R}_{+} \rightarrow \mathbb{R}^{D}$ are suitable functions for eq.~\ref{eq:forward_noise} to converge to a known limiting distribution $\pi$ for a sufficiently large time $T > 0$ \citep{song2021scorebasedgenerativemodelingstochastic}. Samples of the data distribution can then be generated using the time-reversal process $y_{t}:=x_{T-t}$ \citep{ANDERSON1982313}:
\begin{equation} \label{eq:score_based_treverse}
    dy_{t} = \left(\mu(y_{t},t)-\sigma(t)^{2}\nabla_{y} \log p_{t}(y_{t})\right)\dif t + \sigma(t) \dif \overline{W}_{t},
\end{equation}
where $y_{0} \sim \pi$, and $\nabla_{y} \log p_{t}(\cdot)$ denotes the \emph{score}, which can be learned using score matching \citep{score_matching, denoising_score_matching, song2019slicedscorematchingscalable}. Samples can also be generated deterministically using the probability flow \textsc{ode} \citep{chen2019neuralordinarydifferentialequations}
\begin{equation} \label{eq:score_based_ode}
    \dif y_{t} = \left(\mu(y_{t},t)-\frac{1}{2}\sigma(t)^{2}\nabla_{y_{t}}\log p_{t}(y_{t})\right)\dif t,
\end{equation}
Score-based diffusion models can be seen as the continuous version of denoising diffusion probabilistic models \textsc{ddpm} \citep{ho2020denoisingdiffusionprobabilisticmodels, song2021scorebasedgenerativemodelingstochastic} and have also been extended to data distributions on Riemannian manifolds \citep{huang2022riemanniandiffusionmodels, debortoli2022riemannianscorebasedgenerativemodelling, jo2024generativemodelingmanifoldsmixture}.
\begin{figure}[h!]
    \vspace{-0.em}
    \centering
    \includegraphics[width=1.0\textwidth]{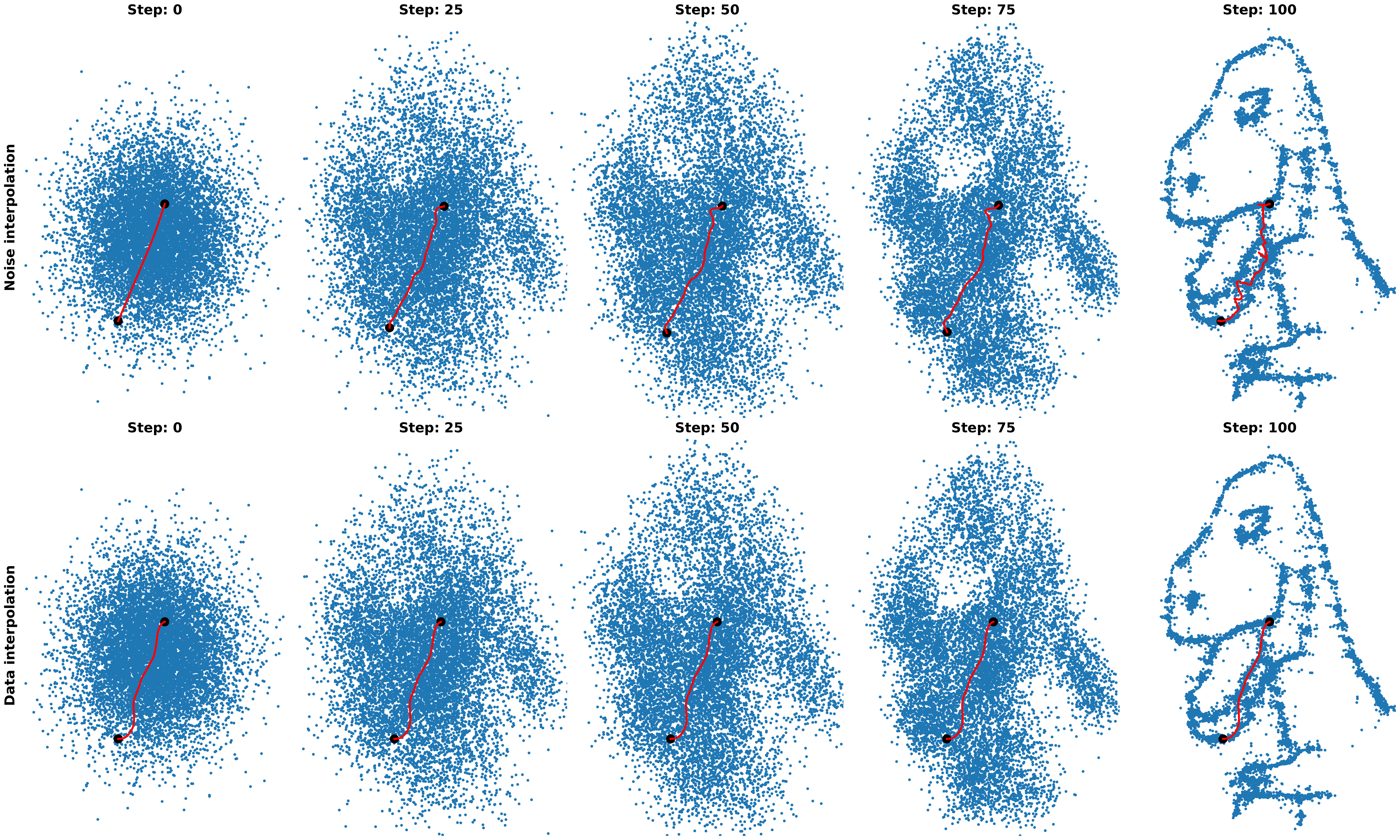}
    \caption{The first row shows interpolation in noise space (red) using the isotropic Gaussian limiting distribution and transports the curve to data space using a \textsc{ddpm}-model for a dino-dataset. The second row computes the curve directly in data space using the score of the \textsc{ddpm}, which, for illustrative purposes, is repeated in each step of the sampling process.}
    \label{fig:ddpm_2d_data_noise_sampling}
    \vspace{-0.em}
\end{figure}
\paragraph{Application to diffusion models} Using our framework, we have defined a (pseudo)-metric that deviates from the background Riemannian metric for an external force field in Newton dynamics. If we set $S=-\log p$, then $\nabla_{z}S = -\nabla_{z} \log p$. This quantity is the only one needed for the \textsc{ode} in Eq.~\ref{eq:ode_solution}, \textsc{bvp}-algorithm in Eq.~ \ref{eq:energy_update_scheme} and the Fr\'echet mean algorithm in Eq.~\ref{eq:frechet_update_scheme}. For diffusion models $\nabla_{z}S = -\nabla_{z} \log p$ is exactly the \emph{score} function and, therefore, our method is directly applicable to this case. However, for diffusion models, the score depends also on time. Given our framework, we can compute the regularized geodesics in the following two ways:
\begin{itemize}
    \item Curves in noise space: Given the limiting distribution in Eq.~\ref{eq:score_based_treverse}, we compute the interpolation in noise space and using the time-reversal dynamics in Eq.~\ref{eq:score_based_treverse} and can transport the geodesic to data space. The regularization function can be set as the density of the limiting distribution, i.e., $S = -\log p$ similar to \citet{zheng2024noisediffusioncorrectingnoiseimage}, or we can use the score at the noise distribution similar to \citet{saito2025tangentialmanifolddiscoveringriemannian}.
    \item Curves in noise space: With our framework we can also directly compute the geodesics in data space by applying the score function for time sufficiently close to 0.
\end{itemize}
Both approaches provide different pros and cons. Computing curves in noise space can be computationally cheaper, since the limiting distributions, and hence its density, are often known in closed form. However, these curves will only be smooth if the sampling scheme of the diffusion model is deterministic as for the probability flow \textsc{ode} \citep{chen2019neuralordinarydifferentialequations} or \textsc{ddim} sampling \citep{song2022denoisingdiffusionimplicitmodels}. Note also that high density in noise space, thus not necessarily imply high density in the data space by the change of variable formula. In contrast, computing the curves in the data space will result in smooth curves and uses the learned score function directly. However, this is computationally more expensive, as it requires evaluating the diffusion model repeatedly in the optimization, and be numerically unstable if the initial curve is not sufficiently close to data to have a proper estimate of the score. In Fig.~\ref{fig:ddpm_2d_fmean_ode}, we show the corresponding Fr\'echet mean and \textsc{ivp} curves for the dino-\textsc{ddpm}.
\begin{figure}[h!]
    \vspace{0.0em}
    \centering
    \includegraphics[width=0.6\textwidth]{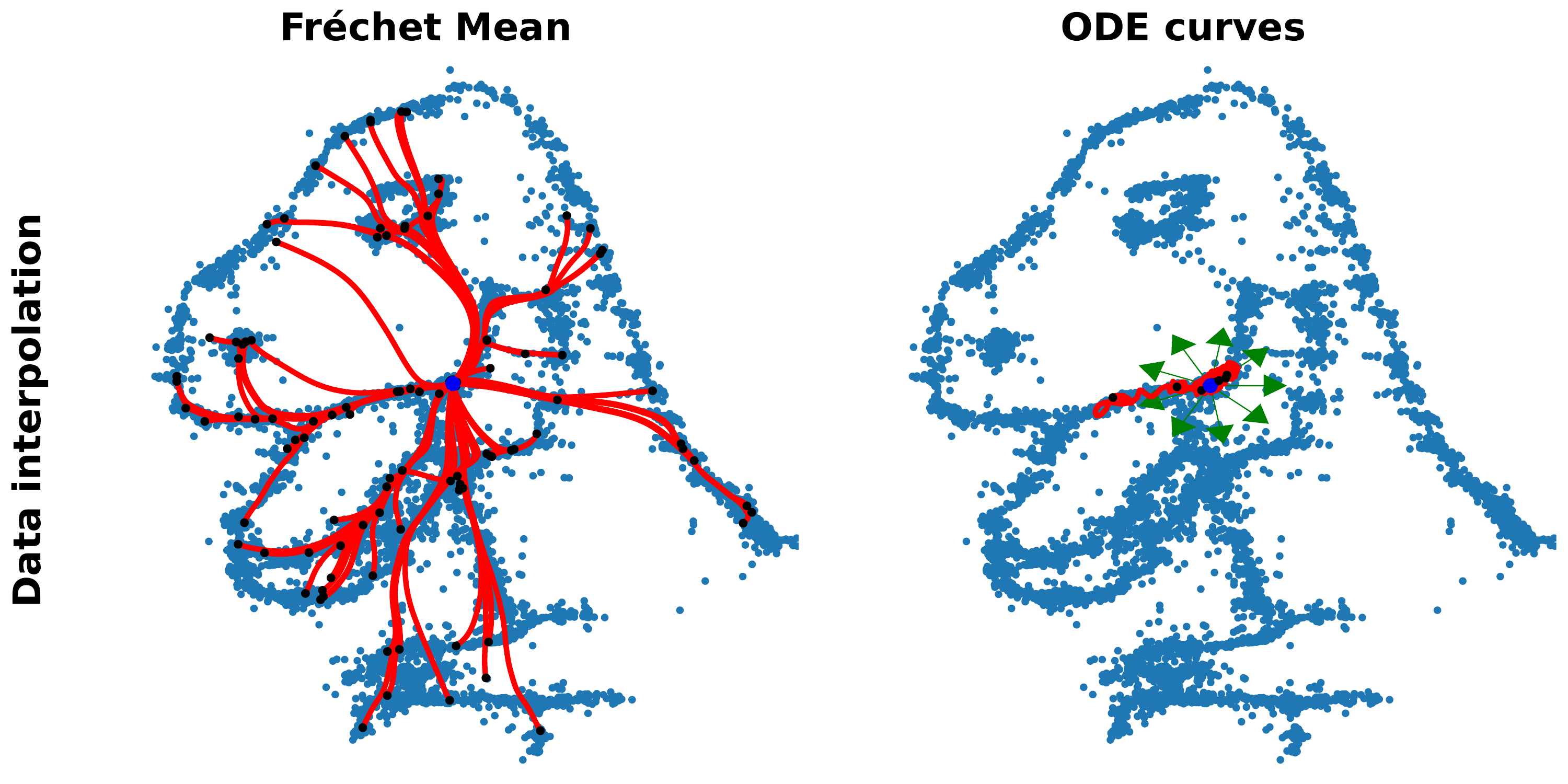}
    \caption{The Fr\'echet mean (blue) for the dino-dataset using a \textsc{ddpm} computed in data space as well as 10 curves (red) in ten different directions (green arrows) for the corresponding \textsc{ivp} in Eq.~\ref{eq:ode_solution}.}
    \label{fig:ddpm_2d_fmean_ode}
    \vspace{0.0em}
\end{figure}
\paragraph{Connection to image editing} Our method regularizes the geodesic interpolation and for $\lambda = 0$ our method will compute the connecting geodesic. In the Euclidean case, geodesic interpolation corresponds to linear interpolation, whereas spherical interpolation corresponds to a geodesic on a sphere. NoiseDiffusion \citep{zheng2024noisediffusioncorrectingnoiseimage} is designed for interpolation in diffusion models with image editing. Let $f$ denote the forward diffusion process that takes data to noise in eq.~\ref{eq:forward_noise} and $f^{-1}$ denote the reverse process that generates data samples using noise in eq.~\ref{eq:score_based_ode}, then NoiseDiffusion \citep{zheng2024noisediffusioncorrectingnoiseimage} proposes the following interpolation between two images $x,y$.
\begin{equation} \label{eq:noisediffusion}
    \begin{split}
        a &= \mathrm{clip}\left(f\left(x,t\right)\right), \\
        b &= \mathrm{clip}\left(f\left(y,t\right)\right), \\
        z_{s} &= \alpha(s) a + \beta(s) b + \left(\mu(s)-\alpha(s)\right)x + \left(\nu(s) - \beta(s)\right) y + \gamma(s) \epsilon, \\
        x_{s} &=  f^{-1}\left(\mathrm{clip}\left(z_{s},t\right)\right), \\
    \end{split}
\end{equation}
where $\mathrm{clip}$ denotes an element-wise operation that restricts a value to be between $\left[-\mathrm{boundary}, \mathrm{boundary}\right]$, while $\alpha(s), \beta(s)$ and $\gamma(s)$ are functions depending on $s$ with $\alpha^{2}(s)+\beta^{2}(s)+\gamma(s)^{2} = 1$ and $\epsilon \sim \mathcal{N_{\mathrm{grid}}}\left(0,I\right)$. The functions $\mu(s)$ and $\nu(s)$ serve as compensation for lost information. Note that
\begin{equation*}
    z_{s} = \alpha(s) a + \beta(s) b - \alpha(s)x - \beta(s) y + \mu(s) x + \nu(s) y + \gamma(s) \epsilon.
\end{equation*}
Inspired by eq.~\ref{eq:noisediffusion} we can easily extend our method to compute similar interpolants in noise space by
\begin{equation} \label{eq:prob_georce_noisediffusion}
    z_{s} = \mathrm{ProbGEORCE}_{1}(a, b) - \mathrm{ProbGEORCE}_{2}(x, y) + \mu(s) x + \nu(s) y + \gamma(s) \epsilon,
\end{equation}
where $\{\mathrm{ProbGEORCE}\}_{s=1}^{2}$ denotes the interpolations using Algorithm~\ref{al:prob_georce} using any metric or regularization function. Thus, we can obtain near-identical results for NoiseDiffusion with any original images to modify the interpolation curve as illustrated in Fig.~\ref{fig:image_edit_eagle}. Note that NoiseDiffusion assumes that the limiting distribution of the diffusion model is isotropic Gaussian, which our modification in eq.~\ref{eq:prob_georce_noisediffusion} does not assume. Note also that NoiseDiffusion \citep{zheng2024noisediffusioncorrectingnoiseimage} only defines interpolation and cannot be used to compute mean values or other statistics for diffusion models.
\begin{figure}[h!]
    \vspace{0.0em}
    \centering
    \includegraphics[width=1.0\textwidth]{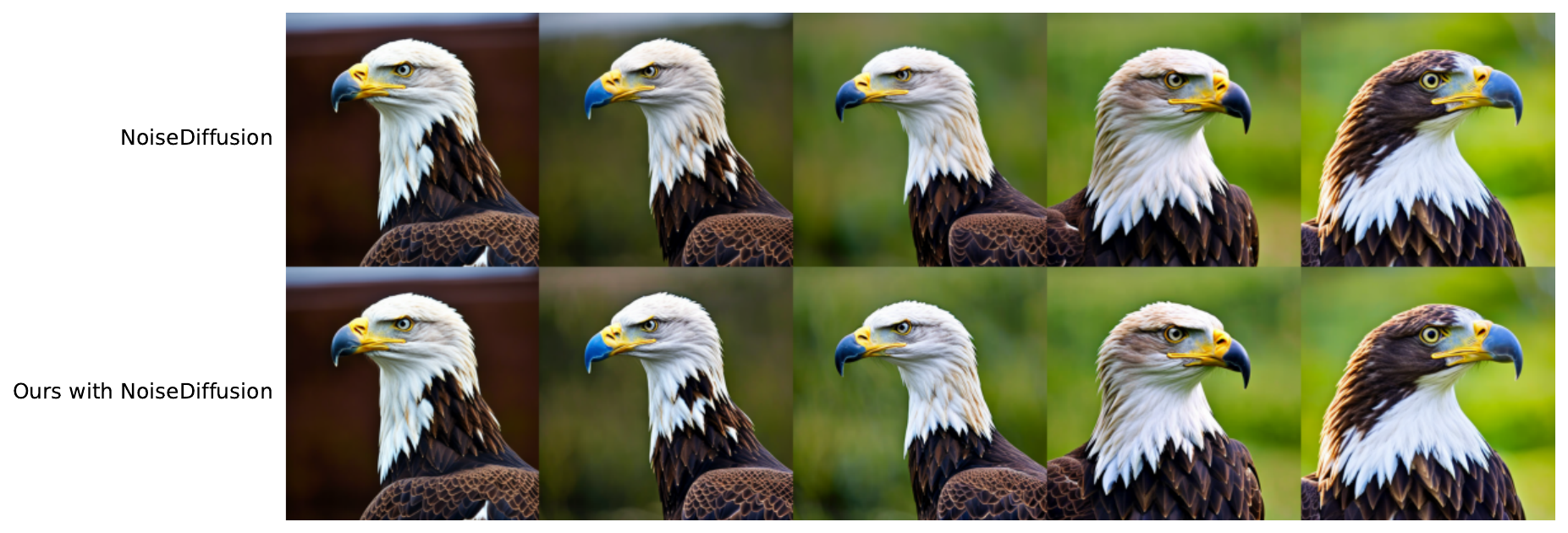}
    \caption{Computed interpolations for ControlNet \citep{zhang2023addingconditionalcontroltexttoimage} for eagle images of size $768 \times 768 \times 3$ similar to the experiment by \citet{zheng2024noisediffusioncorrectingnoiseimage}. We set the regularizer function in Eq.~\ref{eq:metric_definition} as the density of the $\chi^{2}$-distribution on the squared norm of the grid points $z_{0:N}$. We plot the using our method incorporated into \textit{NoiseDiffusion} using Eq.~\ref{eq:prob_georce_noisediffusion} and compare to NoiseDiffusion \citep{zheng2024noisediffusioncorrectingnoiseimage}.}
    \label{fig:image_edit_eagle}
    \vspace{0.0em}
\end{figure}
\clearpage
\section{Experimental details} \label{ap:exp_details}
The following contains experimental details and hyper-parameters. The code for reproducing the results can be found at \href{https://github.com/FrederikMR/likely_geometry}{github.com/FrederikMR/likely\_geometry}.

\paragraph{Normalizing $\lambda$} Since the energy and regularization function $S$ in Eq.~\ref{eq:metric_definition} might have very different scales, we will re-normalize $\lambda$ when computing interpolation and the mean value. We do it in the following sense: Let $E^{(0)}$ denote the energy $\int_{0}^{1}\dot{\gamma}(t)G(\gamma(t))\dot{\gamma}(t)\,\dif t$, where $\gamma$ is a simple straight line connecting any given start and end point. Let $S^{(0)}$ denote the corresponding value of $\int_{0}^{1}S(\gamma(t))\,\dif t$. We propose to use a normalized version, $\tilde{\lambda}$, for all computations
\begin{equation*}
    \tilde{\lambda} := \lambda\frac{E^{(0)}}{S^{(0)}}.
\end{equation*}
This is to ensure that the energy and regularization terms are approximately on the same scale before performing any computations.

\paragraph{Benchmarks} In the following table, we describe the metrics and hyper-parameters used in Table~\ref{tab:gen2d_runtimes}. We consider the density $p$ as described in Section~\ref{sec:experiments}. For the methods that apply $p$ directly, we compute it as $\exp \log p$, although this may give a non-normalized density.

\begin{table}[ht]
\centering
\caption{Benchmark methods and hyper-parameters}
\label{tab:methods}
\begin{tabular}{l l l}
\hline
\textbf{Method} & \textbf{Metric} & \textbf{Hyper-parameters} \\
\hline
Linear & $G(z)=I$ & - \\
Spherical & Riemannian geometry of a sphere embedded in Euclidean space & - \\
Fisher-Rao & $G(z)=\nabla_{z}\log p (z) \nabla_{z} \log p(z)^{\top}$ & - \\
Jacobian & $G(z)=J_{\nabla_{z} \log p}^{\top}(z)J_{\nabla_{x} \log p}(z)$ & - \\
Inverse Density & $G(z)=\frac{1}{p(z)^{2}}I$ & - \\
Generative & $G(z)=\left(\frac{p(z)+\lambda}{p_{0}+\lambda}\right)^{2}I$ & $p_{0}=\lambda=1$ \\
Monge & $G(z)=I+\alpha^{2}\nabla_{z}\log p (z) \nabla_{z} \log p(z)^{\top}$ & $\alpha=1$ \\
\hline
\end{tabular}
\label{tab:becnhmarks}
\end{table}

When we write "Reg", we consider the metric described in Table~\ref{tab:becnhmarks} added with the identity matrix $\alpha I$, i.e., $\tilde{G}(x)=G(x)+\alpha I$ with $\alpha=1$. Note that we did not add this to the Fisher-Rao metric as this would correspond to the Monge metric for $\alpha=1$. For all methods, except linear and spherical, we apply \textit{ProbGEORCE} with adaptive update of the step-size in Algorithm~\ref{al:ada_prob_georce}. For Linear, we apply the closed-form expression for Euclidean geometry of the initial and boundary value problem as well as the closed-form expression of the mean. For spherical, we apply the closed-form expression for the initial value and boundary value problem. For the mean, we use the Logarithmic map and compute the mean using gradient descent \citep{pennec2006statriemann}.

\paragraph{Optimizers} For the results in Table~\ref{tab:becnhmarks}, Fig.~\ref{fig:ddpm_2d_data_noise_sampling} and Fig.~\ref{fig:ddpm_2d_fmean_ode}, we apply \textit{ProbGEORCE} with adaptive update of the step-size in Algorithm~\ref{al:ada_prob_georce} with $\beta_{1}=\beta_{2}=0.5$, $\epsilon=10^{-8}$ and $\gamma=0.01$ with a maximum of $1000$ iterations and a tolerance of $10^{-4}$ to our method and all Riemannian methods with $\lambda=0$. For ControlNet, we use \textit{ProbGEORCE} with adaptive update of the step-size in Algorithm~\ref{al:ada_prob_georce} with $\beta_{1}=\beta_{2}=0.5$, $\epsilon=10^{-8}$ and $\gamma=0.001$ in data space, and in noise space we use \textit{ProbGEORCE} with line-search and $\rho=0.5$. For the runtime results for other methods, we set the learning rate to $0.01$.

\paragraph{FID and KID} We compute the FID and KID scores for the AFHQ dataset based on $10$ boundary value curves with $N=10$ grid points. We set the initial prompt in ControlNet to "A photo of a cat".

\paragraph{Manifolds} For the manifolds used for the runtime results in Table~\ref{tab:runtime_manifolds}, we apply the same local chart and metric as in \cite{georce}, where we add a three isotropic Gaussian with random means in the local charts. For the conceptual figure in Fig.~\ref{fig:path_illustration}, we consider a chart on the form
\begin{equation*}
    \{\left(u,v,\exp\left(-\left(u^{2}+v^{2}\right)\right)+0.3\sin\left(3u\right)\cos\left(3v\right)\right) \, | \, \left(u,v\right) \in \mathbb{R}^{2}\}.
\end{equation*}
where the regularization corresponds to
\begin{equation*}
    S(u,v) = -\exp\left(-\frac{||u-v||^{2}}{0.15}\right).
\end{equation*}

\paragraph{Architecture} Table~\ref{tab:models_architecture} contains a list of all trained models and networks used in the paper. All other models are pre-trained.

\begin{table}[h!]
\centering
\caption{Summary of models, their architectures, and training details.}
\begin{tabular}{lp{0.5\textwidth}p{0.2\textwidth}}
\hline
\textbf{Model} & \textbf{Architecture} & \textbf{Training} \\
\hline
Energy-Based model & We use a MLP with one hidden layer with 128 neurons and ReLU activations using the package \citet{torchebm_library_2025}. & Trained for $1,000$ epochs with a batch size of $128$ for fixed $50,000$ samples. \\
\hline
Normalizing flow & We use 32 affine coupling blocks of MLPs with two hidden layers of 64 neurons using the package \citep{Stimper2023}. & Trained for $4,000$ epochs with a batch size of $2^{9}$. \\
\hline
AR & We use a MLP with two hidden layers of 64 neurons and TanH activations using the package. & Trained for $10,000$ epochs with a batch size of $512$ for fixed $50,000$ samples. \\
\hline
VAE & We use the same architecture for a noisy circle embedded in $\mathbb{R}^{3}$ as in \citet{shao2017riemanniangeometrydeepgenerative} with minor modifications. & Trained for $5,000$ epochs with a batch size of $512$ for fixed $50,000$ samples. \\
\hline
DDPM dinosaur & We use the architecture from \href{https://github.com/tanelp/tiny-diffusion}{github.com/tanelp/tiny-diffusion}. & Trained identical to \href{https://github.com/tanelp/tiny-diffusion}{github.com/tanelp/tiny-diffusion}. \\
\hline
\end{tabular}
\label{tab:models_architecture}
\end{table}

\paragraph{Hardware} The interpolation for \textsc{gmm} and \textsc{kde} as well as plots have been computed on a:
\textit{HP} computer with Intel Core i9-11950H 2.6 GHz 8C, 15.6'' FHD, 720P CAM, 32 GB (2$\times$16GB) DDR4 3200 So-Dimm, Nvidia Quadro TI2000 4GB Discrete Graphics, 1TB PCle NVMe SSD, 150W PSU, 8cell, W11Home 64 Advanced.

The interpolation for ControlNet, runtime tables, and benchmarks have been computed on a GPU for at most 24 hours with a maximum memory of $20$ GB. The $GPU$ consists of $4$ nodes on a \textit{Tesla V100}.

The interpolation of CIFAR10 and CelebAHQ with SGM \citep{song2021scorebasedgenerativemodelingstochastic} used in Appendix~\ref{ap:exp_additional} is conducted on a single \textit{Nvidia RTX A6000} GPU with 48G memory. 

The interpolation of OASIS3 with LDM used in Appendix~\ref{ap:exp_additional} is conducted on four \textit{Nvidia RTX A6000} GPUs.
\clearpage
\section{Additional experiments} \label{ap:exp_additional}
\paragraph{ControlNet} We provide additional qualitative examples for the ControlNet diffusion model \citep{zhang2023addingconditionalcontroltexttoimage} using the regularization function as described in Section~\ref{sec:experiments}. In Fig.~\ref{fig:image_row} and ~\ref{fig:house_image_row}, we show our interpolation method in noise space in the direction of a normalized standard normally distributed direction. We interpolate between an initial prompt and a target prompt linearly along the curve. We state the target prompt under the images.

\begin{figure}[h!]
    \vspace{0.0em}
    \centering
    \includegraphics[width=1.0\textwidth]{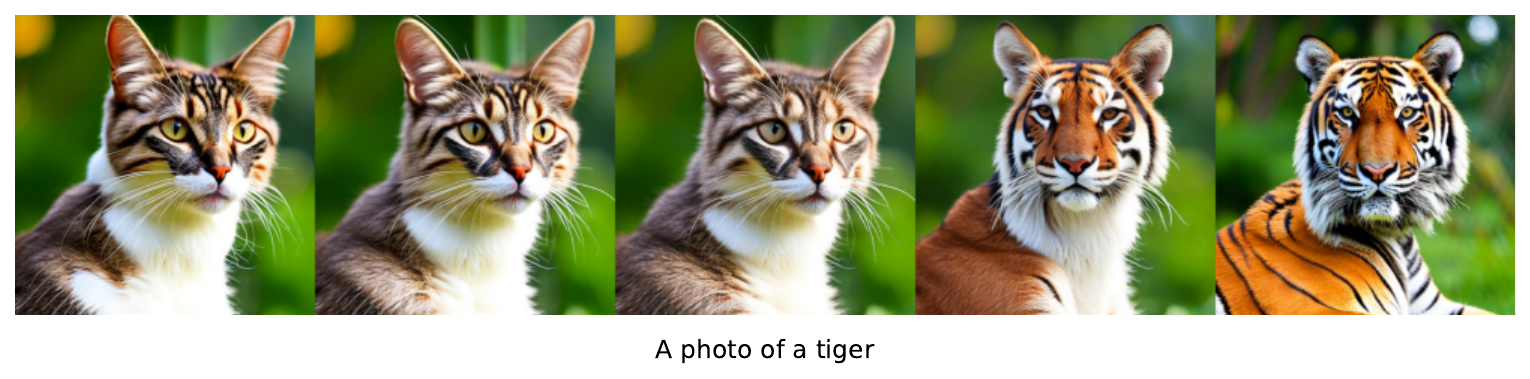}
    \caption{A transition corresponding to an initial value problem using our method in noise space for ControlNet, where state the target prompt under the images. The initial prompt was "A photo of a cat".}
    \label{fig:image_row}
    \vspace{0.0em}
\end{figure}

\begin{figure}[h!]
    \vspace{0.0em}
    \centering
    \includegraphics[width=1.0\textwidth]{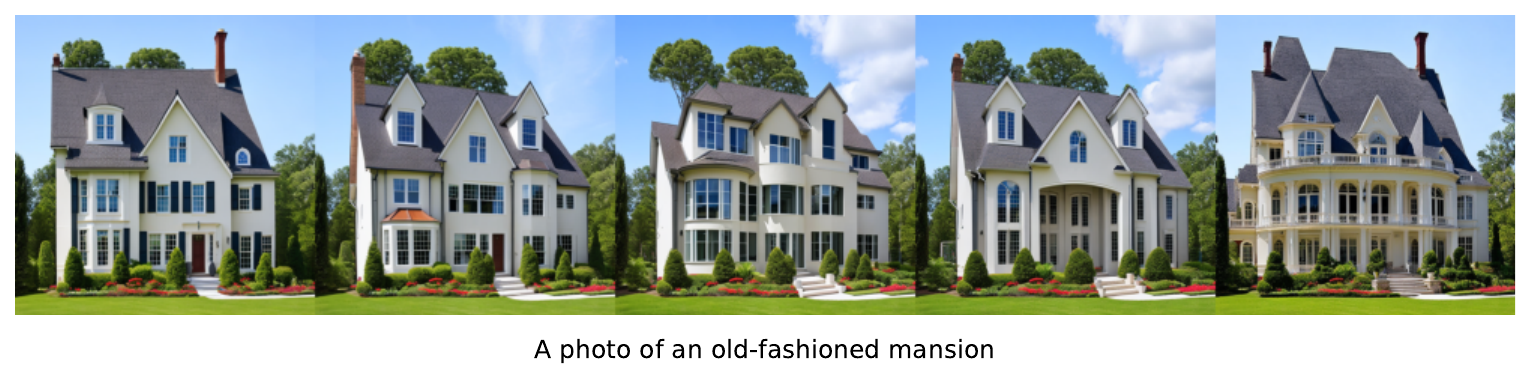}
    \caption{A transition corresponding to an initial value problem using our method in noise space for ControlNet, where state the target prompt under the images. The initial prompt was "A photo of a house".}
    \label{fig:house_image_row}
    \vspace{0.0em}
\end{figure}

In Fig.~\ref{fig:mountainimage_grid},~\ref{fig:catimage_grid} and ~\ref{fig:aircraftimage_grid}, we show our interpolation method and other methods for curves connecting two images.

\begin{figure}[h!]
    \vspace{0.0em}
    \centering
    \includegraphics[width=1.0\textwidth]{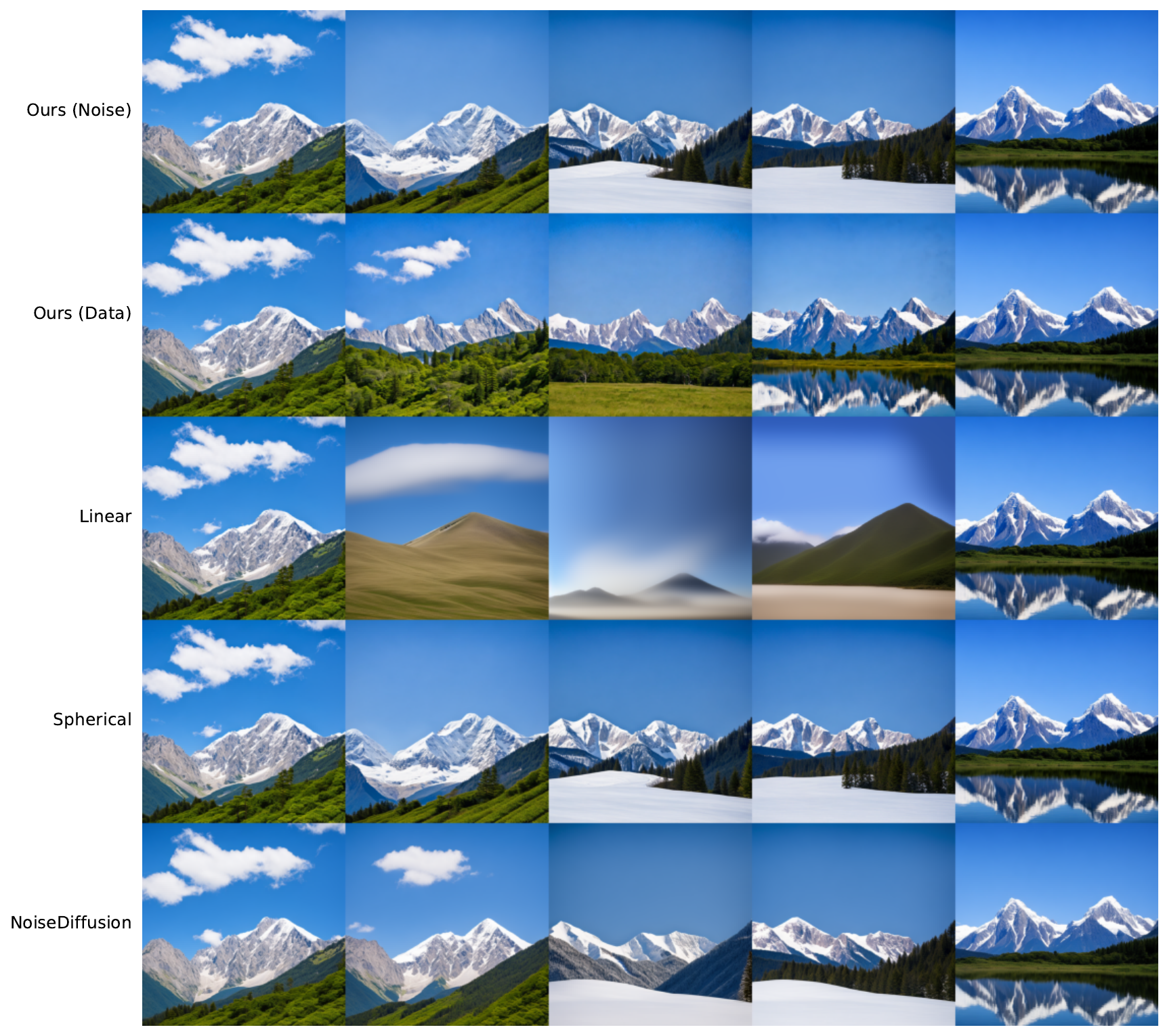}
    \caption{A display of different boundary value methods for ControlNet. The prompt was "A photo of a mountain".}
    \label{fig:mountainimage_grid}
    \vspace{0.0em}
\end{figure}

\begin{figure}[h!]
    \vspace{0.0em}
    \centering
    \includegraphics[width=1.0\textwidth]{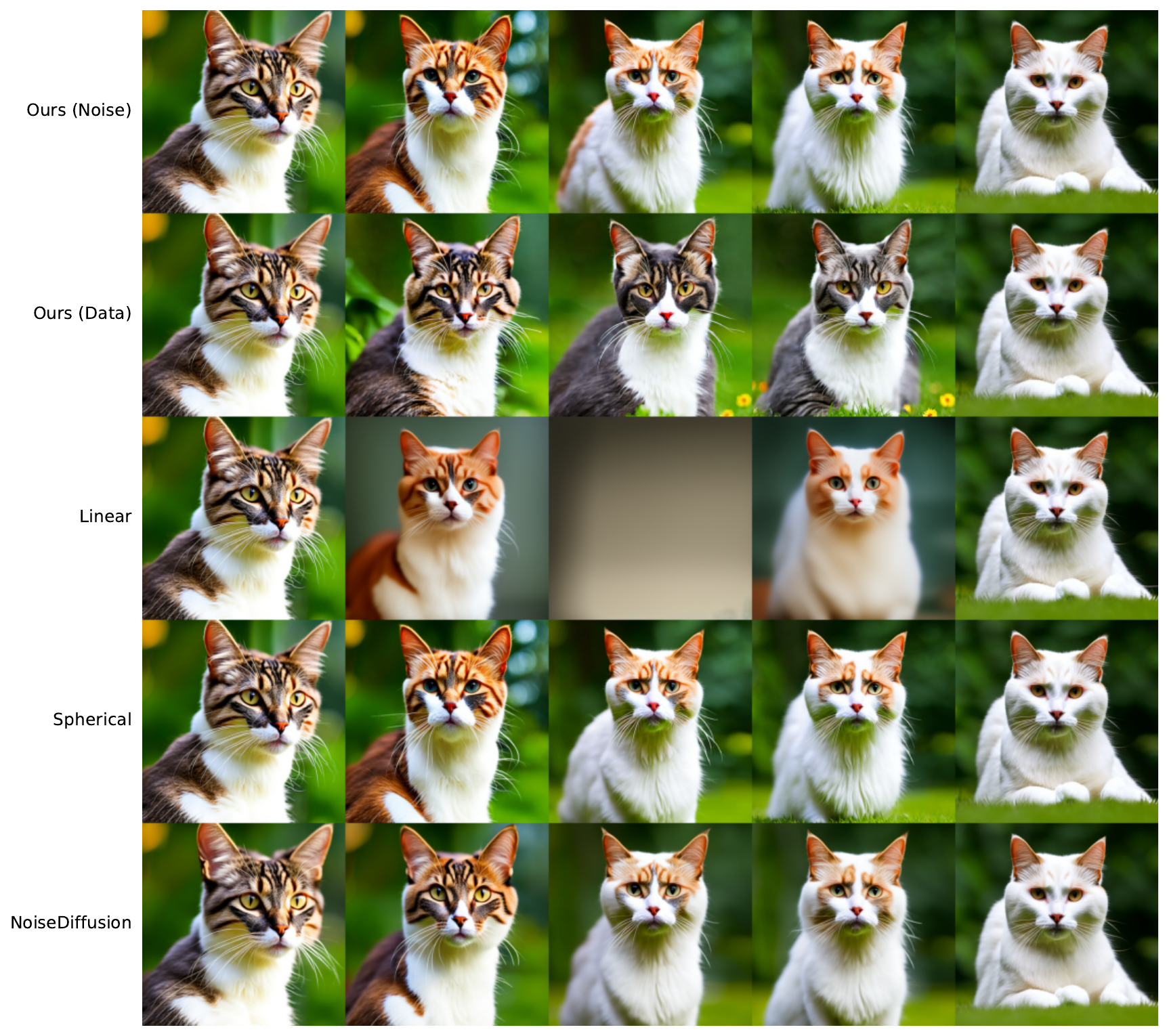}
    \caption{A display of different boundary value methods for ControlNet. The prompt was "A photo of a cat".}
    \label{fig:catimage_grid}
    \vspace{0.0em}
\end{figure}

\begin{figure}[h!]
    \vspace{0.0em}
    \centering
    \includegraphics[width=1.0\textwidth]{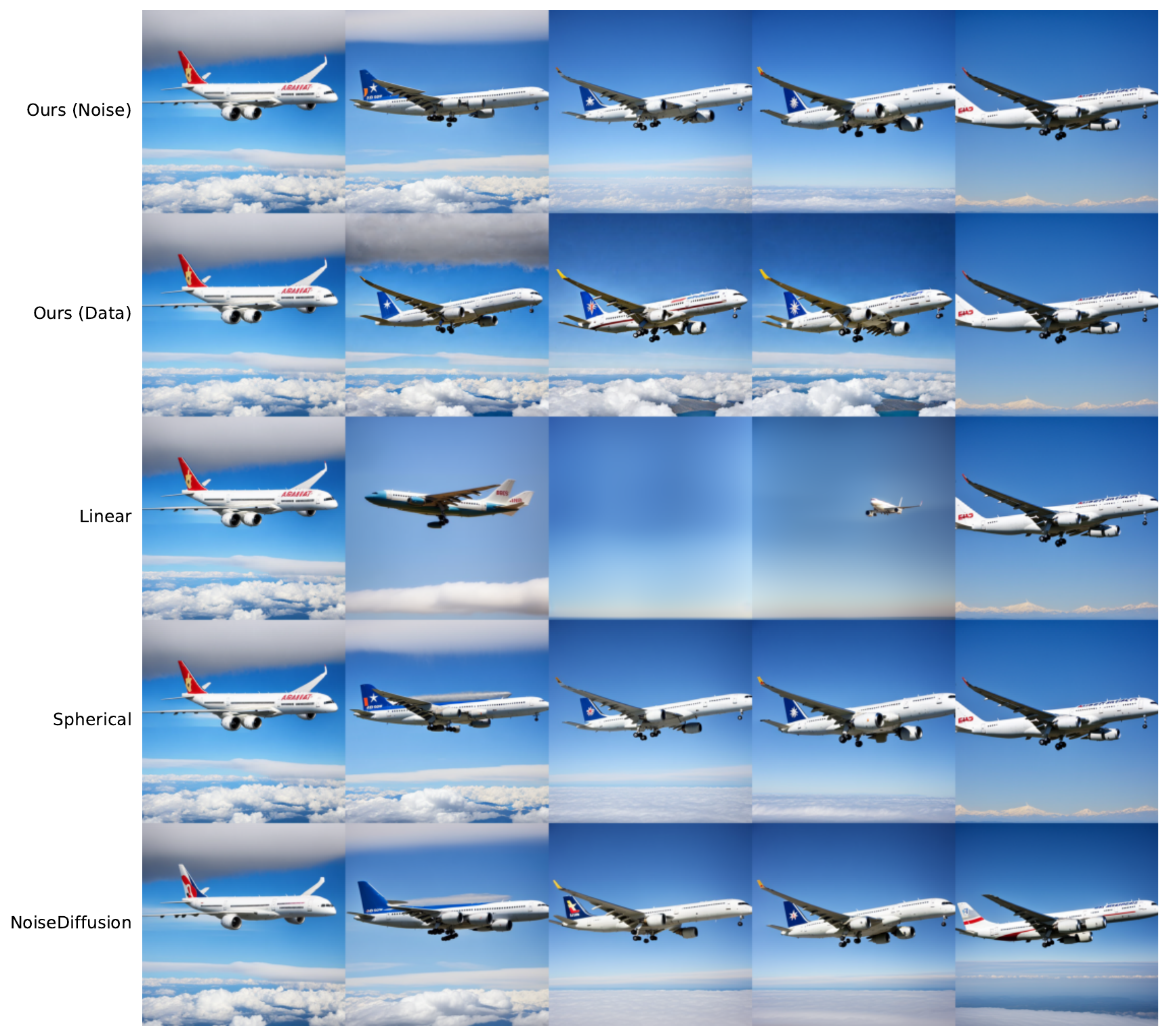}
    \caption{A display of different boundary value methods for ControlNet. The prompt was "A photo of an aircraft".}
    \label{fig:aircraftimage_grid}
    \vspace{0.0em}
\end{figure}

In Fig.~\ref{fig:spherical_mean_grid_left}, ~\ref{fig:linear_mean_grid_left} and ~\ref{fig:pgeorce_noise_mean_grid_left} we show the estimated mean and their transition for different methods.

\begin{figure}[h!]
    \vspace{0.0em}
    \centering
    \includegraphics[width=1.0\textwidth]{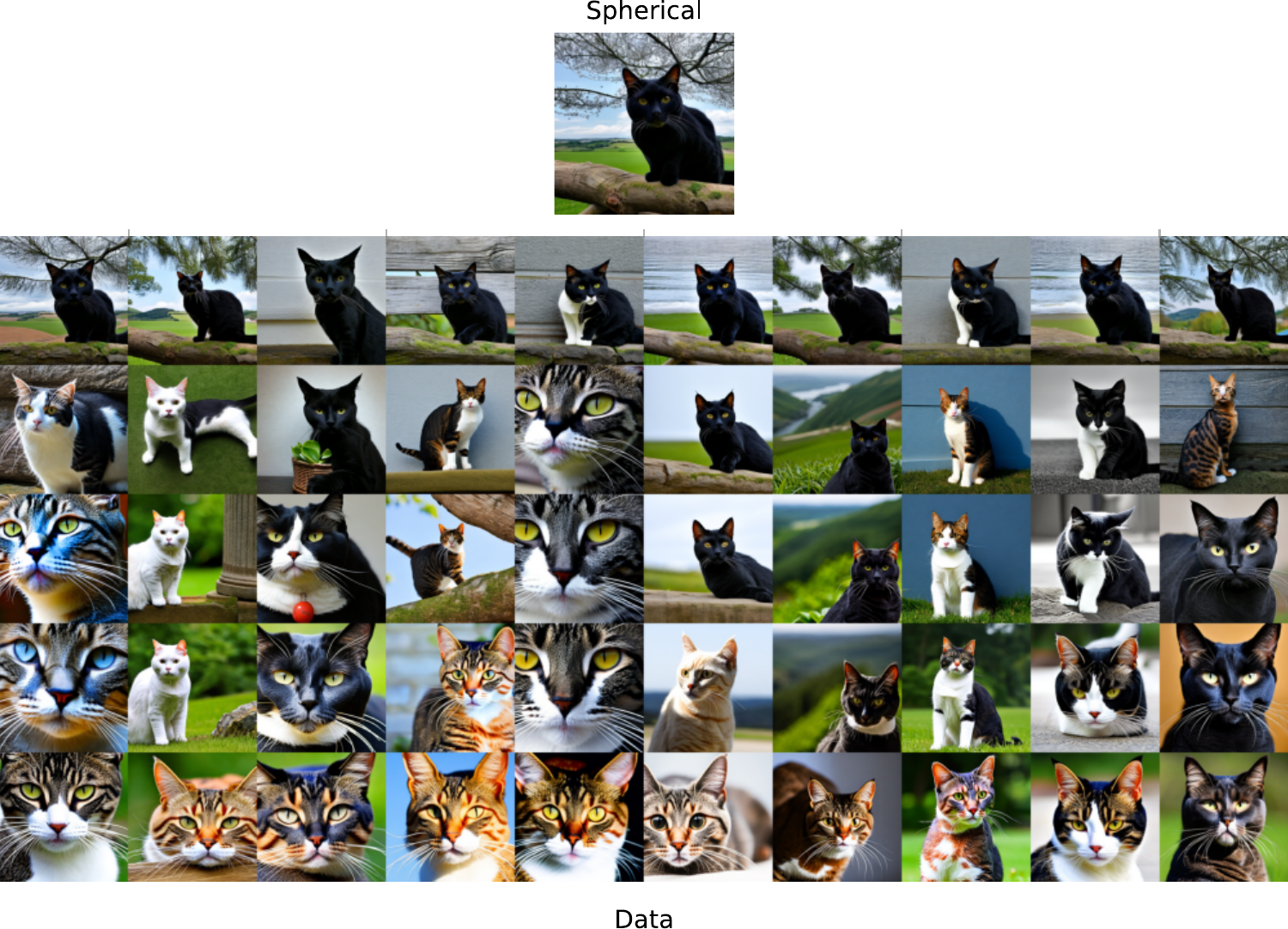}
    \caption{The estimated mean using spherical geometry for the data in Fig.~\ref{fig:mean_grid_left}. The prompt was "A photo of a cat".}
    \label{fig:spherical_mean_grid_left}
    \vspace{0.0em}
\end{figure}

\begin{figure}[h!]
    \vspace{0.0em}
    \centering
    \includegraphics[width=1.0\textwidth]{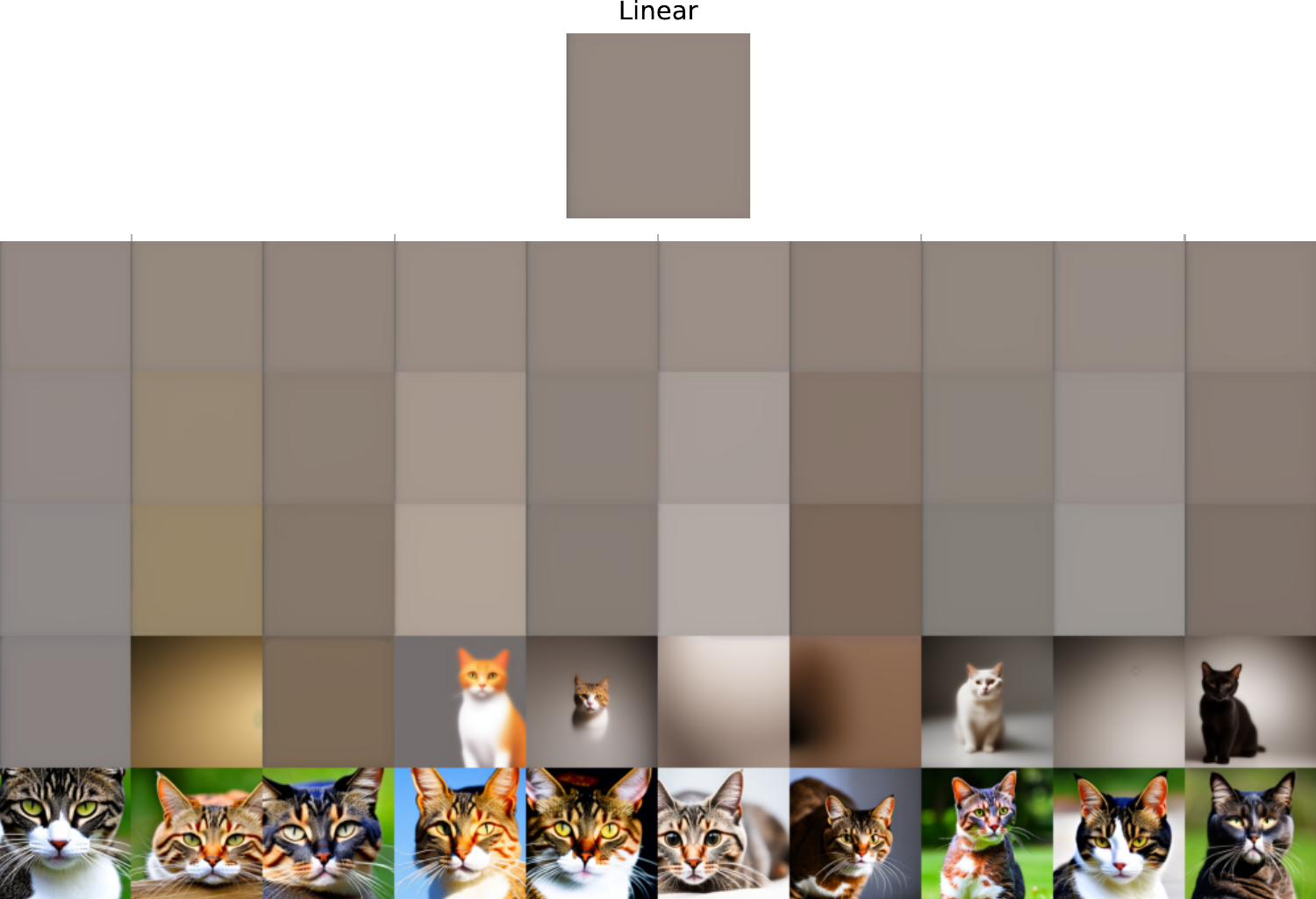}
    \caption{The estimated mean using linear geometry for the data in Fig.~\ref{fig:mean_grid_left}. The prompt was "A photo of a cat".}
    \label{fig:linear_mean_grid_left}
    \vspace{0.0em}
\end{figure}

\begin{figure}[h!]
    \vspace{0.0em}
    \centering
    \includegraphics[width=1.0\textwidth]{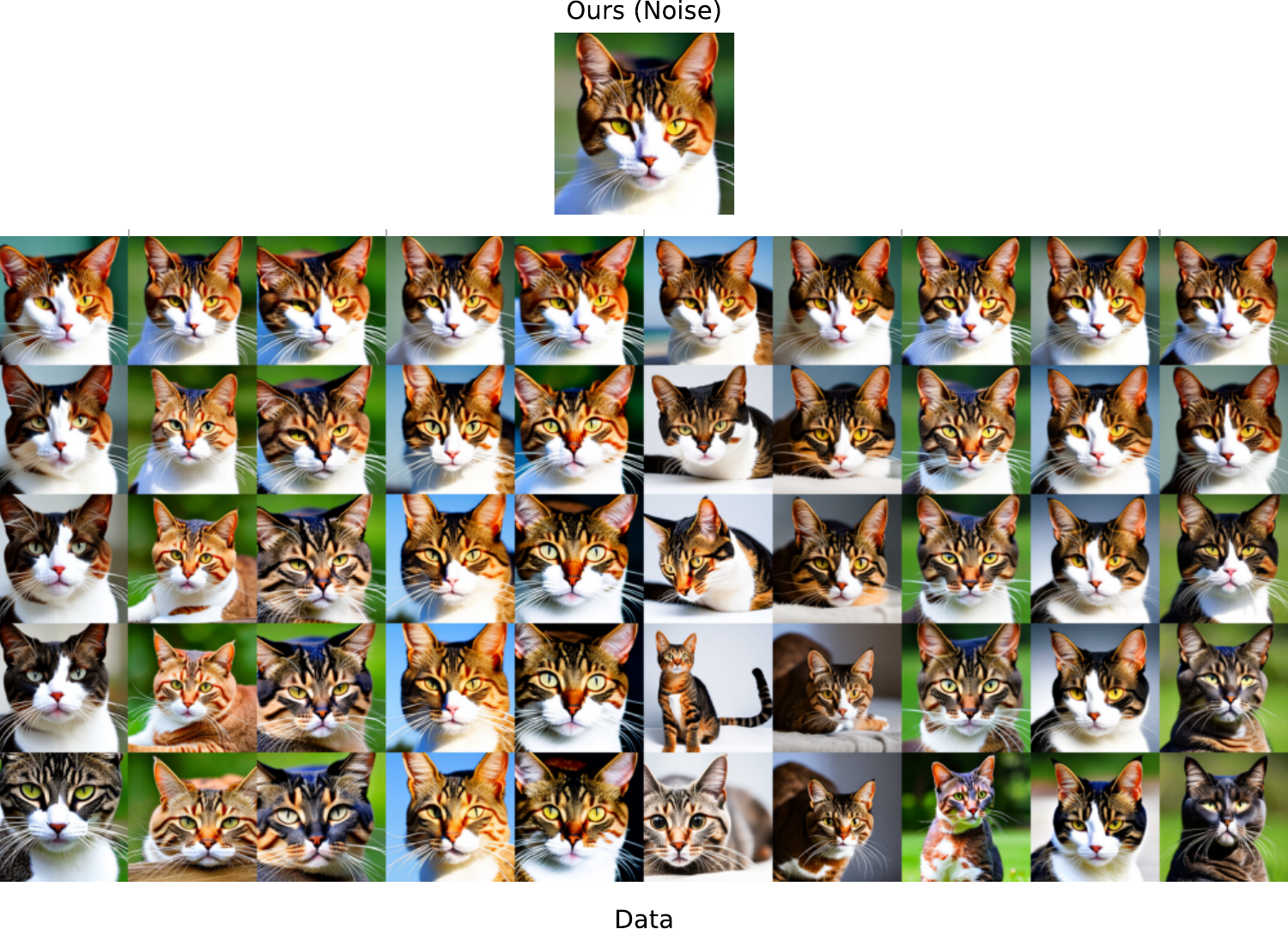}
    \caption{The estimated mean using our method in noise space for the data in Fig.~\ref{fig:mean_grid_left}. The prompt was "A photo of a cat".}
    \label{fig:pgeorce_noise_mean_grid_left}
    \vspace{0.0em}
\end{figure}

\clearpage
\paragraph{Other diffusion models} We consider the score-based generative model (SGM. \citep{song2021scorebasedgenerativemodelingstochastic}, where we use the variance exploding SDE (VESDE) for both CIFAR10 and CelebAHQ datasets with SGM. For CIFAR10 interpolation, the minimum $\sigma$ and maximum $\sigma$ are set to 0.01 and 50 respectively, with the number of scales being 1000. For CelebAHQ interpolation, we use 0.01 and 348 as the minimum and maximum $\sigma$. The sampling $\epsilon$ is set to $1e-5$ for both cases. We also consider the latent score-based generative model \citep{vahdat2021scorebasedgenerativemodelinglatent} for the mentioned datasets as well as OASIS3 \citep{LaMontagne2019.12.13.19014902} with LDM \citep{rombach2022highresolutionimagesynthesislatent}, where the linear $\beta$ time schedule with 1000 time steps is adopted with the initial $\beta$ value being 0.0015 and the last $\beta$ value being 0.0205. For the score-based models, we sample using the probability flow \textsc{ode}, while for the OASIS3 we use DDIM sampling from noise space \citep{song2022denoisingdiffusionimplicitmodels}. For our method, we set $\lambda=1.0$ for both datasets and consider the regularization function.
\begin{equation*}
    S(x) = -\log p\left(\norm{x}^{2}\right) + 0.1\left(\norm{x}^{2}-r^{2}\right),
\end{equation*}
where $r$ denotes the data dimension. Note that we apply this slightly different regularization function compared to ControlNet to account for the dimension of the noise space being lower. We compute the likelihood of the interpolation curves and \textsc{fid} for our method compared to linear and spherical interpolation in Table~\ref{tab:diffusion_log_likelihood}. In Fig.~\ref{fig:brain_interpolation}, ~\ref{fig:samples_celebahq} and ~\ref{fig:samples_cifar10}, we provide interpolation paths.

\begin{table}[h!]
    \scriptsize
    \centering
    \caption{Mean log-likelihood / \textsc{FID} for interpolation in different diffusion models and datasets. SGM is score-based generative models \citep{kingma2017adammethodstochasticoptimization}, while LSGM is latent score-based generative model \citep{vahdat2021scorebasedgenerativemodelinglatent}. OASIS3 \citep{LaMontagne2019.12.13.19014902} was trained using latent diffusion model \citep{rombach2022highresolutionimagesynthesislatent} used by \citet{pinaya2022brainimaginggenerationlatent}.}

    \begin{tabular*}{\textwidth}{@{\extracolsep{\fill}} l c c c c c}
        \toprule
        \textbf{Interpolation} &
        \textbf{OASIS3} &
        \textbf{CIFAR10 (SGM)} &
        \textbf{CelebAHQ (SGM)} &
        \textbf{CIFAR10 (LSGM)} &
        \textbf{CelebAHQ (LSGM)} \\
        \midrule
        Linear &
        27.83 / 85.94 &
        0.33 / 391.64 &
        -1.61 / 285.32 &
        26.49 / 254.39 &
        1.51 / 149.53 \\
        Spherical &
        \textbf{28.46} / 68.97 &
        \textbf{4.30} / 237.64 &
        -1.56 / 142.14 &
        \textbf{30.00} / \textbf{217.10} &
        \textbf{1.70} / 145.02 \\
        ProbGEORCE &
        28.41 / \textbf{66.54} &
        4.12 / \textbf{233.84} &
        \textbf{-0.46} / \textbf{138.69} &
        29.99 / 217.52 &
        \textbf{1.70} / \textbf{143.79} \\
        \bottomrule
    \end{tabular*}
    \label{tab:diffusion_log_likelihood}
    \vspace{-2em}
\end{table}

\begin{figure}[h!]
    \vspace{0.0em}
    \centering
    \includegraphics[width=1.0\textwidth]{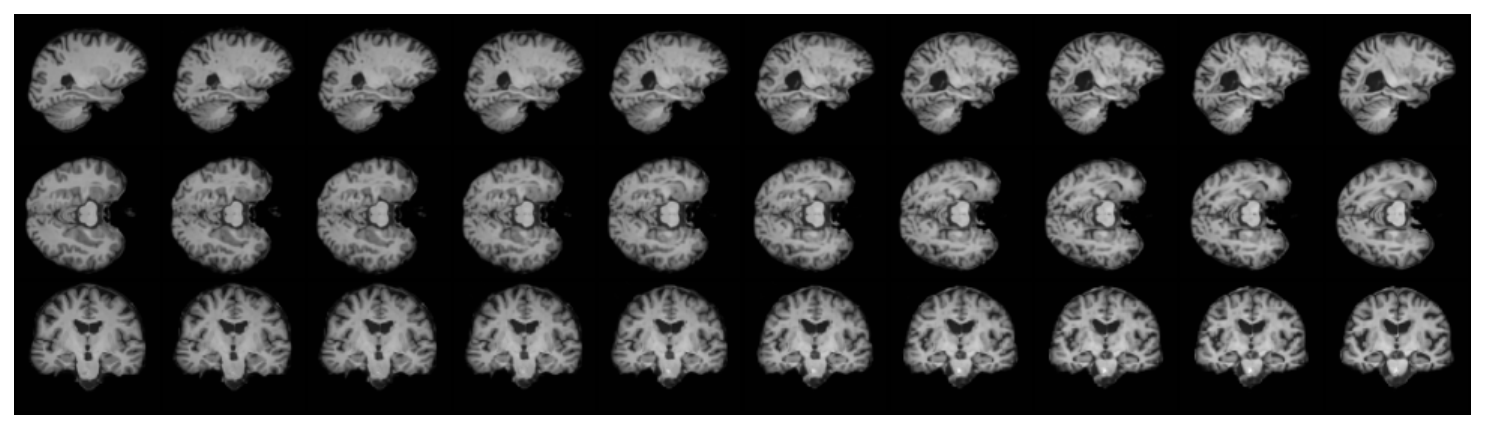}
    \caption{Interpolation using a latent diffusion model similar to \citep{pinaya2022brainimaginggenerationlatent} with our proposed method. Each row represents different slices of the brain. The interpolation shows the generative transition between a healthy brain (left) and a brain with Alzheimer disease (right).}
    \label{fig:brain_interpolation}
    \vspace{0.0em}
\end{figure}

\begin{figure}[h!]
    \vspace{0.0em}
    \centering
    \includegraphics[width=1.0\textwidth]{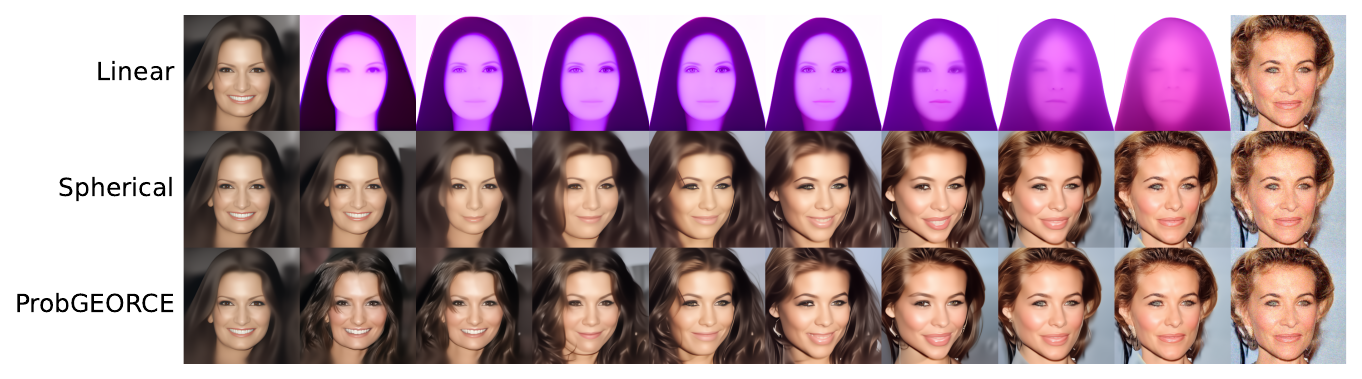}
    \caption{Comparisons of interpolation curves for CelebAHQ with SGM.}
    \label{fig:samples_celebahq}
    \vspace{0.0em}
\end{figure}

\begin{figure}[h!]
    \vspace{0.0em}
    \centering
    \includegraphics[width=1.0\textwidth]{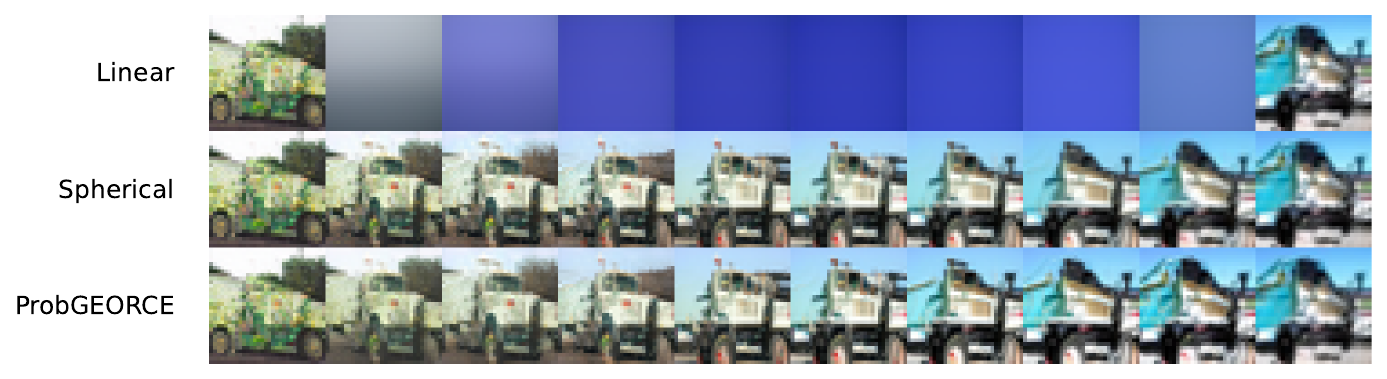}
    \caption{Comparisons of interpolation curves for CIFAR10 with SGM.}
    \label{fig:samples_cifar10}
    \vspace{0.0em}
\end{figure}

\clearpage
\section{Additional runtime estimates} \label{ap:add_runtime}
In this section, we provide additional runtime estimates using our method compared to standard optimizers. We consider the \textsc{bvp} for Eq.~\ref{eq:metric_definition} using our method \textit{ProbGEORCE} with line-search and with adaptive update scheme. In Fig.~\ref{fig:gen2d_runtime}, we show the estimated regularized energy and runtime for different optimizers for different values of $\lambda$.
\begin{figure*}[h]
    \centering
    \includegraphics[
      width=1.0\textwidth,
      height=1.0\textheight, 
      keepaspectratio
    ]{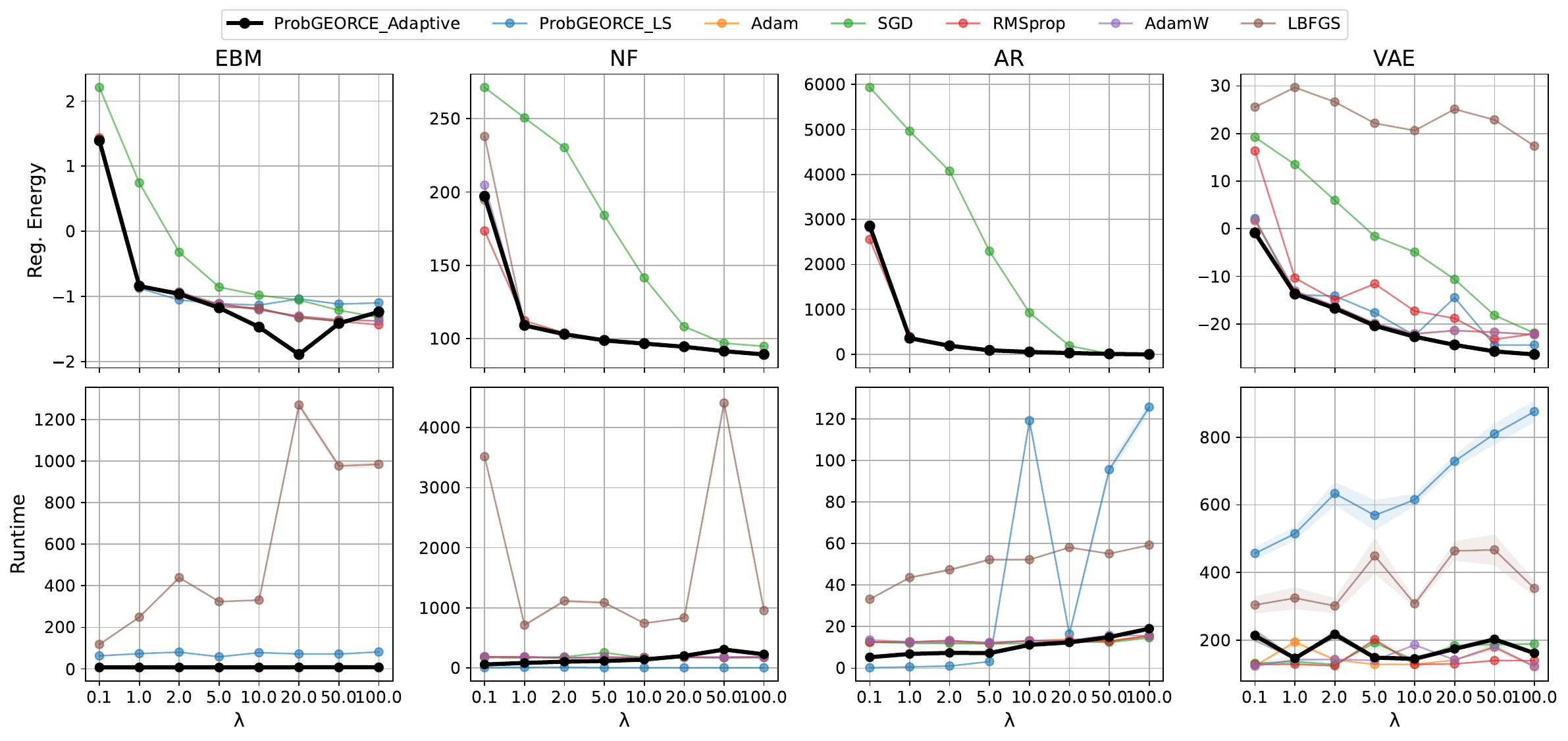}
    \caption{The regularized energy and runtime for different optimizers for the regularized energy in Eq.~\ref{eq:metric_definition} solving the \textsc{bvp} for the models in Fig.~\ref{fig:multirow_plot_2d_gen}.}
    \label{fig:gen2d_runtime}
    \vspace{-1.4em}
\end{figure*}
In general, we see that our method is faster for lower values of $\lambda$ and as $\lambda$ increases, the methods converge to each other. This is expected as when $\lambda$ increases, the metric-term becomes negligible, and the regularization term dominates the expression. In this case, the minimization problem is approximately the same as minimizing the $S$ function, where \textit{ProbGEORCE} does not exploit any specific structures of $S$ that can speed-up the computation.

We also see that the adaptive version is usually faster than line-search, especially if the regularized energy is expensive to evaluate as with the \textsc{vae}, which is expected as this avoids repeated evaluations of the regularized energy functional. In Table~\ref{tab:gen2d_runtimes}, we show the corresponding runtimes for Fig.~\ref{fig:gen2d_runtime}. Note that we terminate the methods if the gradient of the reguralized energy is less than $10^{-4}$. Given the runtimes, it seems that no algorithms terminate before hand, which is mostly likely for numerical reasons due to the neural network.
\begin{table*}[h!]
\caption{Runtime and regularized energy for different methods for different values of $\lambda$ for the methods in Fig.~\ref{fig:multirow_plot_2d_gen}. For ProbGEORCE (LS) for \textsc{nf}, the neural networks estimate of the likelihood returned nans during the iterations, and therefore we set $-$.}
\centering
\resizebox{\textwidth}{!}{%
\begin{tabular}{lcccccccc}
\toprule
  & \multicolumn{2}{c}{\bfseries\large EBM} & \multicolumn{2}{c}{\bfseries\large NF} & \multicolumn{2}{c}{\bfseries\large AR} & \multicolumn{2}{c}{\bfseries\large VAE} \\
\cmidrule(lr){2-3} \cmidrule(lr){4-5} \cmidrule(lr){6-7} \cmidrule(lr){8-9} 
Optimizer & Reg. Energy & Runtime & Reg. Energy & Runtime & Reg. Energy & Runtime & Reg. Energy & Runtime \\
\midrule
\multicolumn{9}{c}{\bfseries\large $\lambda = 1.0$} \\
\bottomrule
\addlinespace[0.3em]
ProbGEORCE (Adaptive) & 0.124 & 7.45 $\pm$ 0.01 & \textbf{0.087} & 81.03 $\pm$ 0.39 & \textbf{0.115} & 6.83 $\pm$ 0.01 & 0.043 & 145.86 $\pm$ 0.53 \\
ProbGEORCE (LS) & \textbf{0.124} & 73.62 $\pm$ 0.29 & \textbf{0.087} & \textbf{6.42 $\pm$ 0.01} & 0.115 & \textbf{0.58 $\pm$ 0.00} & \textbf{0.037} & 514.56 $\pm$ 10.30 \\
Adam & 0.124 & 6.60 $\pm$ 0.00 & 0.087 & 181.43 $\pm$ 3.04 & 0.115 & 12.65 $\pm$ 0.03 & 0.045 & 194.05 $\pm$ 5.60 \\
SGD & 0.206 & \textbf{5.70 $\pm$ 0.01} & 0.111 & 166.12 $\pm$ 2.02 & 0.175 & 11.99 $\pm$ 0.02 & 0.150 & 135.86 $\pm$ 9.14 \\
RMSprop & 0.146 & 6.02 $\pm$ 0.04 & 0.107 & 178.22 $\pm$ 3.90 & 0.134 & 12.58 $\pm$ 0.02 & 0.068 & \textbf{128.54 $\pm$ 0.20} \\
AdamW & 0.124 & 6.68 $\pm$ 0.03 & 0.087 & 182.80 $\pm$ 1.65 & 0.115 & 12.67 $\pm$ 0.01 & 0.045 & 141.78 $\pm$ 0.72 \\
LBFGS & 0.124 & 248.88 $\pm$ 2.07 & 0.087 & 711.39 $\pm$ 10.12 & 0.115 & 43.66 $\pm$ 0.23 & 0.214 & 324.14 $\pm$ 16.69 \\
\addlinespace[0.5em]
\midrule
\multicolumn{9}{c}{\bfseries\large $\lambda = 5.0$} \\
\bottomrule
\addlinespace[0.3em]
ProbGEORCE (Adaptive) & \textbf{-0.126} & 7.11 $\pm$ 0.24 & \textbf{0.174} & 114.12 $\pm$ 1.32 & \textbf{0.125} & 7.23 $\pm$ 0.01 & \textbf{-0.289} & 147.95 $\pm$ 0.42 \\
ProbGEORCE (LS) & -0.107 & 58.47 $\pm$ 0.74 & - & \textbf{0.66 $\pm$ 0.00} & 0.125 & \textbf{3.16 $\pm$ 0.01} & -0.234 & 568.85 $\pm$ 23.03 \\
Adam & -0.101 & 6.28 $\pm$ 0.03 & 0.174 & 172.02 $\pm$ 2.80 & 0.125 & 12.02 $\pm$ 0.02 & -0.279 & \textbf{127.81 $\pm$ 0.18} \\
SGD & -0.010 & \textbf{5.61 $\pm$ 0.04} & 0.255 & 252.45 $\pm$ 0.02 & 0.283 & 11.68 $\pm$ 0.04 & 0.112 & 191.93 $\pm$ 9.31 \\
RMSprop & -0.085 & 6.28 $\pm$ 0.05 & 0.194 & 173.34 $\pm$ 2.73 & 0.146 & 11.96 $\pm$ 0.02 & -0.060 & 201.13 $\pm$ 1.66 \\
AdamW & -0.100 & 6.52 $\pm$ 0.03 & 0.174 & 179.02 $\pm$ 0.36 & 0.125 & 12.38 $\pm$ 0.07 & -0.279 & 139.85 $\pm$ 0.68 \\
LBFGS & -0.119 & 323.73 $\pm$ 0.31 & 0.174 & $1.08 \times 10^{3}$ $\pm$ 11.59 & 0.125 & 52.26 $\pm$ 0.05 & 0.582 & 448.98 $\pm$ 27.14 \\
\addlinespace[0.5em]
\midrule
\multicolumn{9}{c}{\bfseries\large $\lambda = 10.0$} \\
\bottomrule
\addlinespace[0.3em]
ProbGEORCE (Adaptive) & \textbf{-0.577} & 7.25 $\pm$ 0.05 & \textbf{0.278} & 136.70 $\pm$ 0.31 & 0.130 & \textbf{11.18 $\pm$ 0.07} & -0.789 & 144.33 $\pm$ 0.54 \\
ProbGEORCE (LS) & -0.368 & 78.16 $\pm$ 0.33 & - & \textbf{0.63 $\pm$ 0.01} & \textbf{0.130} & 119.25 $\pm$ 0.33 & \textbf{-0.794} & 615.24 $\pm$ 9.14 \\
Adam & -0.420 & 6.46 $\pm$ 0.05 & 0.278 & 172.64 $\pm$ 0.26 & 0.130 & 13.19 $\pm$ 0.02 & -0.763 & 128.05 $\pm$ 0.13 \\
SGD & -0.305 & \textbf{5.77 $\pm$ 0.02} & 0.363 & 165.85 $\pm$ 2.17 & 0.255 & 11.94 $\pm$ 0.33 & -0.028 & 140.18 $\pm$ 0.71 \\
RMSprop & -0.397 & 6.23 $\pm$ 0.01 & 0.298 & 171.90 $\pm$ 0.60 & 0.150 & 13.13 $\pm$ 0.05 & -0.480 & \textbf{127.77 $\pm$ 0.16} \\
AdamW & -0.409 & 6.51 $\pm$ 0.04 & 0.278 & 170.25 $\pm$ 0.36 & 0.130 & 13.15 $\pm$ 0.03 & -0.763 & 185.49 $\pm$ 1.18 \\
LBFGS & -0.409 & 330.91 $\pm$ 0.33 & 0.278 & 741.10 $\pm$ 2.46 & \textbf{0.130} & 52.27 $\pm$ 0.06 & 1.015 & 307.28 $\pm$ 5.25 \\
\addlinespace[0.5em]
\midrule
\multicolumn{9}{c}{\bfseries\large $\lambda = 20.0$} \\
\bottomrule
\addlinespace[0.3em]
ProbGEORCE (Adaptive) & \textbf{-1.791} & 7.83 $\pm$ 0.02 & \textbf{0.481} & 198.07 $\pm$ 1.98 & \textbf{0.136} & \textbf{12.38 $\pm$ 0.01} & \textbf{-1.875} & 173.77 $\pm$ 0.72 \\
ProbGEORCE (LS) & -0.903 & 72.13 $\pm$ 0.34 & - & \textbf{0.64 $\pm$ 0.00} & \textbf{0.136} & 16.41 $\pm$ 0.06 & -0.821 & 728.79 $\pm$ 9.63 \\
Adam & -1.157 & 6.35 $\pm$ 0.02 & 0.481 & 172.26 $\pm$ 1.23 & 0.136 & 13.85 $\pm$ 0.05 & -1.633 & 140.63 $\pm$ 0.78 \\
SGD & -0.930 & \textbf{5.68 $\pm$ 0.01} & 0.528 & 173.78 $\pm$ 0.46 & 0.181 & 12.71 $\pm$ 0.05 & -0.777 & 183.66 $\pm$ 5.63 \\
RMSprop & -1.149 & 7.30 $\pm$ 0.04 & 0.501 & 179.19 $\pm$ 0.30 & 0.156 & 13.08 $\pm$ 0.74 & -1.255 & \textbf{129.16 $\pm$ 0.21} \\
AdamW & -1.154 & 6.39 $\pm$ 0.02 & 0.481 & 173.33 $\pm$ 0.18 & 0.136 & 13.34 $\pm$ 0.02 & -1.630 & 141.13 $\pm$ 0.52 \\
LBFGS & -1.181 & $1.27 \times 10^{3}$ $\pm$ 10.42 & 0.481 & 832.08 $\pm$ 6.46 & \textbf{0.136} & 58.16 $\pm$ 0.05 & 2.360 & 463.50 $\pm$ 14.78 \\
\addlinespace[0.5em]
\midrule
\multicolumn{9}{c}{\bfseries\large $\lambda = 50.0$} \\
\bottomrule
\addlinespace[0.3em]
ProbGEORCE (Adaptive) & \textbf{-3.596} & 7.80 $\pm$ 0.03 & \textbf{1.074} & 304.15 $\pm$ 0.50 & 0.144 & 14.94 $\pm$ 0.27 & \textbf{-5.309} & 202.25 $\pm$ 0.22 \\
ProbGEORCE (LS) & -2.611 & 71.58 $\pm$ 0.19 & - & \textbf{0.63 $\pm$ 0.01} & 0.144 & 95.65 $\pm$ 1.56 & -5.018 & 809.95 $\pm$ 15.94 \\
Adam & -3.490 & 6.39 $\pm$ 0.04 & 1.074 & 171.42 $\pm$ 0.36 & 0.144 & 13.04 $\pm$ 0.25 & -4.231 & 178.80 $\pm$ 4.90 \\
SGD & -3.134 & \textbf{5.69 $\pm$ 0.02} & 1.110 & 168.95 $\pm$ 2.57 & 0.149 & \textbf{12.41 $\pm$ 0.13} & -3.666 & 186.83 $\pm$ 6.66 \\
RMSprop & -3.513 & 6.29 $\pm$ 0.02 & 1.094 & 171.08 $\pm$ 2.67 & 0.165 & 12.99 $\pm$ 0.22 & -4.596 & \textbf{139.37 $\pm$ 1.44} \\
AdamW & -3.487 & 6.37 $\pm$ 0.02 & 1.074 & 178.96 $\pm$ 0.58 & 0.144 & 15.84 $\pm$ 0.13 & -4.230 & 179.64 $\pm$ 4.45 \\
LBFGS & -3.318 & 976.39 $\pm$ 7.88 & - & $4.41 \times 10^{3}$ $\pm$ 19.60 & \textbf{0.144} & 55.11 $\pm$ 0.04 & 5.274 & 466.87 $\pm$ 23.32 \\
\addlinespace[0.5em]
\midrule
\multicolumn{9}{c}{\bfseries\large $\lambda = 100.0$} \\
\bottomrule
\addlinespace[0.3em]
ProbGEORCE (Adaptive) & -5.980 & 7.78 $\pm$ 0.04 & 2.034 & 224.24 $\pm$ 2.25 & 0.146 & 19.00 $\pm$ 0.31 & \textbf{-11.225} & 161.09 $\pm$ 0.76 \\
ProbGEORCE (LS) & -5.787 & 81.56 $\pm$ 0.56 & - & \textbf{0.66 $\pm$ 0.00} & 0.148 & 125.78 $\pm$ 1.56 & -10.326 & 875.75 $\pm$ 16.40 \\
Adam & -7.405 & 6.26 $\pm$ 0.03 & 2.034 & 173.19 $\pm$ 1.58 & 0.146 & 15.60 $\pm$ 0.15 & -9.053 & \textbf{120.66 $\pm$ 0.22} \\
SGD & -7.192 & \textbf{5.64 $\pm$ 0.04} & 2.108 & 169.45 $\pm$ 2.83 & 0.151 & \textbf{14.60 $\pm$ 0.13} & -9.018 & 188.26 $\pm$ 5.28 \\
RMSprop & \textbf{-7.763} & 6.17 $\pm$ 0.05 & 2.052 & 179.11 $\pm$ 0.22 & 0.168 & 15.50 $\pm$ 0.02 & -8.921 & 138.88 $\pm$ 0.60 \\
AdamW & -7.400 & 6.32 $\pm$ 0.01 & 2.034 & 182.77 $\pm$ 0.33 & 0.146 & 15.64 $\pm$ 0.12 & -9.033 & 122.84 $\pm$ 0.23 \\
LBFGS & -4.619 & 984.25 $\pm$ 8.40 & \textbf{2.034} & 952.73 $\pm$ 12.46 & \textbf{0.146} & 59.30 $\pm$ 0.07 & 7.969 & 352.84 $\pm$ 12.44 \\
\addlinespace[0.5em]
\bottomrule
\end{tabular}}
\label{tab:gen2d_runtimes}
\end{table*}

\begin{figure}{h}
    \vspace{0.0em}
    \centering
    \includegraphics[width=0.6\textwidth]{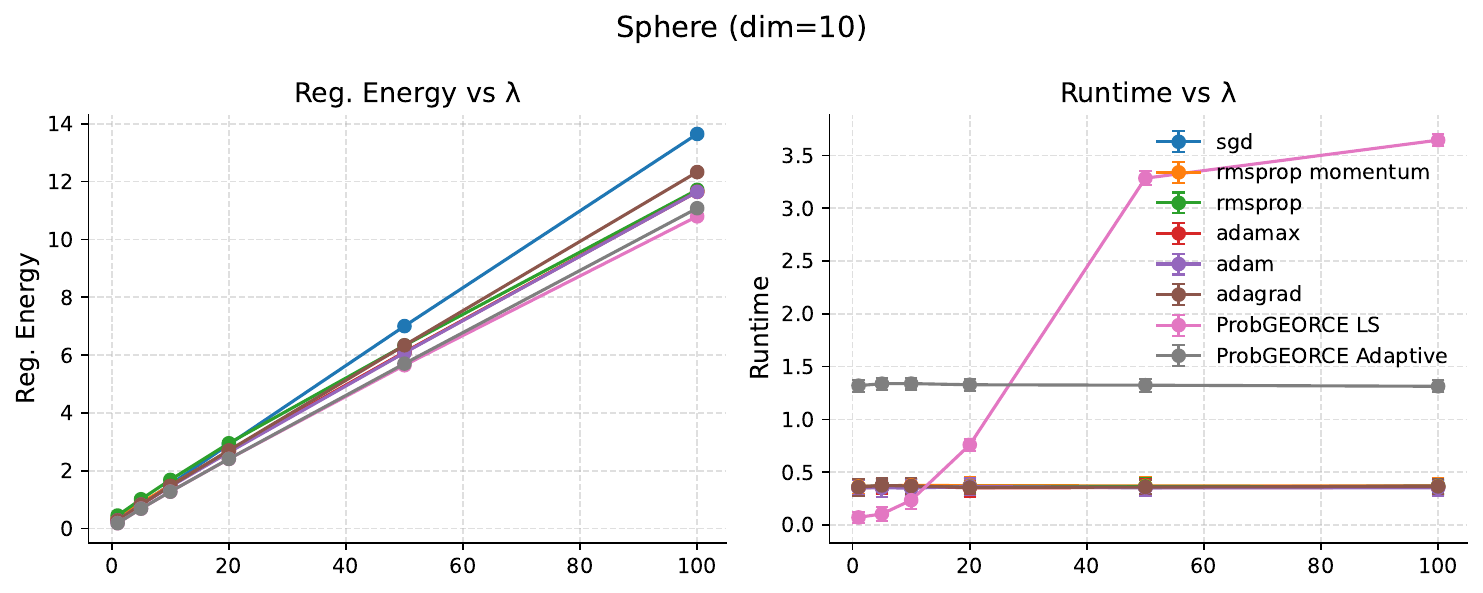}
    \caption{The regularized energy and runtimes for different optimizers minimizing Eq.~\ref{eq:metric_definition}. We consider the $n$-sphere with $n=10$ for different values of $\lambda$ and $S=-\log p$, where $p$ is the density of a mixture of three randomly weighted Gaussians with random means and the identity as covariance matrix. We report the regularized energy and runtime (mean runtime $\pm$ standard deviation runtime over five runs repeated five times).}
    \label{fig:runtime_lam}
    \vspace{0.0em}
\end{figure}
In Table~\ref{tab:runtime_manifolds}, we consider the regularized energy and runtimes for different optimizers minimizing Eq.~\ref{eq:metric_definition}. We consider different manifolds and dimensions with $\lambda=1.0$ and $S=-\log p$, where $p$ is the density of a mixture of three randomly weighted Gaussians with random means and the identity as covariance matrix. In this case, we see that \textit{ProbGEORCE} with line-search is significantly faster than alternative methods and compared to the adaptive update scheme. This is expected since the regularization function is inexpensive to evaluate. In Fig.~\ref{fig:runtime_lam}, we consider the $n$-sphere with $n=10$ for increasing values of $\lambda$. In this case, we see that ProbGEORCE with line-search and the adaptive update scheme computes the smallest regularized energy, but also that the runtime increases using line-search as $\lambda$ increases, while the adaptive scheme is more stable in runtime.

\begin{table*}[h!]
\caption{The regularized energy and runtimes for different optimizers minimizing Eq.~\ref{eq:metric_definition}. We consider different manifolds and dimensions with $\lambda=1.0$ and $S=-\log p$, where $p$ is the density of a mixture of three randomly weighted Gaussians with random means and the identity as covariance matrix. We report the regularized energy and runtime (mean runtime $\pm$ standard deviation runtime over five runs repeated five times).}
\centering
\resizebox{\textwidth}{!}{%
\begin{tabular}{lcccccccccccccccc}
\toprule
  & \multicolumn{2}{c}{\bfseries\large sgd} & \multicolumn{2}{c}{\bfseries\large rmsprop momentum} & \multicolumn{2}{c}{\bfseries\large rmsprop} & \multicolumn{2}{c}{\bfseries\large adamax} & \multicolumn{2}{c}{\bfseries\large adam} & \multicolumn{2}{c}{\bfseries\large adagrad} & \multicolumn{2}{c}{\bfseries\large ProbGEORCE LS} & \multicolumn{2}{c}{\bfseries\large ProbGEORCE Adaptive} \\
\cmidrule(lr){2-3} \cmidrule(lr){4-5} \cmidrule(lr){6-7} \cmidrule(lr){8-9} \cmidrule(lr){10-11} \cmidrule(lr){12-13} \cmidrule(lr){14-15} \cmidrule(lr){16-17}
Manifold & Reg. Energy & Runtime & Reg. Energy & Runtime & Reg. Energy & Runtime & Reg. Energy & Runtime & Reg. Energy & Runtime & Reg. Energy & Runtime & Reg. Energy & Runtime & Reg. Energy & Runtime \\
\midrule
Cauchy (dim=2) & 0.0853 & 0.344 $\pm$ 0.037 & 539.4545 & 0.350 $\pm$ 0.033 & 0.1059 & 0.358 $\pm$ 0.033 & 0.0839 & 0.333 $\pm$ 0.033 & 0.0843 & 0.344 $\pm$ 0.035 & 0.0839 & 0.340 $\pm$ 0.033 & \textbf{0.0689} & \textbf{0.037 $\pm$ 0.029} & 0.0689 & 0.614 $\pm$ 0.034 \\
\midrule
Ellipsoid (dim=2) & 0.0854 & 0.330 $\pm$ 0.035 & 0.1108 & 0.344 $\pm$ 0.036 & 0.1282 & 0.361 $\pm$ 0.045 & 0.0887 & 0.352 $\pm$ 0.033 & 0.0894 & 0.334 $\pm$ 0.033 & 0.0892 & 0.333 $\pm$ 0.041 & \textbf{0.0833} & \textbf{0.026 $\pm$ 0.027} & 0.0833 & 0.487 $\pm$ 0.026 \\
Ellipsoid (dim=3) & 0.0967 & 0.328 $\pm$ 0.036 & 0.1190 & 0.338 $\pm$ 0.030 & 0.1540 & 0.319 $\pm$ 0.036 & 0.1010 & 0.365 $\pm$ 0.027 & 0.1019 & 0.365 $\pm$ 0.027 & 0.1014 & 0.323 $\pm$ 0.036 & \textbf{0.0929} & \textbf{0.034 $\pm$ 0.028} & 0.0929 & 0.665 $\pm$ 0.031 \\
Ellipsoid (dim=5) & 0.1151 & 0.353 $\pm$ 0.030 & 0.1669 & 0.346 $\pm$ 0.027 & 0.1954 & 0.350 $\pm$ 0.039 & 0.1223 & 0.382 $\pm$ 0.033 & 0.1238 & 0.354 $\pm$ 0.036 & 0.1227 & 0.342 $\pm$ 0.033 & \textbf{0.0969} & \textbf{0.090 $\pm$ 0.031} & 0.0974 & 1.125 $\pm$ 0.032 \\
Ellipsoid (dim=10) & 0.1441 & 0.369 $\pm$ 0.036 & 0.2233 & 0.381 $\pm$ 0.032 & 0.2655 & 0.371 $\pm$ 0.036 & 0.1583 & 0.374 $\pm$ 0.039 & 0.1618 & 0.376 $\pm$ 0.036 & 0.1588 & 0.382 $\pm$ 0.034 & \textbf{0.1038} & \textbf{0.147 $\pm$ 0.034} & 0.1054 & 1.325 $\pm$ 0.030 \\
Ellipsoid (dim=20) & 0.1682 & 0.463 $\pm$ 0.030 & 0.3182 & 0.460 $\pm$ 0.028 & 0.3240 & 0.467 $\pm$ 0.035 & 0.1870 & 0.427 $\pm$ 0.032 & 0.1951 & 0.457 $\pm$ 0.038 & 0.1889 & 0.459 $\pm$ 0.028 & \textbf{0.1024} & \textbf{0.092 $\pm$ 0.031} & 0.1028 & 1.655 $\pm$ 0.029 \\
Ellipsoid (dim=50) & 0.1685 & 0.767 $\pm$ 0.030 & 0.4460 & 0.757 $\pm$ 0.031 & 0.3461 & 0.699 $\pm$ 0.029 & 0.1797 & 0.707 $\pm$ 0.033 & 0.1922 & 0.714 $\pm$ 0.031 & 0.1842 & 0.756 $\pm$ 0.032 & \textbf{0.0903} & \textbf{0.160 $\pm$ 0.035} & 0.0905 & 2.463 $\pm$ 0.032 \\
Ellipsoid (dim=100) & 0.1382 & 1.597 $\pm$ 0.030 & 0.4146 & 1.615 $\pm$ 0.031 & 0.3022 & 1.614 $\pm$ 0.031 & 0.1334 & 1.605 $\pm$ 0.031 & 0.1378 & 1.622 $\pm$ 0.030 & 0.1351 & 1.623 $\pm$ 0.033 & \textbf{0.0664} & \textbf{0.590 $\pm$ 0.031} & 0.0669 & 2.519 $\pm$ 0.030 \\
\midrule
Frechet (dim=2) & 0.0291 & 0.349 $\pm$ 0.036 & 2421.0039 & 0.327 $\pm$ 0.029 & 0.0917 & 0.327 $\pm$ 0.039 & 0.0286 & 0.328 $\pm$ 0.030 & 0.0291 & 0.363 $\pm$ 0.027 & 0.0286 & 0.328 $\pm$ 0.030 & \textbf{0.0283} & \textbf{0.021 $\pm$ 0.027} & 0.0283 & 0.363 $\pm$ 0.031 \\
\midrule
Gaussian (dim=2) & - & - & 1222.1246 & 0.341 $\pm$ 0.034 & 0.2471 & 0.361 $\pm$ 0.031 & 0.1752 & 0.335 $\pm$ 0.028 & 0.1758 & 0.345 $\pm$ 0.032 & 0.1754 & 0.356 $\pm$ 0.030 & \textbf{0.1541} & \textbf{0.029 $\pm$ 0.028} & 0.1541 & 0.700 $\pm$ 0.029 \\
\midrule
Pareto (dim=2) & 0.0150 & 0.342 $\pm$ 0.031 & 2151.8325 & 0.353 $\pm$ 0.033 & 0.0465 & 0.356 $\pm$ 0.036 & 0.0146 & 0.331 $\pm$ 0.031 & 0.0149 & 0.359 $\pm$ 0.030 & 0.0146 & 0.331 $\pm$ 0.029 & \textbf{0.0145} & \textbf{0.021 $\pm$ 0.027} & 0.0145 & 0.338 $\pm$ 0.030 \\
\midrule
SPDN (dim=2) & 0.0275 & 0.485 $\pm$ 0.029 & 0.0659 & 0.450 $\pm$ 0.032 & 0.1574 & 0.456 $\pm$ 0.030 & 0.0265 & 0.442 $\pm$ 0.031 & 0.0259 & 0.464 $\pm$ 0.033 & 0.0265 & 0.448 $\pm$ 0.030 & \textbf{0.0251} & \textbf{0.042 $\pm$ 0.029} & 0.0251 & 0.667 $\pm$ 0.030 \\
SPDN (dim=3) & 0.1197 & 0.504 $\pm$ 0.030 & 0.2602 & 0.494 $\pm$ 0.030 & 0.5051 & 0.514 $\pm$ 0.029 & 0.1185 & 0.500 $\pm$ 0.030 & 0.1196 & 0.522 $\pm$ 0.029 & 0.1184 & 0.517 $\pm$ 0.030 & \textbf{0.1126} & \textbf{0.072 $\pm$ 0.031} & 0.1126 & 0.899 $\pm$ 0.031 \\
\midrule
Sphere (dim=2) & 0.1208 & 0.364 $\pm$ 0.036 & 0.1433 & 0.324 $\pm$ 0.035 & 0.1739 & 0.320 $\pm$ 0.036 & 0.1224 & 0.358 $\pm$ 0.035 & 0.1228 & 0.331 $\pm$ 0.037 & 0.1226 & 0.330 $\pm$ 0.040 & \textbf{0.1160} & \textbf{0.026 $\pm$ 0.029} & 0.1160 & 0.701 $\pm$ 0.031 \\
Sphere (dim=3) & 0.1546 & 0.314 $\pm$ 0.041 & 0.1897 & 0.313 $\pm$ 0.035 & 0.2310 & 0.335 $\pm$ 0.034 & 0.1584 & 0.339 $\pm$ 0.030 & 0.1591 & 0.327 $\pm$ 0.032 & 0.1587 & 0.341 $\pm$ 0.038 & \textbf{0.1462} & \textbf{0.042 $\pm$ 0.030} & 0.1462 & 1.031 $\pm$ 0.029 \\
Sphere (dim=5) & 0.2048 & 0.340 $\pm$ 0.040 & 0.2585 & 0.319 $\pm$ 0.038 & 0.3151 & 0.343 $\pm$ 0.036 & 0.2075 & 0.377 $\pm$ 0.031 & 0.2080 & 0.352 $\pm$ 0.030 & 0.2076 & 0.319 $\pm$ 0.048 & \textbf{0.1653} & \textbf{0.050 $\pm$ 0.026} & 0.1655 & 1.049 $\pm$ 0.032 \\
Sphere (dim=10) & 0.2749 & 0.362 $\pm$ 0.038 & 0.3606 & 0.364 $\pm$ 0.038 & 0.4526 & 0.355 $\pm$ 0.036 & 0.2820 & 0.353 $\pm$ 0.042 & 0.2837 & 0.353 $\pm$ 0.036 & 0.2822 & 0.358 $\pm$ 0.041 & \textbf{0.1873} & \textbf{0.071 $\pm$ 0.028} & 0.1879 & 1.319 $\pm$ 0.029 \\
Sphere (dim=20) & 0.3242 & 0.423 $\pm$ 0.029 & 2.8070 & 0.462 $\pm$ 0.030 & 0.5753 & 0.421 $\pm$ 0.030 & 0.3325 & 0.440 $\pm$ 0.029 & 0.3339 & 0.435 $\pm$ 0.030 & 0.3310 & 0.461 $\pm$ 0.031 & \textbf{0.1893} & \textbf{0.122 $\pm$ 0.037} & 0.1894 & 1.642 $\pm$ 0.031 \\
Sphere (dim=50) & 0.3152 & 0.700 $\pm$ 0.033 & 1.3125 & 0.687 $\pm$ 0.031 & 0.6369 & 0.706 $\pm$ 0.031 & 0.3247 & 0.691 $\pm$ 0.031 & 0.3292 & 0.712 $\pm$ 0.029 & 0.3221 & 0.685 $\pm$ 0.031 & \textbf{0.1654} & \textbf{0.243 $\pm$ 0.033} & 0.1654 & 3.050 $\pm$ 0.029 \\
Sphere (dim=100) & 0.2504 & 1.475 $\pm$ 0.032 & 2.1177 & 1.653 $\pm$ 0.028 & 0.5268 & 1.530 $\pm$ 0.030 & 0.2441 & 1.592 $\pm$ 0.032 & 0.2409 & 1.639 $\pm$ 0.029 & 0.2386 & 1.612 $\pm$ 0.030 & \textbf{0.1177} & \textbf{0.401 $\pm$ 0.030} & 0.1179 & 4.443 $\pm$ 0.030 \\
\midrule
T2 (dim=2) & 2.7905 & 0.349 $\pm$ 0.036 & 2.7950 & 0.348 $\pm$ 0.038 & 2.8467 & 0.343 $\pm$ 0.038 & 2.7551 & 0.361 $\pm$ 0.027 & 2.7582 & 0.347 $\pm$ 0.035 & 2.6305 & 0.353 $\pm$ 0.027 & \textbf{2.4578} & \textbf{0.078 $\pm$ 0.029} & 2.4578 & 0.925 $\pm$ 0.029 \\
\bottomrule
\end{tabular}}
\label{tab:runtime_manifolds}
\end{table*}


\end{document}